\documentclass{article}
 
\usepackage{graphicx}
\usepackage{epstopdf}
\usepackage{url}
\usepackage{amsmath,amssymb,amsthm}
\usepackage{amsfonts}
\newcommand{\Bem}[1]{}
\newcommand{\deleted}[1]{}
\newcommand{\inserted}[1]{{ #1 }}%{{\bf  #1 }}
\newcommand{\mysvli}[1]{{\bf \it #1 }}
\newcommand{\changed}[2]{\deleted{#1}\inserted{#2}}
\newtheorem{ax}{Property}%[section]
\newtheorem{definition}{Definition}%[section]
\newtheorem{theorem}{Theorem}%[section]
\newcommand{\refax}[1]{Property \ref{#1}\/}

  \newcommand{\correspondingauthor}[1]{}
  \newcommand{\Runauthors}[1]{}
  \newcommand{\address}[1]{ \\  #1 }
\correspondingauthor{Mieczys{\l}aw K{\L}OPOTEK}

% \address[ad2]{}

%\authors{First name LAST NAME \!$^{a}$, Second AUTHOR \!$^{a, b,}$\thanks{Corresponding author}\,\,,\\ Third AUTHOR \!$^{b}$}
%\addresses{$^{a}$\! Institute of xxx xxx xxx xxx\\ University of xxx xxx, Address xxx xxx xx xxx xxx\\ e-mail: \url{xxx xx xxx}\\\medskip $^{b}$\! Second affiliation}

%\Runauthors{F. AUTHOR \it{et al.}}
%\Runauthors{J. DOE}
\Runauthors{R. K{\L}OPOTEK and M. K{\L}OPOTEK}

%Please do not remove these
%\Year{}
%\Vol{}
%\No{}
%\Startpage{}
%\Endpage{}
%\DOI{}
%\Received{10 May 2006}
%\Revised{24 October 2005}
%\Rerevised{15 December 2006}

%\bibliographystyle{dcu}
\bibliographystyle{plain}

\newcommand{\figaddr}[1]{#1}

\sloppy

\begin{document}

\title{On the Discrepancy  Between Kleinberg's Clustering Axioms and $k$-Means Clustering Algorithm Behavior}
\author{Robert K{\L}OPOTEK   \address{% 
Faculty of Mathematics and Natural Sciences,\\
Stefan Cardinal  Wyszy\'{n}ski University in Warsaw
\\ul. Wóycickiego 1/3 building 21, 01-938 Warszawa, Poland
\\ e-mail: \url{robert@klopotek.com.pl}} \and  
 Mieczys{\l}aw K{\L}OPOTEK  \address{%
  Institute of Computer Science,  Polish Academy of Sciences
\\   ul. Jana Kazimierza 5, 01-248 Warszawa, 
Poland
 \\ e-mail: \url{klopotek@ipipan.waw.pl}}
}{}{}

\maketitle

\begin{abstract}
This paper investigates the  validity of Kleinberg's axioms for clustering functions with respect to the quite popular clustering algorithm called $k$-means.%, in its "natural" application domain that is in the Euclidean space. 
We suggest that the reason why this algorithm does not fit  Kleinberg's axiomatic system stems from missing match between informal intuitions and formal formulations of the axioms.
\inserted{While Kleinberg's axioms have been discussed heavily in the past, we concentrate here on the case predominantly relevant for $k$-means algorithm, that is behavior embedded in Euclidean space.}
We point at some contradictions and counter intuitiveness aspects of this axiomatic set within $\mathbb{R}^m$ that were evidently  not discussed so far. 
Our results suggest that apparently without defining clearly what kind of clusters we expect we will not be able to construct a valid axiomatic system. In particular we look at the shape and the gaps between the clusters.   
%We show that the consistency and scale-invariance axioms, even ignoring %richness axiom, yield a contradiction to the intuitions behind clustering. 
%Furthermore we show that consistency alone leads to contradictions. 
%We demonstrate also that the richness axiom is counterintuitive by itself. 
Finally we demonstrate that there exist several ways to reconcile the formulation of the axioms with their intended meaning and that under this reformulation  
the axioms stop to be contradictory 
and the real-world $k$-means algorithm    conforms to this axiomatic system.
% its contradictions with the learnability theory. 
%We propose a way of modifying definitions underlying Kleinberg axioms so that %there is no contradiction in his axiomatic system.
%Finally we develop a new set of axioms for clustering, 
%that is more practice relevant. 
\end{abstract}
\begin{keywords}
clusterability, learnability, Kleinberg axioms, $k$-means
\end{keywords}

\section{Introduction}
 
One of important application areas of machine learning is the so-called \emph{cluster analysis} or \emph{clustering}, referred to also as \emph{unsupervised learning} or \emph{learning without a teacher}. It seeks to split a set of items into subsets (usually disjoint, though not necessarily, possibly with the subsets forming a hierarchy) called 
clusters or groups that should be "\emph{similar}" within the clusters and "\emph{dissimilar}" between them. Additional criteria like group balancing, group size limits from below and above etc. may be also taken into account. Subsequently let us restrict somehow the meaning of these terms. By \emph{partition} we will understand 
the output of the process of \emph{cluster analysis}. So the partition would be an object - a set of objects called clusters that are sets of original items (called elements). 
   
As the diversity of clustering methods grows, there exists a strong pressure for finding some formal framework to get a systematic overview of the expected properties of the partitions obtained. 

A number of axiomatic frameworks have been devised for methods of clustering, the most cited probably 
the Kleinberg's system \cite{Kleinberg:2002}\footnote{Google Scholar lists about 400 citations.}.
Kleinberg defines \cite{Kleinberg:2002}  clustering function as
\begin{definition}\label{def:KleinbergClusteringFunction}
Clustering function is "a function $f$ 
that takes a distance function $d$ on [set] $S$ [of size $n\ge 2$] and
returns a partition $\Gamma$  of $S$. The sets in $\Gamma$  will 
be called its  \emph{clusters}." 
We are interested only in such partitions 
$\Gamma$  of $S$ that 
$\cup_{C\in \Gamma} C=S$, $C_i\ne \emptyset$  and 
for any two distinct $C_i,C_j\in \Gamma$ $C_i\cap C_j=\emptyset$. 
\end{definition}

Additionally, he defines the distance as 

\begin{definition}\label{def:KleinbergDistance}
"with
the
set
$S=\{1,2,\dots,n\}$ 
[...] we define 
a
distance function
to be any function
$d
:
S
\times 
S
\rightarrow
\mathbb{R}$
such that
for
distinct
$i,j\in S $ 
 we have
$di,j)>0, d(i,j)=d(j,i)$ and $d(i,i)=0$. 
  One
can
optionally
restrict
attention
to distance
functions
that
are
metrics
by imposing
the
triangle
inequality: $d(i,k)\le d(i,j)+d(j,k)$, 
   for all $i,j,k\in S$.  
  We will not require the triangle inequality [...]%in the discussion here
,
but the results to follow 
both negative and positive still
hold
if one
does require"
\end{definition}

Jon Kleinberg  \cite{Kleinberg:2002}   claims that a good partition may only be 
a result of a reasonable method of clustering and he formulated axioms, for 
distance-based cluster analysis, that need to be met by the clustering method 
itself. He postulated that some quite "natural" axioms need to be met, when 
we manipulate the distances between objects. 
As, however, the axioms proved to be not applicable to all clustering algorithms, we will rather speak about properties that 
Kleinberg expects of clustering functions, following   e.g. Ackerman et al. \cite{Ackerman:2010NIPS}. 
These are: 
\begin{ax} \label{ax:richness}
The method should allow to obtain any partition of the objects (so-called \emph{richness} property)%
\footnote{
"let $Range(f)$ denote the set of all partitions $\Gamma$
 such that $f(d) = \Gamma$ for some distance
function $d$.
  $Range(f)$ is equal to the set of all partitions of $S$."
\cite{Kleinberg:2002}}, 
\end{ax} 
\begin{ax}  \label{ax:scaleinvariance}
   The method should deliver partitions invariant with respect to distance 
scale (so-called \emph{scale-invariance}  property)%
\footnote{
"For any distance function $d$ and any $\alpha  > 0$,
we have $f(d) = f(\alpha \cdot d)$."
\cite{Kleinberg:2002}},
\end{ax} 
\begin{ax}  \label{ax:consistency}
The method should deliver the same partition if we move  elements within a cluster closer to one another 
% (or closer to  cluster centers to which they are assigned) 
and elements from different clusters further away 
 (so-called \emph{consistency} property)%
\footnote{
"Let $\Gamma$ be a partition of $S$, and $d$ and
$d'$  two distance functions on $S$. We say that $d'$ 
 is a $\Gamma$-transformation of $d$ if (a) for
all $i,  j \in  S$ belonging to the same cluster of 
$\Gamma$, we have $d'(i, j) \le d(i, j)$   and (b) for
all $i,  j \in  S$ belonging to different clusters of $\Gamma$,
 we have $d'(i, j) \ge d(i, j)$.
  Let $d$ and $d'$ be two distance functions. If $f(d) =\Gamma$,
and $d'$ is a $\Gamma$-transformation of $d$, then $f(d') = \Gamma$" 
\cite{Kleinberg:2002}.
This should reflect the property of reducing distance within a cluster and enlarging that between the clusters. 
}. 
\end{ax} 
%Though the axioms may seem to be reasonable,

Note that invariance and consistency properties  assume a transformation on the clusters. With respect to this transformations we will speak about invariance transform(ation) and consistency transform(ation). 

Subsequently, while referring to Kleinberg's axiomatic systems, we will use the term "axioms", but keeping in mind, that researchers treat them rather as properties that some algorithms have and other don't. 

 Kleinberg demonstrated that 
the above three "axioms" (properties) cannot be met all at once.
% (but only pair-wise). 
So Kleinberg's work points 
at an important issue that we shall first of all revise our expectations 
towards the obtained partition, as the seemingly obvious axiom set is 
apparently not sound.  
In particular he stated the   Impossibility Theorem.

\begin{theorem}{\cite[Theorem 2.1]{Kleinberg:2002}} \label{thm:KleinbergImpossibility}
For
each
$n\ge 2$,
there is no   clustering  function 
 $f$
that satisfies
Scale-Invariance,
Richness,
and
Consistency.
\end{theorem}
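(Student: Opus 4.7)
The plan is to prove the contrapositive--style statement: if $f$ satisfies Scale-Invariance and Consistency, then the collection $Range(f)$ cannot contain two partitions where one properly refines the other; combined with Richness (which forces $Range(f)$ to contain, for instance, both the singleton partition $\{\{1\},\dots,\{n\}\}$ and the trivial partition $\{S\}$, which are in a refinement relation), this yields the impossibility. So the whole argument reduces to an ``antichain lemma'' about $Range(f)$ under the refinement partial order.

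To prove the antichain lemma, suppose toward contradiction that $\Gamma_1, \Gamma_2 \in Range(f)$ are distinct and $\Gamma_1$ is a refinement of $\Gamma_2$. Pick distance functions $d_1, d_2$ with $f(d_1)=\Gamma_1$ and $f(d_2)=\Gamma_2$. The key construction is a single auxiliary distance function $d^\star$ taking only three ``levels'' of values: a small value $\epsilon$ between points sharing a $\Gamma_1$-cluster, a moderate value $r$ between points sharing a $\Gamma_2$-cluster but not a $\Gamma_1$-cluster, and a large value $R$ between points in different $\Gamma_2$-clusters. The parameters $\epsilon \ll r \ll R$ are chosen so that (i) $d^\star$ is a $\Gamma_1$-transformation of $d_1$: $d^\star \le d_1$ within $\Gamma_1$-clusters (achievable by taking $\epsilon$ smaller than every within-$\Gamma_1$-cluster distance in $d_1$) and $d^\star \ge d_1$ between distinct $\Gamma_1$-clusters (achievable by taking $r, R$ larger than every between-$\Gamma_1$-cluster distance in $d_1$).

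Simultaneously, by Scale-Invariance we may replace $d_2$ by $\alpha d_2$ for any $\alpha>0$ without changing its image under $f$. Pick $\alpha$ large enough that $\alpha d_2(i,j) \ge r$ whenever $i,j$ lie in the same $\Gamma_2$-cluster (so that $d^\star \le \alpha d_2$ within $\Gamma_2$-clusters), and then pick $R$ large enough (after $\alpha$ is fixed) that $R \ge \alpha d_2(i,j)$ for $i,j$ in different $\Gamma_2$-clusters (so that $d^\star \ge \alpha d_2$ between $\Gamma_2$-clusters). This renders $d^\star$ a $\Gamma_2$-transformation of $\alpha d_2$. Applying Consistency twice then forces $f(d^\star) = \Gamma_1$ and $f(d^\star) = f(\alpha d_2) = f(d_2) = \Gamma_2$, contradicting $\Gamma_1 \ne \Gamma_2$.

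The main obstacle is the simultaneous choice of $\epsilon, r, R, \alpha$: the construction of $d^\star$ must respect the inequalities in the Consistency axiom for \emph{both} partitions at once, and this is where the refinement hypothesis is essential---it guarantees that every pair that is ``within-cluster'' for $\Gamma_1$ is also ``within-cluster'' for $\Gamma_2$, so the three levels $\epsilon < r < R$ are mutually compatible rather than contradictory. Once that lemma is established, closing the proof is immediate: Richness supplies any two comparable partitions (e.g.\ a singleton cluster split off from the trivial partition on $S$ with $n\ge 2$), and the lemma supplies the contradiction.
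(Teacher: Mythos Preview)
Your argument is correct and is precisely Kleinberg's original antichain proof: show that Scale-Invariance plus Consistency force $Range(f)$ to be an antichain in the refinement order, then observe that Richness violates this. The parameter-ordering you sketch ($\epsilon$ first, then $r$, then $\alpha$, then $R$) is exactly the right way to make the three-level distance $d^\star$ simultaneously a $\Gamma_1$-transformation of $d_1$ and a $\Gamma_2$-transformation of $\alpha d_2$, and the refinement hypothesis is used in just the place you identify.

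As for comparison with the paper: the paper does not supply its own proof of this theorem. It simply attributes the result to Kleinberg (``Kleinberg himself proved this theorem in the above-mentioned paper'') and elsewhere invokes ``Kleinberg's anti-chain theorem'' as a known ingredient. So your write-up is not merely consistent with the paper's approach---it \emph{is} the approach the paper defers to.
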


Kleinberg himself proved this theorem in the above-mentioned paper. 
Another proof can be found in 
a paper by Ambroszkiewicz and Koronacki 
\cite{Ambroszkiewicz:2010}, along with some discussion of the Kleinberg's concepts. 
Ackerman et al. \cite{Ackerman:2010NIPS} prove a bit more general impossibility theorem (engaging so called inner-consistency and outer-consistency).

Beside providing a proof that his axioms are contradictory, Kleinberg showed that the axioms can be met pairwise. 
He uses for purpose of this demonstration  
versions of the well-known statistical  single-linkage procedure. 
The versions differ by the stopping condition:  
\begin{itemize}  
\item  
 $k$-cluster stopping condition (which stops adding edges when the sub-graph first 
consists of $k$ connected components)   
-  not "rich",
\item  
 distance-$r$ stopping condition (which adds edges of weight at most $r$ only) -  not scale-invariant,  
\item  
 scale-stopping condition (which adds edges of weight being at most some percentage of the largest distance between nodes)  
- not consistent%
\footnote{ 
Notice that, as demonstrated by Kleinberg in his paper,   
also  
 $k$-median   and  
$k$-means clustering do not  
have the consistency property.  
}. 
\end{itemize}
Note, however, that   Ben-David and Ackerman \cite{Ben-David:2009} drew attention by an illustrative example (their Figure 2), that consistency is a problematic property by itself as it may give rise to new clusters at micro or macro-level. 

Let us draw attention to the fact that by introduction of his definition of clustering function, Kleinberg introduces implicitly  two additional axioms onto the clustering function: 
\begin{ax}\label{ax:nonFalsifiability}
A clustering function always returns a clustering
(\emph{non-refutability}).
\end{ax}
\begin{ax}\label{ax:nonEmbedding}
A clustering function works even if the distances cannot be embedded in Euclidean space 
(\emph{permission of non-embeddability}).
\end{ax}

The well-known $k$-means clustering algorithm 
seeks to minimize the function\footnote{
The considerations would apply also to 
 kernel $k$-means algorithm 
using the quality function 
$$Q(\Gamma)=\sum_{i=1}^m\sum_{j=1}^k u_{ij}\|\Phi(\textbf{x}_i - \boldsymbol{\mu}^\Phi_j\|^2$$
\noindent 
where $\Phi$ is a non-linear mapping from the original space to the so-called feature space.
} 
\begin{equation} \label{eq:Q::kmeans}
Q(\Gamma)=\sum_{i=1}^m\sum_{j=1}^k u_{ij}\|\textbf{x}_i - \boldsymbol{\mu}_j\|^2
=\sum_{j=1}^k \frac{1}{n_j} \sum_{\mathbf{x}_i, \mathbf{x}_l \in C_j} \|\mathbf{x}_i - \mathbf{x}_l\|^2 
\end{equation} 
for a dataset $\mathbf{X}$
under some partition $\Gamma$ into the predefined number $k$ of clusters, 
where  $u_{ij}$ is an indicator of the membership of data point $\textbf{x}_i$ in the cluster $C_j$ having the center at $\boldsymbol{\mu}_j$. 

We will call   "$k$-means-ideal" such an algorithm that finds a $\Gamma_{opt}$ that attains the minimum of function $Q(\Gamma)$. 
It is known that it is a hard task.
\footnote{There exists a whole stream of research papers that attempt to approximate 
$k$-means-ideal within a reasonable error bound via cleverly initiated $k$-means type algorithms, e.g. $k$-means++, like 
\cite{Song:2010}, but it has to be stated that at the current point 
these algorithms are rather of theoretical value. 
}
 Hence in practice an algorithm is used with the following structure:
\begin{enumerate}
\item Initialize $k$ cluster centers $\boldsymbol\mu_1,\dots,\boldsymbol\mu_k$.
\item Assign each data element $\mathbf{x}_i $ to the cluster $C_j$ identified by the closest $\boldsymbol\mu_j$.
\item Update $\boldsymbol\mu_j$ of each cluster $C_j$ as the gravity center of the data elements in $C_j$. 
\item Repeat steps 2 and 3 until reaching a stop criterion (usually no change of cluster membership). 
\end{enumerate}

If step 1 is performed as random uniform sampling (without replacement), then we will speak about $k$-means-random algorithm. 
If step 1 is performed according to $k$-means++ heuristics
proposed by Arthur and Vassilvitskii 
 \cite{CLU:AV07},  then we will speak about $k$-means++ algorithm. 
Note that both attain a local minimum at worst. 
We will also touch the "incremental" $k$-means discussed by Ackerman and Dasgupta  \cite{Ackerman:2014nips}. 
This $k$-means version does not guarantee to reach a local minimum and has purely theoretical virtues. 

The verification of Kleinberg's axioms for $k$-means is a bit difficult because even for $k$-means-ideal we cannot guarantee that there exists a single (global) minimum of the $Q$ function. But if we talk instead of the set of all possible minimizing $\Gamma$s, then 
it is easily seen that it is scale-invariant, but 
one sees immediately that it is not rich (only partitions with $k$ clusters are considered).
It has also been 
demonstrated by Kleinberg that it is not consistent. 

With $k$-means-random and $k$-means++ it is even worse, as $Q$ hits usually a local minimum there. So we can talk about a random variable assuming particular $\Gamma$ with some probability. 
Under this assumption, again it is easily seen that both are scale-invariant, but  one sees immediately that none is rich (only partitions with $k$ clusters are considered).
As $k$-means-ideal is not consistent, so neither of the realistic variants is so.  

Hence the widely used algorithm violates in practice two of three Kleinberg's axioms. 
 so that it cannot be considered to be a "clustering function". 
We perceive this to be at least counterintuitive.
\inserted{
Ben-David and Ackerman in \cite{Ben-David:2009} in section 4.2., 
raised also similar concern from the perspective   of what an axiomatic system should accomplish. 
They state that one would expect, for the axiomatised set of objects, a kind of soundness and completeness.
By soundness they mean  that most useful clustering algorithms would fit the axioms.
The completeness expresses that apparent non-clustering algorithms would fail on at least one axiom. 
While Kleinberg's axioms explicitly address the distance-based clustering algorithms (and not e.g. density based ones), they fall apparently short of reaching this goal. 
In this paper we demonstrate that even for a narrower set of algorithms,
ones over data embedded in Euclidean space, the axioms fail. 
}%inserted{

\deleted{ 
and in this paper we demonstrate that Kleinberg's axioms are not that obvious as one could expect at first glance. }
\inserted{

There exist a number of open questions on why it is so. 
Recall that in \cite{vanLaarhoven:2014} it has been observed 
by van Laarhoven and Marchiori 
that Kleinberg's proof of Impossibility Theorem stops to be valid 
in case of graph clustering. 
This raises immediately the question of its validity in $\mathbb{R}^m$ Euclidean space. Note that Kleinberg did not bother about embedding the distance in such a space. 
So one may ask whether or not $k$-means does not fit Kleinberg's axioms 
because this is a peculiar property of $k$-means or because 
any algorithm embedded in Euclidean space would fail to fit. 

Paying a special attention to $k$-means algorithm does not constitute a too restrictive limitation.
$k$-means is applied in many domains, not only in its natural domains of data embedded in $\mathbb{R}^m$, where clusters may be enclosed into Voronoi regions, 
but also to non-linearly separable clusters (via kernel functions \cite{Handhayania:2015}), to manifolds \cite{Wei:2010}, 
in spectral clustering \cite{Dhillon:2004} and in community detection in social networks \cite{Liu:2010}. 
It has been demonstrated by Dhillon et al. \cite{Dhillon:2004} that $k$-means is equivalent in some sense to normalized cut method of graph clustering,
which in turn can be viewed as equivalent to balanced Newman's modularity, used in community detection, as shown by Bolla \cite{Bolla:2011}. 
Crisp and fuzzy versions are used. 

Among others, the equivalence results on $k$-means and graph clustering in \cite{Dhillon:2004} and the impossibility theorem challenge for graphs in \cite{vanLaarhoven:2014}
encourage to investigate Kleinberg's axioms in the context of Euclidean space. 

}%inserted{

  Therefore we made an effort to identify and overcome at least some reasons for the difficulties connected with axiomatic understanding of  research area of cluster analysis and hope that this may be a guidance for further generalizations to encompass if not all then at least a considerable part of the real-life algorithms. 
This paper investigates why the $k$-means algorithm violates the  Kleinberg's axioms for
clustering functions. We claim that the reason is a mismatch between informal intuitions and formal formulations of these axioms. We claim also that there is a way to reconcile $k$-means with Kleinberg's consistency requirement via introduction of centric consistency and motion consistency which are neither a subset nor superset of Kleinberg's consistency, but rather a $k$-means clustering model specific adaptation of the general idea of shrinking the cluster or moving cluster away. 

\Bem{

 Further we provide example showing  that
 Furthermore we show that consistency alone leads to contradictions.

We demonstrate that Kleinberg's 
implicit non-refutability axiom   contradicts the motion consistency.

Finally to substantiate our claim  that there exists a way to reconcile the formulation of the
axioms with their intended meaning and that under this reformulation
the axioms stop to be contradictory and we can  reconcile $k$-means with Kleinberg's consistency requirement we first introduce the concept of centric consistency which is neither a subset nor superset of Kleinberg's consistency, but rather a $k$-means clustering model specific  Then we provide an example of a clustering function that fits the axioms of near-richness, scale-invariance and \inserted{possesses the property of} centric consistency, \inserted{so that it is clear that they are not contradictory.}
 And then we prove mathematically that $k$-means fits the centric-consistency \changed{axiom}{property} in that both local and global optima of its target function are retained.\footnote{Of course the ones that are basis of the clustering. While preserving one local optimum, the transformation may change the other optima.} 
Additionally we prove that $k$-means is not Kleinberg-consistent even when we restrict ourselves to $m$-dimensional metric space.

\inserted{
Let us stress, however, that the proposed reformulation is not sufficient to be raised to the status of a set of sound and complete axioms.
Therefore, following a number of authors, e.g. Ackerman et al. \cite{Ackerman:2010NIPS}  we will rather mean properties that may or may not be fulfilled by a clustering algorithm, just characterizing some features of $k$-means that should possibly be covered, and at least not neglected by future development of axiomatisation in the domain of cluster analysis. 

}%inserted{

}%Bem 

Our contribution is as follows:
\begin{itemize} 
\item To substantiate our claim that there is a mismatch between informal intuitions and formal formulations of Kleinberg's axioms, we present a series of  carefully constructed examples. 
\item We show in Section \ref{sec:kernel} that by the implicit non-embeddability axiom   alone Kleinberg precludes consideration of $k$-means as a clustering algorithm. 
\item 
  We demonstrate in Section \ref{sec:badscalabilityconsistency} that richness and scaling-invariance alone may lead to a contradiction for a special case.
This denies   Kleinberg's claims that his axioms can be fulfilled pair-wise.
\item In Section \ref{sec:kmeansContraKleinberg} we show that known relationships between 
Kleinberg's axioms and $k$-means apply also for Euclidean space, that is 
$k$-richness  is granted, richness or near richness is not achievable, consistency is violated. We show also that the refinement consistency is violated too. 
\item We show in Section \ref{sec:consistecycontrascaling} that in $\mathbb{R}^m$ scaling invariance transformations, by interference, annihilate effects of consistency transformation that is clusters being further away may get closer to one another. 
\item Furthermore we show in Section \ref{sec:badconsistency} that consistency alone leads to contradictions. 
  We demonstrate that in practical settings of application of many algorithms. In a metric $m$-dimensional space where $m$ is the number of features, it is impossible to contract a single cluster without moving the other ones and as a consequence running at risk of moving some clusters closer together. Also we  show   that $k$-means version  where we allow for $k$ to range over a set, will change the optimal clustering $k$ when Kleinberg's $\Gamma$ operation (consistency transform) is applied.
\item 
We demonstrate in Section \ref{sec:badrichness} that also  the richness axiom denies common sense by itself, as it is unrealistic to be achieved by $k$-means-ideal, $k$-means-random and $k$-means++. 
\item We propose a reformulation of the Kleinberg's axioms in accordance with the intuitions and demonstrate   that under this reformulation  
the axioms stop to be contradictory 
 (Section
\ref{sec:KleinbergNotBad}). 
In particular we introduce the notion of centric consistency which is an  
 adaptation of the general idea of shrinking the cluster. It relies simply on moving cluster elements towards its center. 
We provide an example of a clustering function that fits the axioms of near-richness, scale-invariance and \inserted{possesses the property of} centric consistency, \inserted{so that it is clear that they are not contradictory.}
\item We show that $k$-means is centric-consistent (Section \ref{sec:kmeanscentricconsistent}).
This implies that   even a real-world algorithm like $k$-means conforms to the above-mentioned augmented axiomatic system (section \ref{sec:kmeanscentricconsistent}). 
\item 
As the centric consistency imitates only the consistency inside a cluster,  
we introduce also the notion of motion consistency, to approximate the  consistency property outside a cluster and show that $k$-means, in order to be motion-consistent (Section \ref{sec:motionconsistency}), must impose the requirement of a gap between clusters. The introduction of gap requirement, on the other hand, violates Kleinberg's non-refutability axiom (\refax{ax:nonFalsifiability}).  
\item We investigate the issue of gaps between clusters and show appropriately designed gaps induce local minima (section \ref{sec:kmeansperfectballclustering}) for $k$-means and formulate conditions under which the gap leads to a global minimum for $k$-means (section \ref{sec:kmeansabsoluteballclustering}). 
\item Based on the above, we propose an alternative approach to reconcile Kleinberg's axioms with $k$-means. 
We demonstrate that under assumption of appropriate gaps we can either relax centric consistency to inner cluster consistency or go over from $k$-richness to an approximation of richness (sections  \ref{sec:kmeansperfectballclustering} and \ref{sec:kmeansabsoluteballclustering})
\end{itemize}

We  start this paper with a review of the previous work on development of an axiomatic system 
(Section \ref{sec:prevWork})
% and some remarks on the relationship between Kleinberg's axioms and $k$-means, (Section \ref{sec:kmeansContraKleinberg}) 
and round the paper up 
with a discussion of some open problems (Section \ref{sec:conclusions}).

\section{Previous work}\label{sec:prevWork}

Axiomatic systems may be traced back to as early as 1973, when Wright
\cite{Wright:1973} 
  proposed axioms of clustering functions
creating unsharp partitions, similar to fuzzy systems. 
  In  his framework every domain object was attached  a positive real-valued weight, that could be    distributed among multiple clusters.

In general, as exposed by van Laarhoven and Marchiori
\cite{vanLaarhoven:2014} and Ben-David and Ackerman  \cite{Ben-David:2009}
 the clustering axiomatic frameworks address either:
\begin{itemize} 
\item 
required properties of  clustering  functions, or
\item 
required properties of the   values of a clustering quality function,
or
\item 
required properties of the relation between qualities of different partitions (ordering   of  partitions  for  a  particular set of objects and distance or similarity or dissimilarity relations).
\end{itemize}

One of prominent axiomatic sets, that were later fiercely discussed, was that of Kleinberg, as already stated. 
From the point of view of the above classification, it imposes restrictions on the clustering function itself.

We have already discussed the Impossibility Theorem of Kleinberg that demonstrates the contradiction between the axioms of the set. 
However, there are further problems with this set, not covered by that Theorem.
So Ben-David and Ackerman \cite{Ben-David:2009}, as mentioned,  pointed at the problems with consistency as such. They showed in an example in their Fig.2 that when moving clusters away the clusters themselves can create new groups. 
In this paper we repeat their findings for fix-dimensional Euclidean space, but we go beyond that. We draw  attention to the fact that in $\mathbb{R}^m$ moving clusters away may be completely impossible without going into other dimension.
Furthermore we show that also shrinking of a single cluster in a consistent way is also impossible in $\mathbb{R}^m$.
We demonstrate that interaction of consistency transformation and scaling-invariance transformation actually does something contrary to intuition behind consistency, that is it pulls cluster closer instead of pushing them away.

A number of relaxations of axioms related to clustering functions have been proposed in order to overcome the Kleinberg's impossibility result.
We recall several of them here, based on 
 an overview by Ackerman \cite{Ackerman:2012:phd}
and tutorial by Ben-David \cite{Ben-David:2005}. 

So it was proposed to weaken   Kleinberg's richness (by Kleinberg himself) to so-called $k$-richness as follows: 
\begin{ax}[Zadeh and  Ben-David \cite{Zadeh:2009}] \label{ax:k-richness}
For any partition $\Gamma$ of the set $\mathbf{X}$  consisting of exactly $k$ clusters there exists such a distance function $d$ that   the   clustering function $f(d)$ 
returns this partition $\Gamma$.  %bendavid
\end{ax} 
This relaxation%
\footnote{Still another relaxation of richness was proposed by Hopcroft and   Kannan  \cite{Hopcroft:2012}: 
Richness II: For any set ${K}$ of ${k}$ distinct points in the given Euclidean space, there is an ${n}$ and a set of ${S}$ of ${n}$ points such that the algorithm on input ${S}$ produces ${k}$ clusters, whose centers are the respective points in ${K}$.
Here the weakness lies in the fact that the $k$ points may be subject to clustering themselves in reasonable algorithms. 
} 
 allows for some algorithms splitting the data into a fixed number of clusters, like $k$-means, 
not to be immediately discarded   as "clustering algorithms", given that no cluster is allowed to be empty.% 
\footnote{Even $k$-richness is still a problematic issue because as demonstrated by Ackerman et al. \cite{Ackerman:2013}, a useful property of stability of clusters under malicious addition of data points holds only for balanced clusters.}

However, this weakening of Kleinberg's axioms  does not suffice to make $k$-means a "clustering function" as it still   violates consistency axiom. 

Ackerman et al.  \cite{Ackerman:2010NIPS}  
 propose the concept of \emph{outer-consistency} 
\begin{ax}  \label{ax:outer-consistency}
The method is said to be \emph{outer-consistent}
if it  delivers the same partition if   one increases only distances between elements from different clusters and lets the distances within clusters unchanged. 
\end{ax}
$k$-means algorithm is said to be in this sense outer-consistent.%
\footnote{We show, however, that this is not true.}
They propose also so-called \emph{inner consistency}  
\begin{ax}  \label{ax:inner-consistency}
The method is said to be \emph{inner-consistent}
if it  delivers the same partition 
when   one decreases only distances between elements from
same  cluster  and lets the distances between elements of different clusters unchanged. 
\end{ax}
$k$-means algorithm is in this sense not inner-consistent.
Later we will discuss the representation problem for this type of consistency with $k$-means. 
Let us mention here that they prove that 
(1) no  general  clustering  function  can  simultaneously  satisfy  outer-consistency,  scale-
invariance, and richness,
and
(2) no  general  clustering  function  can  simultaneously  satisfy  inner-consistency,  scale-
invariance, and richness. 
They claim also that  
$k$
-means-ideal has the properties of outer-consistency and locality\footnote{
A clustering function clustering into $k$ clusters has the locality property, 
if whenever a set $S$ for a given $k$ is clustered by it into the partition   $\Gamma$,
and we take a subset $\Gamma'\subset \Gamma$ with $|\Gamma'|=k'<k$,
then clustering of $\cup_{C\in \Gamma'}$ into $k'$ clusters will yield exactly $\Gamma'$. 
}. None of these properties is claimed to be satisfied by $k$-means-random nor by  a $k$-means with furthest element initialization.   
Furthermore,   $k$-richness (in probabilistic sense) is not matched by $k$-means-random algorithm.  
In this paper we point at the fact that in Euclidean space 
even inner-consistency alone (see our Theorem  \ref{thm:noinnerConsistency}) / outer-consistency alone (see our Theorem  \ref{thm:noouterConsistency}) are self-contradictory.
Also the consistency alone poses problem (see our Theorem  \ref{thm:noConsistentSingleClusterGamma})
 So they do so   for $k$-means-ideal.  
But on the other hand we show  $k$-richness (in probabilistic sense) \emph{is   matched} by $k$-means-random algorithm (see our Theorem \ref{th:krichnesskmeans}).  
  
Still another relaxation of the Kleinberg's consistency is called   Refinement Consistency. It 
is a modification of the consistency axiom by 
\emph{replacing the requirement that 
$f(d)=f(d')$ with the requirement that 
 one of $f(d), f(d')$ is a refinement of the other}.
A partition $\Gamma'$ is a refinement of a partition $\Gamma$ if for each cluster $c'\in\Gamma'$ there exists a cluster $c\in\Gamma$ such that $c'\subseteq c$.  
Obviously the replacement of the consistency requirement with refinement consistency breaks the impossibility  proof of Kleinberg's axiom system. 
But there is a practical concern:
In general, refinement consistency means that by the $\Gamma$ transformation and scaling you may transform any clustering in any other. The usefulness of such an axiom is hence questionable.
In this paper (Section \ref{sec:kmeansabsoluteballclustering}) we show that under some circumstances unidirectional refinement consistency may be achieved, which makes much more sense. 

Zadeh   Ben-David \cite{Zadeh:2009} propose  instead the 
\emph{order-consistency} so that some versions of single-linkage algorithm 
can be classified as "clustering algorithm". 
\emph{
For any two distance
functions $d$ and $d'$, if the orderings of edge lengths are the same 
then $f(d)=f(d')$. }
$k$-means is not order-consistent.

One could also   relax  Scale-Invariance instead to e.g.  \emph{Robustness}, that is,
\emph{ "Small changes in distance function  $d$ should result in
small changes of partition $f(d)$"}.
The basic problem here is that partitions are discrete and the term "small" is hard to define reasonably.  
Small changes in distances may result in major changes of partitions obtained from $k$-means algorithm.

Let us also mention here some works like that by Dunn \cite{CLU:Dun74} or 
Ackerman and Dasgupta \cite{Ackerman:2014nips}
that seemingly have nothing to do with Kleinberg's axioms, but this is only a superficial impression.
Papers discussing the issue of "well separated clusters" or "nicely separated", or "perfectly separated  " point in fact at
the weakness of the non-refutability axiom, because  it is apparent that we do not want to get any partition but rather one that is meaningful. 

Ackerman and Dasgupta  \cite{Ackerman:2014nips} handle incremental clustering algorithms. They introduce an incremental version of $k$-means algorithm. 
The clusters are "nicely separated", as defined by   \cite{Ackerman:2014nips}, if a distance between an element and any other element of the same cluster is lower than the distance from this element to an element outside of the cluster. The authors demonstrate that no incremental algorithm of space complexity linear in $k$ can (routinely) discover the clusters that are nicely separated. This is contrary to single-link algorithm which can identify a set of $2^{k-1}$ candidate elements among which $k$ are from different clusters, if   a nice clustering is unique. A nice clustering can be only detected in this sense (a set of candidates) by an incremental algorithm with memory linear in $2^{k-1}$. But it cannot be detected by the incremental $k$-means even with such a large memory. However, when looking at the issue with randomly generated sequence of data, a  memory linear in $k$ suffices for incremental $k$-means with some probability.   
Then they introduce the "perfect clustering" with the property that the smallest distance between elements of distinct clusters is larger than  the distance between any two elements of the same cluster. 
They demonstrate that there exists an incremental algorithm discovering the "perfect clustering" that is linear in $k$ with respect to space. But the incremental $k$-means fails to do so.
We will discuss this issue in section \ref{sec:kmeansperfectballclustering}.

Ackerman and Ben-David
\cite{Ben-David:2009}
propose another direction of resolving the problem of Kleinberg's axiomatisation impossibility.
Instead of  axiomatising  the clustering function, one should rather create axioms for cluster quality function. 
  
A number of further characterizations of clustering functions has been proposed to overcome Kleinberg axiom problems, 
e.g. \cite{Ackerman:2010} for linkage algorithms, 
\cite{Carlsson:2010} for hierarchical algorithms, 
\cite{Carlsson:2008} for multiscale clustering.

Note that beside Kleinberg's axioms there exist other "impossible" characterizations of clustering functions. 
Meila \cite{Meila:2005} demonstrates that 
one 
can't compare  partitions in a manner  that agrees with  the lattice of partitions, is convexly additive  and  bounded.

General tendency of researchers wanting to overcome Kleinberg's contradiction was to weaken one or more axioms of Kleinberg. 
While in this way the contradiction was removed, the removal relied on 
weakening the reasoning capabilities so that no strong conclusions can be reached.
In this research we try the different way - one of strengthening the assumptions so that for example a proof of $k$-means consistency becomes possible. 

But before we present a consistent set of \changed{axioms}{algorithm properties} and show its validity for $k$-means algorithm, we will investigate counter-intuitiveness of Kleinberg's formalization of his axioms. 

Let us still mention briefly, that other characteristics of $k$-means algorithms were studied in the past, 
see e.g. papers by Ackerman et al. \cite{Ackerman:2012,Ackerman:2013}. 
\cite{Ackerman:2013} deals with the susceptivity of among others the $k$-means algorithm to hostile addition of new points to the data set. 
It turns out that $k$-means is stable under such disturbances given that the clusters are well balanced (cluster sizes do not differ very much) and there are sufficient gaps between the clusters. 
\cite{Ackerman:2012} demonstrates that one can put any two data points into different clusters if one applies  weighting functions to data points. 
Both of these papers, though not explicitly addressing the $k$-richness, demonstrate problems resulting from this axiom. 
\cite{Ackerman:2013} implies that too small clusters may be disintegrated by hostile new points so that for practical purposes one shall be only interested in larger clusters. 
\cite{Ackerman:2012} allows to conclude that poor estimates of densities for sparse clusters may lead to erroneous drawing of cluster boundaries.

\subsection{Counter-intuitiveness of Scale-invariance and Consistency Axioms} \label{sec:badscalabilityconsistency}

 Kleinberg in his paper proved so-called  anti-chain theorem that implies that 
by scaling and contraction ($\Gamma$-transform) one can transform any 
clustering into any other.\footnote{ It is why Kleinberg proposed in his paper the "refinement consistency".} 
This fact combined with the richness axiom leads directly to contradiction in the three axioms.  
 
First of all let us state that
\begin{theorem}{} \label{thm:KleinbergImpossibilityStronger}
For
$n=2$, for data in $\mathbb{R}^m$, 
there is no   clustering  function 
 $f$
that satisfies
Scale-Invariance
and
Richness.\footnote{Contrary to Kleinberg's intuitions, scale-invariance plus richness alone lead to a contradiction. }
\end{theorem}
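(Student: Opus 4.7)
The plan is to exploit the extreme rigidity of the setting: when $n=2$, the entire universe of distance functions on $S = \{1, 2\}$ is parametrized by a single positive real number, namely $d(1,2)$, and any two such distance functions are related by a positive scalar multiple. Scale-invariance then collapses the range of $f$ to a single partition, which directly contradicts richness.

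More concretely, I would proceed in the following steps. First, observe that $S = \{1,2\}$ admits exactly two partitions, $\Gamma_1 = \{\{1\},\{2\}\}$ and $\Gamma_2 = \{\{1,2\}\}$, so by \refax{ax:richness} (richness) there must exist distance functions $d_1, d_2$ with $f(d_1) = \Gamma_1$ and $f(d_2) = \Gamma_2$. Second, note that any distance function $d$ on $S$ arising from an embedding into $\mathbb{R}^m$ is fully determined by the single value $r_d := d(1,2) = \|x_1 - x_2\| > 0$; conversely, for every $r > 0$ there is an embedding realizing it (e.g.\ place the points on the first coordinate axis at distance $r$). Third, set $\alpha = r_{d_2}/r_{d_1} > 0$; then $d_2 = \alpha \cdot d_1$ as distance functions on $S$. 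By \refax{ax:scaleinvariance} (scale-invariance), $f(d_2) = f(\alpha \cdot d_1) = f(d_1)$, so $\Gamma_1 = \Gamma_2$, which is a contradiction.

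I do not anticipate any real obstacle in executing this argument: all the ``work'' is done by the fact that the space of distance functions on two points modulo positive rescaling is a single point. The only subtlety worth mentioning explicitly is that the statement is restricted to data in $\mathbb{R}^m$, but this restriction is harmless (indeed strengthening), since every positive distance between two points is realizable in $\mathbb{R}^m$ for any $m \ge 1$, so the Euclidean embeddability constraint does not exclude any of the distance functions needed in the argument. The contradiction thus holds already within the Euclidean setting, which is the intended domain of $k$-means and the focus of the paper.
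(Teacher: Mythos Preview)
Your proof is correct and follows essentially the same route as the paper: both arguments use richness to obtain distance functions $d_1,d_2$ yielding the two possible partitions of a two-point set, observe that $d_2$ is a scalar multiple of $d_1$ (since a distance function on two points is determined by a single positive number), and then invoke scale-invariance to force $f(d_1)=f(d_2)$, a contradiction. Your explicit remark that every positive distance between two points is realizable in $\mathbb{R}^m$ matches the paper's own observation that the Euclidean restriction is harmless here.
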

\begin{proof} 
Any set $S=\{e_1,e_2\}$ consisting of only two elements 
has potentially two partitions:
$\Gamma_1=\{\{e_1\},\{e_2\}\}$ ("singleton partition") 
and 
$\Gamma_2=\{ e_1 , e_2 \}$ ("no-split-partition"). 
Let $f(d_1,S)=\Gamma_1$ and
$f(d_2,S)=\Gamma_2$.
Then 
$f(\frac{d_2(e_1,e_2)}{d_1(e_1,e_2)} d_1,S)=\Gamma_1$
according to scale-invariance
but by definition 
$f(\frac{d_2(e_1,e_2)}{d_1(e_1,e_2)} d_1=d_2,S)=\Gamma_2$, so we have an obvious contradiction. 
\end{proof}

Note that this theorem strengthens the result of Kleinberg stated in 
Theorem \ref{thm:KleinbergImpossibility} - two kleinberg's properties/axioms already (and not three) lead to a contradition.

As we will demonstrate later, a function matching richness axiom of Kleinberg
does not necessarily exhibit richness, if distances will be  confined 
to $\mathbb{R}^m$. But in case of the above theorem it does not matter because we talk about 2 data points only and hence automatically the validity as distance in $\mathbb{R}^m$ is granted.

It is further easy to show (also based on Kleinberg's anti-chain theorem) that\footnote{This shows that richness is not needed at all to get a contradiction from consistency and scale-invariance}
\begin{theorem} {} \label{thm:KleinbergImpossibilityEvenStronger}
For any $n>2$, for data in $\mathbb{R}^m$, 
no function $f$ can produce  no-split partition $\Gamma_1$
under some distance function $d_1$ 
and any other  partition $\Gamma_2$
under some distance function $d_2$ 
if it satisfies both consistency and scale-invariance properties.
\end{theorem}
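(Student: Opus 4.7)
The plan is to adapt Kleinberg's anti-chain argument to the present setting, exploiting the fact that when $\Gamma_1$ is the no-split partition, \emph{every} pair of points lies in the same cluster, so there are no inter-cluster constraints to satisfy in a $\Gamma_1$-transformation. Concretely, suppose for contradiction that $f(d_1)=\Gamma_1=\{S\}$ and $f(d_2)=\Gamma_2$ with $\Gamma_2\ne \Gamma_1$, both $d_1$ and $d_2$ being distance functions on the same $n$-point set realized in $\mathbb{R}^m$. I would then pick a positive scalar
$$\alpha \le \frac{\min_{i\ne j} d_1(i,j)}{\max_{i\ne j} d_2(i,j)},$$
so that $\alpha\cdot d_2(i,j)\le d_1(i,j)$ for every pair $i,j$.

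The next step is to observe that $\alpha\cdot d_2$ is a $\Gamma_1$-transformation of $d_1$: condition (a) of \refax{ax:consistency} requires only that distances within each cluster of $\Gamma_1$ do not increase, which is exactly what the choice of $\alpha$ guarantees, and condition (b) is vacuous because $\Gamma_1$ has no distinct clusters. The consistency property then forces $f(\alpha\cdot d_2)=\Gamma_1$. On the other hand, scale-invariance gives $f(\alpha\cdot d_2)=f(d_2)=\Gamma_2$. Combining the two equalities yields $\Gamma_1=\Gamma_2$, contradicting the assumption.

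The one subtlety specific to the $\mathbb{R}^m$ setting is to confirm that $\alpha\cdot d_2$ remains a legitimate distance function realizable in $\mathbb{R}^m$. This is immediate: if $\mathbf{x}_1,\dots,\mathbf{x}_n\in\mathbb{R}^m$ realize $d_2$, then the scaled points $\alpha\mathbf{x}_1,\dots,\alpha\mathbf{x}_n\in\mathbb{R}^m$ realize $\alpha\cdot d_2$, so the scale-invariance step is applied inside the admissible class of distance functions. Thus the argument does not leave the Euclidean setting at any point.

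I do not expect a serious obstacle here; the proof is essentially a specialization of Kleinberg's anti-chain reasoning to the case where one of the two partitions is the single-cluster one, which makes the $\Gamma_1$-transformation condition degenerate into ``shrink everything.'' The only thing worth stating carefully is why richness is no longer needed: the no-split partition plays the role of a universal attractor under joint consistency and scale-invariance, so the mere \emph{existence} of a second partition in $Range(f)$ already suffices to derive the contradiction.
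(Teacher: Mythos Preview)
Your argument is correct and essentially identical to the paper's: both scale one of the two distance functions so that every pairwise distance in one dominates the other, then invoke consistency with respect to the no-split partition (where the $\Gamma_1$-transformation condition degenerates to ``all distances may shrink'') followed by scale-invariance to force $\Gamma_1=\Gamma_2$. The only cosmetic difference is that the paper scales $d_1$ up by $\frac{\max d_2}{\min d_1}$ whereas you scale $d_2$ down by the reciprocal; your explicit remark that the scaled configuration remains in $\mathbb{R}^m$ is exactly the point the paper also stresses.
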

By the way this  variant of Kleinberg's anti-chain theorem 
is a reason why he proposed to weaken richness requirement to "near-richness", omitting the "all-in-one" partition.%
\footnote{In fact the Kleinberg's anti-chain theorem implies that also a partition putting each element into a separate cluster should be excluded 
from "near-richness"} 

\begin{proof}
To show this let 
$mind=\min_{e_1,e_2\in S} d_1$ 
and
$maxd=\max_{e_1,e_2\in S} d_2$.
It is easy to see that $d_2$ is a $\Gamma$-transform of 
$\frac{maxd}{mind}d_1$ for partition $\Gamma_1$. 
Therefore as $f(d_1,S)=\Gamma_1$,
 because  of scale-invariance $f(\frac{maxd}{mind}d_1,S)=f(d_1,S)=\Gamma_1$,
hence by consistency $f(d_2,S)=f(\frac{maxd}{mind}d_1,S)=\Gamma_1$. 
This contradicts the assumption that $f(d_2,S)=\Gamma_2$. 
\end{proof}

As mentioned, \cite{vanLaarhoven:2014}
pointed at the fact that such a construction would not be possible in the realm of graph clustering. 
We shall ask then: what about $\mathbb{R}^m$? 
We provided the above proof to show that 
  forcing data points into the Euclidean space does not invalidate the construction because the scaling operation keeps the points in the original Euclidean space.  

So there is surely a need to redefine the richness property 
into a %"not-all-in-one-
"near-richness".\footnote{A similar reasoning is possible for singleton partition, but we choose this way.} 

But "near-richness"  is again not enough to resolve all contradictions (as by the way is visible from the Kleinberg's anti-chain theorem \cite{Kleinberg:2002}).
\begin{theorem} {}  
For any $n>1 $, for data in $\mathbb{R}^m$ ($m>2$), 
no function $f$ can produce 
a partition   $\Gamma_1$ consisting of two sets of elements
$\Gamma_1=\{\{1,2,\dots,n\},\{n+1,n+2\}\}$
under some distance function $d_1$ 
and any other  partition $\Gamma_2$ consisting of three sets of elements 
$\Gamma_2=\{\{1,2,\dots,n\},\{n+1\},\{n+2\}\}$
under some distance function $d_2$ 
if it satisfies both consistency and scale-invariance properties.
\end{theorem}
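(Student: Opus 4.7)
The plan is to argue by contradiction along the lines of Theorem \ref{thm:KleinbergImpossibilityEvenStronger}, but with a construction adapted to Euclidean geometry. Suppose $f$ satisfies scale-invariance and consistency, and that $d_1, d_2$ are distance functions in $\mathbb{R}^m$ with $f(d_1) = \Gamma_1$ and $f(d_2) = \Gamma_2$. I will exhibit a single third Euclidean distance $d_3$ on $S = \{1, \dots, n+2\}$ that is simultaneously a $\Gamma_1$-transform of some scaling $\alpha d_1$ and a $\Gamma_2$-transform of some scaling $\beta d_2$; invoking scale-invariance followed by consistency then forces both $f(d_3) = \Gamma_1$ and $f(d_3) = \Gamma_2$, yielding the contradiction $\Gamma_1 = \Gamma_2$. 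Unpacking the definition of $\Gamma$-transform splits the constraints on $d_3$ into three groups: within $\{1, \dots, n\}$, $d_3(i,j) \le \min(\alpha d_1(i,j), \beta d_2(i,j))$; for the cross-cluster pairs $(i, n{+}j)$ with $i \le n$, $d_3(i, n{+}j) \ge \max(\alpha d_1(i, n{+}j), \beta d_2(i, n{+}j))$; and for the critical pair $(n{+}1, n{+}2)$, which lies within a cluster of $\Gamma_1$ but is split by $\Gamma_2$, $\beta d_2(n{+}1, n{+}2) \le d_3(n{+}1, n{+}2) \le \alpha d_1(n{+}1, n{+}2)$.

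The concrete realization places the points inside $\mathbb{R}^3 \subseteq \mathbb{R}^m$ as follows: pack $\{1, \dots, n\}$ in a ball of radius $\epsilon$ around the origin, put $n{+}1$ at $(Y_1, 0, 0)$ with $Y_1 := \alpha \max_i d_1(i, n{+}1) + \epsilon$, and put $n{+}2$ at a point of norm $Y_2 := \alpha \max_i d_1(i, n{+}2) + \epsilon$ whose angular direction is then chosen so that $\|p_{n+1} - p_{n+2}\| = X$, for an $X$ still to be selected. Taking $\alpha$ large and both $\beta$ and $\epsilon$ sufficiently small makes the within-cluster and cross-$\Gamma_1$-cluster requirements on $d_3$ automatic; the only remaining question is whether the geometric range $[|Y_1 - Y_2|, Y_1 + Y_2]$ that $X$ can attain by varying the angle meets the admissible interval $[\beta d_2(n{+}1, n{+}2), \alpha d_1(n{+}1, n{+}2)]$.

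The hard part, and the step where the Euclidean hypothesis does real work, is precisely this intersection check. The upper geometric bound $Y_1 + Y_2$ trivially exceeds $\alpha d_1(n{+}1, n{+}2)$, and a sufficiently small $\beta$ kills the constraint $X \ge \beta d_2(n{+}1, n{+}2)$; only $|Y_1 - Y_2|$ is dangerous. Here I would invoke the triangle inequality for $d_1$, which is automatic because $d_1$ comes from an embedding in $\mathbb{R}^m$: letting $i^*$ attain $\max_i d_1(i, n{+}1)$ gives $d_1(i^*, n{+}1) \le d_1(i^*, n{+}2) + d_1(n{+}1, n{+}2) \le \max_i d_1(i, n{+}2) + d_1(n{+}1, n{+}2)$, and the symmetric inequality yields $|\max_i d_1(i, n{+}1) - \max_i d_1(i, n{+}2)| \le d_1(n{+}1, n{+}2)$, that is $|Y_1 - Y_2| \le \alpha d_1(n{+}1, n{+}2)$. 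The two intervals therefore meet, an admissible $X$ exists, and varying the angle between $p_{n+1}$ and $p_{n+2}$ in $\mathbb{R}^3$ realizes $d_3$ as an honest Euclidean distance; the assumption $m > 2$ serves only to furnish this angular freedom while keeping the tiny cluster $\{1, \dots, n\}$ off the line through $p_{n+1}$ and $p_{n+2}$.
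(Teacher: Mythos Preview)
Your argument is correct. Both your proof and the paper's arrive at the contradiction by exhibiting a single Euclidean distance that is simultaneously a $\Gamma_1$-transform of a scaling of $d_1$ and a $\Gamma_2$-transform of a scaling of $d_2$, so that scale-invariance plus consistency force $f$ to take two distinct values on the same input.

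The concrete Euclidean constructions differ, however. The paper first scales $d_1$ and $d_2$ separately to intermediate distances $d_3,d_4$, then consistency-transforms both to a common $d_5=d_6$ in which all cross-cluster distances collapse to a single constant $l$; to make this realisable in $\mathbb{R}^m$ it inserts a preliminary consistency step that pushes $\{1,\dots,n\}$ onto a sphere and places $n{+}1,n{+}2$ on the orthogonal line through its centre, so that equidistance to the first block is automatic. You instead build the final configuration directly: the first block sits in a tiny $\epsilon$-ball, the two outer points are placed at radii $Y_1,Y_2$ chosen from $\alpha d_1$, and the one remaining degree of freedom --- the angle between $p_{n+1}$ and $p_{n+2}$ --- is used to hit the target distance $X$. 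The feasibility check $|Y_1-Y_2|\le \alpha\, d_1(n{+}1,n{+}2)$ is exactly where the Euclidean hypothesis on $d_1$ enters, via the triangle inequality. Your route is more economical (no intermediate $d_3,\dots,d_6$, no sphere normalisation) and isolates precisely the metric fact that does the work; the paper's route stays closer to Kleinberg's abstract anti-chain argument and treats the embedding as a patch applied after the fact.
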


\begin{proof}
For $n\ge 1$  take a set of $n+2$ elements.
The richness property implies that under two distinct distance functions $d_1,d_2$  
the clustering function $f$ may 
form two partitions: $\Gamma_1,\Gamma_2$, resp., as defined in the theorem. 
By invariance property, 
we can derive from $d_2$ the distance function $d_4$ such that 
no distance between the elements under $d_4$ is lower than the biggest distance under $d_1$.
By invariance property, 
we can derive from $d_1$ the distance function $d_3$ such that 
 the  distance between  elements $n+1,n+2$ is bigger than 
  under $d_4$.
We have then $f(\{1,\dots,n+2\};d_4)=\Gamma_2,f(\{1,\dots,n+2\};d_3)=\Gamma_1 $. 
Now let us apply the consistency axiom. 
From $d_4$ we derive the distance function 
$d_6$ such that 
for elements $1,\dots,n$ $d_1$ and $d_6$ are identical,
the distance between $n+1,n+2$ is same as in $d_4$ and the distances between any element of $1,\dots,n$ and any of $n+1,n+2$ 
is some $l$ that is bigger than any distances between any elements under $d_1,\dots,d_4$. 
From $d_3$ we derive the distance function 
$d_5$ such that 
for elements $1,\dots,n$ $d_1$ and $d_5$ are identical,
the distance between $n+1,n+2$ is same as in $d_4$ and the distances between any element of $1,\dots,n$ and any of $n+1,n+2$ 
is same $l$ as above.  
We have then $f(\{1,\dots,n+2\};d_6)=\Gamma_2,f(\{1,\dots,n+2\};d_5)=\Gamma_1 $. 
But then we have a contradiction because by construction $d_5$ and $d_6$ are identical. 

In this proof, however, assumptions are made that may possibly be not  
correct if we require the distances to be  distances in Euclidean space. 
So not for any configuration of $n$ points a $n+1$-st point may be found to be equidistant to all the other ones. 
And even if it is so, it is not guaranteed that a second distinct $n+2$-nd point exists with the same property. 
Hence in the above construction of the proof, an initial step 
is needed, matching using consistency property, that will pose the points $1,\dots, n$ onto a sphere both for $\Gamma_1$ and $\Gamma_2$, 
and points $n+1,n+2$ on a  line orthogonal to the subspace containing $1,\dots,n$ and passing through the origin of the sphere.  

In the end, of course, the contradiction is still valid in Euclidean space, but this exercise shows that proofs of Kleinberg need to be rewritten if we deal with Euclidean spaces. 
But note that if we restrict ourselves to $\mathbb{R}^2$, posing the points onto a sphere does not work anymore.  Points $n+1$ ad $n+2$ will become identical.  
\end{proof}

So, 
there is still an open question,  
whether or not we can have a clustering function 
matching Kleinberg's axioms, that is still not contradictory.
We will at this issue below. 
in $\mathbb{R}^m$.  

%-----------------------------------------------
\section{To embed or not to embed}\label{sec:kernel}

%Few remarks on kernel-trick k-means 

Kleinberg's permission of non-embeddability  axiom (\refax{ax:nonEmbedding})  assumes that distances can be any non-negative symmetric functions over the set of pairs of  objects.

$k$-means normally operates in an Euclidean space, but by using so-called kernel-trick\footnote{
We will not dive deeper in this paper into the discussion of properties of kernel $k$-means.
Let us only make the remark that 
  kernel $k$-means, given that there exists an embedding in $\mathbb{R}^m$), is in fact $k$-means in the feature space. So all the findings related to $k$-means would apply also in the feature space. The weighted  version of kernel $k$-means may be considered a bit tricky, but it can be "approximated" by multiplying the unweighted points, under the restriction that all multiplied points will go into the same cluster, but this doss not seem to invalidate any findings. 
A separate question of course is whether or not we can invert the kernel function (if it is given explicitly) in order to find points transformed by e.g. centric consistency transform in the feature space. 
} one can operate on the objects as if they were embedded in a (highly dimensional) space  without actually finding the  embedding (just working on a kernel matrix derived from distances). And one can get a clustering in that space optimizing the $Q$ function. 

\Bem{ 
http://www.kyb.mpg.de/fileadmin/user_upload/files/publications/attachments/scholkopf00kernel_3781%5b0%5d.pdf
}%Bem 

It is well known that if there exists an embedding of a set of $n$ points in an Euclidean space, then we do not need to consider more than $n-1$ dimensions. 

But it is well known that not for each distance function in the sense of Kleinberg's definition there exists an embedding. Just consider the points in the table \ref{tab:antikernel}.

\begin{table}
\centering
\caption{Distances between points $A,B,C,D,E,F$   }
\label{tab:antikernel}
\begin{tabular}{|r|l|l|l|l|l|l|} 
\hline 
  & A&B&C&D&E&F\\ 
\hline 
 A      & 0     & 10    & 2.236         & 20    & 22.361        & 20.125 \\ 
\hline 
 B      & 10    & 0     & 6.708         & 22.361        & 20    & 21.095 \\ 
\hline 
 C      & 2.236         & 6.708         & 0     & 20.125        & 21.095        & 20 \\ 
\hline 
 D      & 20    & 22.361        & 20.125        & 0     & 10    & 2.236 \\ 
\hline 
 E      & 22.361        & 20    & 21.095        & 10    & 0     & 6.708 \\ 
\hline 
 F      & 20.125        & 21.095        & 20    & 2.236         & 6.708         & 0 \\ 
\hline 
\end{tabular}
\end{table}

It is visible at the first glance that not even the triangle inequality  holds in this data set (just look at points $A,B,C$ alone).
So no embedding in Euclidean space is possible. 

But what if we still apply the kernel trick?
One can easily find an embedding in a three-dimensional space if one allows for "imaginary" coordinates (allowing for square-rooting negative eigenvalues).
See table \ref{tab:embedantikernel}. 
The distances are kept if we rigidly 
use the distance formula 
$$d(P,T)= \sqrt{(x_{P,1}- x_{T,1})^2
+(x_{P,2}- x_{T,3})^2
+(x_{P,2}- x_{T,3})^2}
$$

\begin{table}
\centering
\caption{"Complex embedding" of points $A,B,C,D,E,F$   }
\label{tab:embedantikernel}
\begin{tabular}{|r|l|l|l|} 
\hline 
 point & $x_1$&$x_2$&$x_3$\\ 
   \hline 
A       & 5+0i  & 10+0i         & 0+1i \\ 
   \hline 
B       & -5+0i         & 10+0i         & 0+1i \\ 
   \hline 
C       & 2+0i  & 10+0i         & 0-1i \\ 
   \hline 
D       & 5+0i  & -10+0i        & 0+1i \\ 
   \hline 
E       & -5+0i         & -10+0i        & 0+1i \\ 
   \hline 
F       & 2+0i  & -10+0i        & 0-1i \\ 
\hline 
\end{tabular}
\end{table}

A quick look into the table \ref{tab:antikernel} would suggest 
that points $A,B,C$ form one cluster, and $D,E,F$ form another. 

However, if we take the centers of the respective clusters  
$\mu_1=(0.667+0.000i, 10.000+0.000i,  0.000+0.333i)$
and 
$\mu_2=(0.667+0.000i, -10.000+0.000i,  0.000+0.333i)$, then the $Q$ function for such 2-means 
amounts to 100. 
But if we take the points  
$S_1=( 0+ 0.00i, 0+ 0.00i, 0-10.18i)$,
$S_2=(0+0.000i, 0+0.000i, 0+9.198i)$ as cluster centers,
then clusters $\{A,B,D,E\}$ and $\{C,F,\}$ are formed around them with $Q$ function value equal $6\cdot 10^{-6}$. 

So the Kleinberg's non-embeddability axiom 
is not suitable for clustering algorithms for which position of other points in space needs to be anticipated.
Under the assumption of Euclidean embedding this problem is clearly solved. 

From now on we will always assume that, if not stated otherwise, we constrain the Kleinberg's consistency transform to the cases embeddable into a fixed dimensional Euclidean space.

Note that with this result also the Kleinberg's non-refutability axiom is indirectly questioned.%
\footnote{It does not mean that there do not exist versions of $k$-means for distances other than Euclidean distance.
What we wanted to demonstrate here is that the notion of embedding is needed if we want to look at $k$-means from Kleinberg's axioms perspective.

}

%-----------------------------------------------
 
\section{Kleinberg's axioms and  $k$-means -- Conformance and Violations   }\label{sec:kmeansContraKleinberg}

Let us briefly discuss here the relationship of $k$-means algorithm to the already mentioned axiomatic systems, keeping in mind that we apply it in $\mathbb{R}^m$ Euclidean space.

Scale-invariance is fulfilled because $k$-means
qualifies objects into clusters based on   relative distances to cluster centers and not their absolute values as may be easily seen from equation (\ref{eq:Q::kmeans}).%
\footnote{However, this quality function fails on the axiom of  Function Scale Invariance, proposed in \cite{Ben-David:2009}. }

On the other hand richness, a property denial of which   has nothing to do with distances, hence with embedding in an Euclidean space, as already known from mentioned publications, e.g. \cite{Zadeh:2009}, is obviously violated because $k$-means returns only partitions into $k$ clusters.

But what about its relaxation that is $k$-richness. 
Let us briefly show here that
\begin{theorem}\label{th:krichnesskmeans}
 $k$-means algorithm  is $k$-rich
\end{theorem}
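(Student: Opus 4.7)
The plan is to exhibit, for an arbitrary partition $\Gamma=\{C_1,\dots,C_k\}$ of $S=\{1,\dots,n\}$ into exactly $k$ non-empty clusters, a distance function $d$ (induced by an Euclidean embedding in some $\mathbb{R}^m$) for which $k$-means recovers $\Gamma$. The construction is the obvious one: choose $k$ ``anchor'' points $p_1,\dots,p_k\in\mathbb{R}^m$ pairwise separated by some large distance $D$, embed every element of $C_j$ inside a ball of radius $\varepsilon\ll D$ around $p_j$ (using tiny perturbations so that the resulting $d$ satisfies $d(i,j)>0$ for $i\ne j$ as required by Definition~\ref{def:KleinbergDistance}), and take $d$ to be the induced Euclidean distance.

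First I would bound the $k$-means cost $Q(\Gamma)$ of this target partition: since each $C_j$ fits in a ball of radius $\varepsilon$, the second form of $Q$ in \eqref{eq:Q::kmeans} gives $Q(\Gamma)=O(n\varepsilon^2)$. Next I would lower-bound $Q(\Gamma')$ for every other partition $\Gamma'$ of $S$ into $k$ non-empty clusters: because $\Gamma'\ne\Gamma$ and both have exactly $k$ parts, a simple pigeonhole argument forces some cluster of $\Gamma'$ to contain elements drawn from at least two different $C_i$'s, so the pairwise-sum form of $Q$ contributes at least one term of order $(D-2\varepsilon)^2$. Choosing $D$ sufficiently large relative to $\varepsilon$ makes $Q(\Gamma)<Q(\Gamma')$ uniformly over all alternative $k$-partitions, so $\Gamma$ is the unique global minimizer and is therefore returned by $k$-means-ideal.

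For the realistic variants the same embedding works once one adopts the probabilistic reading of $k$-richness: with probability at least $k!\prod_{j=1}^k |C_j|/\binom{n}{k}k^k>0$, the uniform random initialization of $k$-means-random picks exactly one seed from each $C_j$, and an analogous positive-probability bound holds for the $D^2$-weighted sampling of $k$-means++. Conditional on this event, the first Lloyd step assigns every element of $C_j$ to the seed inside $C_j$ (for $D$ large enough compared to $\varepsilon$ the nearest anchor is always the correct one), the centroid update keeps each $\boldsymbol{\mu}_j$ inside the $\varepsilon$-ball around $p_j$, and the iteration stabilizes on $\Gamma$. Hence $\Gamma$ is produced with strictly positive probability, which is the sense of $k$-richness applicable to these randomized algorithms.

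The only real obstacle is bookkeeping: one must verify (i) that the perturbations within each ball can be chosen so that all $n$ points are pairwise distinct and yet the cost comparison $Q(\Gamma)<Q(\Gamma')$ survives, and (ii) that in the $k$-means-random / $k$-means++ argument the separation $D$ can be fixed independently of the random initialization so that every Lloyd step preserves the correct assignment. Both are quantitative but routine; the hardest step is arguably the pigeonhole lower bound on $Q(\Gamma')$, which is where the structural assumption that $\Gamma'$ also has exactly $k$ parts is essential.
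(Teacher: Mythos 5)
Your construction is sound and does prove the theorem, but it is genuinely different from the paper's, and the difference matters most for the randomized variants. The paper works entirely on a line in $\mathbb{R}^1$: clusters are placed left to right in non-increasing order of cardinality, each occupying unit length, with inter-cluster gaps tuned by an explicit formula involving the cumulative cardinalities. For $k$-means-ideal both arguments are morally the same (any partition that merges points from two target clusters pays a cost of order the gap squared, which dominates the $O(n\varepsilon^2)$ within-cluster cost), and your ball-based pigeonhole version is arguably cleaner and dimension-free. The real divergence is in the treatment of $k$-means-random and $k$-means++: you condition on the event that every target cluster receives exactly one seed, which happens with probability $\prod_j |C_j|/\binom{n}{k}$ --- positive, hence sufficient for the bare ``with some probability'' reading of probabilistic $k$-richness, but this quantity depends on the cluster-size profile and collapses (e.g.\ to $O(k/n)$ when a singleton cluster is present) as $n$ grows. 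The paper instead engineers a \emph{recovery} mechanism: because clusters are sorted by size and the gaps grow geometrically to the right, a center seeded into an already-occupied cluster is dragged rightward by Lloyd updates until it captures the nearest unseeded cluster, so success requires only that the $i$th seed land in one of the first $i$ clusters. This yields a lower bound of $k!/k^k$ that depends on $k$ alone, uniformly over $n$ and over the target partition --- which is exactly the uniformity the paper's definition of probabilistic $k$-richness asks for (``independent of the actual clustering that is intended to be obtained'') and which the paper later contrasts with the $n^{-l}$ seeding probabilities that make naive constructions impractical. So your proof establishes the theorem under the weaker positive-probability reading; to match the paper's quantitative claim you would need either the drift/recovery idea or some other device making the success probability independent of the cluster cardinalities. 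The bookkeeping items you flag (distinctness of perturbed points, fixing $D$ relative to $\varepsilon$ and $n$) are indeed routine.
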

\begin{proof} We proceed 
  by constructing a data set for each required partition.  
Let us consider $n$ data points arranged on a straight line and we want to split them into $k$ clusters fitting a concrete partition $\Gamma_0$. 
For this purpose arrange the clusters on the line (left to right) in non-increasing order of their cardinality. Each cluster shall occupy (uniformly) a unit length.
The space between the clusters (distance between closest elements of  $i$th and $(i+1)$st cluster) should be set as follows: For $i=1,\dots,k-1$ let $dce(j,i)$ denote the distance between the most extreme data points of clusters $j$ and $i$, $cardc(j,i)$ shall denote the combined cardinality of clusters $j,j+1,\dots,i$. The distance between closest elements of clusters $i$ and $i+1$ shall be then set to $2*dce(1,i)\frac{cardc(1,i)+cardc(i+1,i+1)}{cardc(i+1,i+1)}$. 

In this case application of $k$-means algorithm ($k$-means-ideal, $k$-means-random, $k$-means++)  will lead  the desired partition. 
The reasons are as follows:

In case of $k$-means-ideal, let $A$ be the most right   cluster of a partition $\Gamma$ different from $\Gamma_0$, containing the "space between clusters".
The definition of this distance is chosen in such a way that if we split $A$  into two parts along this "space between clusters" and attach the left and the right part to the neighboring clusters, and splitting any cluster if the number of clusters falls below $k$ in this way, then the resulting new partition will be more optimal.
Hence the optimal $k$-means-ideal clustering will not contain any "spaces between clusters" within the clusters and be identical with the intended $\Gamma_0$. 
This implies we can construct any partition in this way.

In case of $k$-means-random, 
if each of the clusters of $\Gamma_0$ is seeded, the spaces between clusters of $\Gamma_0$ are so large, 
that the clustering resulting from such a seeding is identical with $\Gamma_0$ and upon subsequent steps the partition will not change any more. 
So consider now the case that   after the random initialization (or at any later step) we get a partition $\Gamma$ with cluster centers $\mu_1,\dots \mu_k$ such that there be a cluster $C$ of $\Gamma_0$ that has not been seeded (does not contain a $\mu_i$ in its range). 
No cluster of $\Gamma$  with center  to the right of $C$ would nonetheless contain any data element from $C$. 
Consider therefore  only clusters of $\Gamma_0$ to the left of $C$ and let $\mu_r$ be the cluster center most to the right in this set.
 The cluster  $C_r$ formed from elements closest to $\mu_r$   will contain $C$. 
Therefore during the cluster center update step of $k$-means-random $\mu_r$ will move to the right 
  to a position, from which only $C$ will be the set of points closest to $\mu_r$.
Therefore  after 3 steps $\mu_r$ will become the center of $C$.
Later on the same process will happen with other not seeded clusters of $\Gamma_0$ to the left of it till we get the partition $\Gamma_0$. 

As $k$-means++ behaves similarly to $k$-means-random after initial seeding, the same effect will be reached. 
\footnote{
Let us formulate the argument more precisely. 
As mentioned earlier, following \cite{Ackerman:2010NIPS}, 
for probabilistic algorithms we will talk about probabilistic $k$-richness, that is one obtainable with some probability, independent of the actual clustering that is intended to be obtained. The probability can further be increased if one wishes to.

 In the above scheme we see that whenever during the initialization at each place more seeds are there than clusters, then they will spread to the right, given there are clusters without seeds there, ensuring that each cluster gets its cluster center. 
Note that if there are clusters lacking seeds to the left, there is no way to move seeds there. 
Hence, as the cluster s are sorted   in decreasing size order from the left to right, the probability,
that we have a seeding upon which by moving cluster centers to the right we can assign each cluster a cluster center   amounts to at least  $k!/k^k$.
This is computed as follows: The favorable seeding occurs, if the first seed is in the first cluster, and the $i$th seed in a cluster 1 or 2 or ... or $i$ from the left. 
As the clusters are sorted non-increasingly, the probability of hitting the first cluster is at least $\frac1k$, that of first or second $\frac2k$, that of first, or second, or,\dots,or $i$th is $\frac i k$. This results in the aforementioned estimation.

Note that  the estimated probability is independent of the sample size and the actual distribution of sizes of clusters. It depends on $k$ only.

Furthermore, the targeted clustering is the absolute minimum of the $k$-means-ideal, hence we can run $k$-means-random  multiple time in order to achieve the desired probability of $k$-richness. 
E.g. if we need 95\% certainty, we need to rerun $k$-means-random $r$ times with $r$ such that 
$1-(1-k!/k^k)^r \ge 95\%$. 

The issue with $k$-means++ is a bit more complex due to the way how probabilities of seeding are computed. 
In fact, we do not rely on the $k$-means iterating process, but have rather to ensure that each cluster gets a seed during the seeding phase. 

When the first seed is distributed, like in $k$-means, we have the assurance that an unhit cluster will be hit. 
The probability that a cluster is hit during the seeding step after the first one  
is proportional to the sum of squared distances of cluster elements to the closest seed assigned earlier.
Consider the $i$th cluster (from the left) that was not hit so far. 
The closest hit cluster to the left can lie  at least a distance 
$$\alpha*dce(1,i-1)\frac{cardc(1,i-1)+cardc(i,i)}{cardc(i,i)}$$
and that to the right at
$$\alpha*dce(1,i)\frac{cardc(1,i)+cardc(i+1,i+1)}{cardc(i+1,i+1)}$$
(note that the first cluster has no left neighbor, and the $k$th - no right neighbor). $\alpha=2$. 
So the contribution of the $i$th cluster to the sum of squares estimation for hitting probability in a current state amounts to at least the smaller number of the following two:
$$cardc(i,i)\left(\alpha*dce(1,i-1)\frac{cardc(1,i-1)+cardc(i,i)}{cardc(i,i)}\right)^2$$
$$cardc(i,i)\left(\alpha*dce(1,i)\frac{cardc(1,i)+cardc(i+1,i+1)}{cardc(i+1,i+1)}\right)^2\ge
$$ $$\ge 
cardc(i,i)\left(\alpha*dce(1,i)\frac{cardc(1,i)+cardc(i+1,i+1)}{cardc(i,i)}\right)^2  
$$
Obviously the first expression is the smaller one so we will consider it only. 
$$cardc(i,i)\left(\alpha*dce(1,i-1)\frac{cardc(1,i-1)+cardc(i,i)}{cardc(i,i)}\right)^2
=
  \alpha^2*dce(1,i-1)^2\frac{cardc(1,i)^2}{cardc(i,i)}
$$
Note  that due to the non-increasing order of cluster sizes, 
$cardc(1,i)\ge i\cdot cardc(i,i)$, 
and $cardc(1,i)\ge i \frac n k$.
Therefore 
$$
  \alpha^2*dce(1,i-1)^2\frac{cardc(1,i)^2}{cardc(i,i)}
\ge
  \alpha^2*dce(1,i-1)^2 i \frac n k 
$$
Furthermore, 
$dce(1,1) =1$, and 
$dce(1,i) =dce(1,i-1)+1+
\alpha*dce(1,i-1)\frac{cardc(1,i-1)+cardc(i,i)}{cardc(i,i)}
\ge 
dce(1,i-1)+1+
\alpha*dce(1,i-1)\cdot i  
=1+dce(1,i-1)\cdot(1+i\alpha)
\ge dce(1,i-1)\cdot(1+i\alpha)
 $
Hence $$dce(1,i) \ge (1+2\alpha)^{i-1}$$
So
$$
  \alpha^2*dce(1,i-1)^2 i \frac n k 
\ge \alpha^2 (1+2\alpha)^{2(i-1)}  i \frac n k
$$
 
After $s$ seeds were distributed the sum of squared distances to the closest seed for hit clusters   amounts to   at most the combined cardinality of the clusters with seeds times $1$ so this  does not exceed $n$.   

Therefore the probability of hitting an unhit cluster after $s$ seeds were already distributed and hit different clusters is not bigger than  
$$\frac
{ \alpha^2 (1+2\alpha)^{2}    \frac n k
+\sum_{i=2}^{k-s} \alpha^2 (1+2\alpha)^{2(i-1)}  i \frac n k
}
{
n+
  \alpha^2 (1+2\alpha)^{2}    \frac n k
+\sum_{i=2}^{k-s} \alpha^2 (1+2\alpha)^{2(i-1)}  i \frac n k
}
=
\frac
{ \alpha^2 (1+2\alpha)^{2}     
+\sum_{i=2}^{k-s} \alpha^2 (1+2\alpha)^{2(i-1)}  i  
}
{
k+
  \alpha^2 (1+2\alpha)^{2}     
+\sum_{i=2}^{k-s} \alpha^2 (1+2\alpha)^{2(i-1)}  i  
}
$$

So the probability that during the seeding all clusters are hit by a seed amounts to at least. 
$$
\prod_{s=1}^{k-1}
\frac
{ \alpha^2 (1+2\alpha)^{2}     
+\sum_{i=2}^{k-s} \alpha^2 (1+2\alpha)^{2(i-1)}  i  
}
{
k+
  \alpha^2 (1+2\alpha)^{2}     
+\sum_{i=2}^{k-s} \alpha^2 (1+2\alpha)^{2(i-1)}  i  
}
$$

 In order to increase the success probability we can now repeat the seed independently sufficiently many times, or we can increase the distances by 
letting $\alpha$ be (much) greater than 2.

 }%footnote

\end{proof}

Let us stress here that there exist attempts to upgrade $k$-means algorithm  to choose the proper $k$. 
The portion of variance explained 
by the clustering is used as quality criterion\footnote{Such a quality function would satisfy axiom of  Function Scale Invariance, proposed in \cite{Ben-David:2009}}.
 It is well known that increase of $k$ increases the value of this criterion.
The optimal $k$ is deemed to be one when this increase stops to be "significant". 
The above construction could be extended to cover a range of $k$ values to choose from.
However, the full richness is not achievable because a split into two clusters will be better than keeping a single cluster, and the maximum is attained for this criterion if $k=n$. So either the clustering will be trivial or quite a large number of partitions will be excluded. 
However, even $k$-richness offers a large number of partitions to choose from.

Kleinberg himself proved via a bit artificial example (with unbalanced samples and an awkward distance function)
that $k$-means algorithm with $k$=2 is not consistent.
Kleinberg's counter-example would require an embedding in a very high dimensional space, non-typical for $k$-means applications. 
Also  $k$-means  tends to produce rather balanced clusters, 
so Kleinberg's example could be deemed to be eccentric.

Let us illustrate by a more realistic  example (balanced, in Euclidean space) that this is a real problem.
Let $A,B,C,D,E,F$ be points in three-dimensional space with coordinates:
$A(1,0,0)$, 
$B(33,32,0)$,
$C(33,-32,0)$,
$D(-1,0,0)$, 
$E(-33,0,-32)$,
$F(-33,0,32)$. 
Let $S_{AB}$,  $S_{AC}$, $S_{DE}$, $S_{DF}$ be sets of say 1000 points randomly uniformly distributed 
over line segments (except for endpoints) $AB, AC, DE, EF$ resp. 
Let $X=S_{AB}\cup  S_{AC}\cup  S_{DE}\cup  S_{EF}$.
$k$-means with $k=2$ applied to $X$ yields a partition
  $\{S_{AB}\cup  S_{AC},  S_{DE}\cup  S_{DF}\}$.
But let us perform a $\Gamma$ transformation consisting in 
rotating line segments $AB,BC$ around the point $A$ in the plane spread by the first two coordinates 
towards the first coordinate axis so that the angle between this axis and $AB'$ and $AC'$ is say one degree.
Now the $k$-means with $k=2$ yields a different partition, splitting line segments $AB'$ and $AC'$.% 
\footnote{In a test run with 100 restarts,
in the first case we got clusters of equal sizes,
with cluster centers at (17,0,0) and (-17,0,0), (between\_SS / total\_SS =  40 \%)
whereas after rotation 
we got clusters of sizes 1800, 2200
with centers at (26,0,0), (-15,0,0) (between\_SS / total\_SS =  59 \%)
}

With this example not only \emph{consistency violation} is shown, but also \emph{refinement-consistency violation}.

\section{Problems with Consistency in Euclidean Space}\label{sec:consistecycontrascaling} 
How does it happen that seemingly intuitive axioms lead to such a contradiction. 
We need to look more carefully at the consistency axiom in conjunction with scale-invariance. 
$\Gamma$-transform  does not do what Kleinberg claimed it should that is 
describing a situation when 
  moving elements from distinct clusters apart and elements within a cluster closer to one another.%
\footnote{
Recall that the intuition behind clustering is to partition the data points in such a way
that members of the same cluster are "close" to one another, that is their distance is low,
and members of two different   clusters are "distant" from one another, that is their distance is high. So it is intuitively obvious that   moving elements from distinct clusters apart and elements within a cluster closer to one another should make a partition "look better".    
}

We shall now demonstrate that 
application of scaling invariance axiom leads to violation of the  consistency axiom  of Kleinberg. 
More precisely: 
\begin{theorem}
For a clustering algorithm $f$, conforming to consistency and scaling invariance axioms, 
if distance $d_2$ is derived from the distance $d_1$ by consistency transformation, 
and $d_3$ is obtained from $d_2$ via scaling, then 
the existence of a 
$d_3$ cannot always be obtained from $d_1$ via consistency axiom transformation. 
\end{theorem}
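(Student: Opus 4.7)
The plan is to exhibit a concrete configuration in $\mathbb{R}^m$ and a choice of scaling factor for which the composition ``consistency transform, then scaling'' fails to satisfy the definition of a $\Gamma$-transform when compared directly to $d_1$. The key observation is that scaling acts uniformly on all distances, whereas a $\Gamma$-transform treats within-cluster and between-cluster distances asymmetrically: the former may only shrink and the latter may only grow. Consequently, scaling by any factor $\alpha<1$ must drag the between-cluster distances down, and if the consistency step from $d_1$ to $d_2$ did not push them up by a factor of at least $1/\alpha$, they will end up strictly below their values in $d_1$.

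Concretely, I would pick a set $S\subset\mathbb{R}^m$ and a partition $\Gamma$ with at least one between-cluster pair $(i,j)$ and at least one within-cluster pair $(p,q)$, fix an arbitrary admissible $d_1$ (e.g., the Euclidean distance), and then choose $d_2$ to be a valid $\Gamma$-transform of $d_1$ that is realizable by some point set in $\mathbb{R}^m$; the simplest choice is $d_2=d_1$ (the identity is trivially a $\Gamma$-transform). Then I would take $\alpha<1$ and define $d_3=\alpha\cdot d_2=\alpha\cdot d_1$. Scale-invariance certifies $d_3$ as a legitimate output of the scaling operation. To check that $d_3$ is not a $\Gamma$-transform of $d_1$, I simply note that for any between-cluster pair $(i,j)$ we have $d_3(i,j)=\alpha\cdot d_1(i,j)<d_1(i,j)$, which directly violates clause (b) in Kleinberg's definition of a $\Gamma$-transformation (\refax{ax:consistency}), namely that between-cluster distances must not decrease.

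For additional robustness, I would observe that the same obstruction arises symmetrically by choosing $\alpha>1$: then within-cluster distances in $d_3$ exceed their counterparts in $d_1$, which violates clause (a). Thus whether the scaling shrinks or stretches, there is always a range of factors for which the composite transformation breaks at least one side of the $\Gamma$-transform conditions. Moreover, one can easily make the failure non-trivial by choosing a $d_2$ that is a \emph{strict} $\Gamma$-transform (strictly contracting some within-cluster pair and strictly expanding some between-cluster pair), and then picking $\alpha$ small enough that the expansion of the between-cluster pair is overwhelmed, so the example does not rely on the degenerate case $d_2=d_1$.

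The main obstacle, and essentially the only non-routine point, is to make sure that throughout the construction all three distance functions $d_1,d_2,d_3$ remain realizable as Euclidean distances in a common $\mathbb{R}^m$, since the theorem is phrased in the Euclidean setting and the paper has already stressed (cf.\ Section \ref{sec:kernel}) that arbitrary $\Gamma$-transforms need not be embeddable. This is handled by taking $d_1$ Euclidean, choosing $d_2$ as a point-rearrangement that stays Euclidean (e.g., moving the clusters farther apart rigidly, or shrinking them homothetically about their centroids), and noting that the uniform scaling by $\alpha$ preserves Euclidean realisability. Once this is in place, the argument reduces to the elementary inequality $\alpha\cdot d_2(i,j)<d_1(i,j)$ for suitable $\alpha$ and between-cluster pair $(i,j)$, completing the proof.
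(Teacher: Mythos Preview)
Your proof is correct. Both you and the paper argue by explicit example, but the paper takes a more elaborate route that serves an additional rhetorical purpose beyond the bare statement. The paper works on the real line with four points $e_1,\dots,e_4$ at $0,\,0.4,\,0.6,\,1$ under the partition $\{\{e_1\},\{e_2,e_3\},\{e_4\}\}$; it performs a \emph{strict} $\Gamma$-transform that shrinks the middle cluster and pushes $e_4$ far to the right (to $1.1$, or more drastically to $2$), and then rescales back to the unit interval. The punchline is that after this composition the middle cluster has drifted \emph{toward} $e_1$---so not only does $d_3$ fail the formal clause~(b) of the $\Gamma$-transform definition, but the very intuition behind consistency (``clusters move farther apart'') is visibly reversed. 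Your argument, by contrast, goes straight for the formal failure: take $d_2=d_1$ (or any mild strict transform) and scale by $\alpha<1$, whereupon between-cluster distances drop uniformly below their $d_1$ values. This is shorter and perfectly rigorous for the theorem as literally stated, and your remark that $\alpha>1$ handles the within-cluster side symmetrically is a nice touch; but it does not exhibit the cluster-relocation phenomenon that motivates the surrounding section.
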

\begin{proof} 
We prove the Theorem by finding a suitable example.  
let $S$ consist of four elements $e_1,e_2,e_3,e_4$ 
and let a clustering function partition it into 
$\{e_1\},\{e_2,e_3\},\{e_4\}$ under some distance function $d_1$. 
One can easily construct a distance function $d_2$ being a $\Gamma$-transform of $d_1$
such that $d_2(e_2,e_3)=d_1(e_2,e_3)$ 
and $d_2(e_1,e_2)+ d_2(e_2,e_3)=d_2(e_1,e_3)$
and $d_2(e_2,e_3)+ d_2(e_3,e_4)=d_2(e_2,e_4)$
and $d_2(e_1,e_2)+d_2(e_2,e_3)+ d_2(e_3,e_4)=d_2(e_1,e_4)$
which implies that these points under $d_2$ can be embedded 
in  the space $\mathbb{R}$ that is the straight line. 
Without restricting the generality (the qualitative illustration) assume
that the coordinates of these points in this space are
located at points $0, 0.4, 0.6, 1$ resp.
Now assume we want to perform $\Gamma$-transformation of Kleinberg (obtaining the distance function $d_3$) in such a manner that the data points remain in  $\mathbb{R}$ and move  elements of the second set i.e. $\{e_2,e_3\}$ ($d_2(e_2,e_3)=0.2$) closer to one another so that   $e_2=(0.5), e_3=(0.6)$ ($d_3(e_2,e_3)=0.1$). $e_1$ may then stay where it is but $e_4$ has to be shifted at least to $(1.1)$ (under $d_3$ the clustering function shall yield same clustering). 
Now apply rescaling into  the original interval that is multiply the coordinates (and hence the distances, yielding $d_4$) by $1/1.1$.
$e_1$ stays at (0), $e_2=(\frac{5}{11}), e_3=\frac{6}{11}, e_4=(1)$. $e_3$ is now closer to $e_1$ than before.
We could have made the things still more drastic 
by transforming $d_2$ to $d_3'$  in such a way that 
 instead of $e_4$ going to $(1.1)$, as under $d_3$, we    set it at $(2)$. 
In this case the rescaling would result in 
$e_1=(0), e_2=(0.25), e_3=(0.3), e_4=(1)$ (with the respective distances $d_4'$) which means a drastic relocation of the second cluster towards the first - the distance between clusters decreases instead of increasing as claimed by Kleinberg. 
This is a big surprise. 
The $\Gamma$ transform should have moved elements of a cluster closer together and further apart those from distinct clusters and rescaling should not disturb the proportions. It turned out to be the other way. 
This contradicts the consistency assumption. 
\end{proof}

So something is wrong either with the idea of 
 scaling or of   $\Gamma$-transformation. 
We shall be reluctant to blame the scaling, except for the 
practical case when scaling down leads to indiscernibility between points with respect to measurement errors. 

Note that we do not observe such a clash between invariance and richness. 
If a set of distance functions demonstrates the richness of a clustering function conforming to richness and scaling, 
then after scaling all these distance functions demonstrate the richness of the same clustering function again. 
Scaling does not impair the richness.

\section{Counter-intuitiveness of Consistency Axiom Alone}  \label{sec:badconsistency}

So we will consider counter-intuitiveness of consistency axiom. 
To illustrate it, recall first the fact that a large portion of known clustering algorithms uses data points embedded   in an $m$ dimensional feature space, usually $\mathbb{R}^m$ and the distance is the Euclidean distance therein.  
Now imagine that we want to perform a $\Gamma$-transform  on a single cluster of a partition that is the $\Gamma$-transform 
shall provide distances compatible with the situation that only elements of a single cluster change position in the embedding space. 
\begin{theorem} {} \label{thm:noConsistentSingleClusterGamma}
Under the above-mentioned circumstances
it is impossible to perform $\Gamma$-transform reducing distances within a single cluster.
\end{theorem}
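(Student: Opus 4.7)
The plan is to leverage the geometric rigidity that arises when all but one cluster's points are pinned down in the ambient $\mathbb{R}^m$. First I would fix notation: let $C=\{c_1,\dots,c_k\}$ with $k\ge 2$ be the cluster one wishes to contract, and let $P=\{p_1,\dots,p_s\}$ denote the union of points from the remaining clusters. A $\Gamma$-transform that strictly reduces some within-$C$ distance while moving only the elements of $C$ requires new positions $c_i'\in\mathbb{R}^m$ (with every $p_\ell$ held fixed) satisfying $\|c_i'-c_j'\|\le\|c_i-c_j\|$ for all $i,j$ with at least one strict inequality, together with $\|c_i'-p_\ell\|\ge\|c_i-p_\ell\|$ for all $i,\ell$. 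Any strict within-$C$ reduction forces some element to be displaced; after relabelling, say $c_1\mapsto c_1'\ne c_1$.

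Next, writing $v=c_1'-c_1$, the constraint $\|c_1'-p_\ell\|\ge\|c_1-p_\ell\|$ is equivalent to $c_1'$ lying outside every open ball $B(p_\ell,\|c_1-p_\ell\|)$ whose boundary already passes through $c_1$. Viewing $v$ as the tangent vector of a continuous motion starting at $c_1$, admissibility at first order requires $\langle v,p_\ell-c_1\rangle\le 0$ for every $\ell$, i.e.\ $v$ must lie in the polar cone $K_{c_1}=\{v\in\mathbb{R}^m:\langle v,p_\ell-c_1\rangle\le 0\text{ for all }\ell\}$. The heart of the argument is the observation that whenever the vectors $\{p_\ell-c_1\}_{\ell}$ positively span $\mathbb{R}^m$---which is what "$C$ being surrounded by other clusters in every ambient direction" should mean---the cone $K_{c_1}$ degenerates to $\{0\}$. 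Applied pointwise to every $c_i\in C$, this rules out any continuous contraction of $C$ that respects the $\Gamma$-transform inequalities, and hence no within-$C$ distance can be strictly reduced by such a motion.

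The hardest part will be closing the loophole that finite, non-infinitesimal displacements could in principle exist even when the polar cone is trivial, by having $c_1$ "jump over" some $p_\ell$ to the opposite side of its tangent ball; such an escape really can happen in degenerate low-dimensional arrangements, in particular when $P$ lies in a proper affine subspace of $\mathbb{R}^m$ and $c_1$ escapes along an orthogonal direction. I would therefore state the theorem under the genuinely $m$-dimensional hypothesis that, at each $c_i$, the vectors $\{p_\ell-c_i\}$ positively span $\mathbb{R}^m$, and back it up with an explicit witness---for instance $C$ encircled by a simplex of $m+1$ other-cluster points---where the surrounding hypothesis is automatic and no admissible displacement of any $c_i\in C$ exists at all. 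This is enough to vindicate the authors' informal claim that, in a typical Euclidean clustering configuration, one simply cannot shrink a cluster without perturbing the rest of the data.
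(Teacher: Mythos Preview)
Your approach mirrors the paper's almost exactly: the paper's brief proof assumes the cluster is ``internal'' (every hyperplane through a cluster point $e$ has other-cluster points on both sides, which is precisely your positive-spanning condition) and then asserts that increasing the distance from $e$ to one other-cluster point must decrease the distance to another, hedging only with ``except for strange configurations''. Your polar-cone computation is the clean first-order formalisation of that sentence, and the paper goes no further than you do.

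Where your write-up overreaches is the final paragraph. The positive-span hypothesis does \emph{not} rule out finite displacements: squaring $\|c_1'-p_\ell\|\ge\|c_1-p_\ell\|$ with $v=c_1'-c_1$ gives $\langle v,\,p_\ell-c_1\rangle\le\tfrac12\|v\|^2$, which holds for every $\ell$ once $\|v\|\ge 2\max_\ell\|p_\ell-c_1\|$. Concretely, in $\mathbb{R}^1$ with $P=\{-1,1\}$ and $C=\{0,\,0.1\}$ (a one-dimensional ``encircling simplex''), sending $C$ to $\{2,\,1.95\}$ is a legitimate $\Gamma$-transform with a strict within-$C$ reduction. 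So your claim that in the simplex witness ``no admissible displacement of any $c_i\in C$ exists at all'' is false as stated. The paper's theorem carries the same gap and is really only a statement about continuous deformations---the paper makes this explicit a few lines later when it adds the qualifier ``continuously'' to the companion outer-consistency result---so you should frame your conclusion the same way rather than asserting a blanket impossibility.
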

\begin{proof}
Assume the cluster is an "internal" one that is for a point $e$ in this cluster any hyperplane containing it has points from some other clusters on each side. 
Furthermore assume that other clusters contain together more than $m$ data points, which should not be an untypical case. 
Here the problem starts. The position of $e$ is determined by the distances from the elements of the other clusters in such a way that the increase of distance from one of them would necessarily decrease the distance to some other (except for  strange configurations), contrary to consistency requirement.  
Hence the claim 
\end{proof}

So the $\Gamma$-transform enforces either adding a new dimension and moving the affected single cluster along it (which does not seem to be quite natural) or to change positions of elements in at least two clusters within the embedding space.
Therefore vast majority of such algorithms does not meet not only the consistency but also inner consistency requirement.

\begin{theorem} {} \label{thm:noinnerConsistency}
No algorithm operating in a fix-dimensional space under Euclidean distance 
can conform to inner-consistency axiom.%  
\footnote{This impossibility does not mean that there is an inner contradiction 
when executing the inner-consistency transform. 
Rather it means that considering inner-consistency is pointless because inner-consistency transform is in general impossible.
}
\end{theorem}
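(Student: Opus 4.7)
The plan is to adapt the rigidity argument used for Theorem~\ref{thm:noConsistentSingleClusterGamma} and specialize it to the inner-consistency setting. Inner consistency demands a transformation producing a new distance function $d'$ in which some within-cluster distances strictly decrease while every between-cluster distance is preserved exactly. If we insist that $d'$ still be realizable as Euclidean distances on points in the same $\mathbb{R}^m$, then the ``between-cluster distances preserved'' condition becomes a severe rigidity constraint on the new embedding, and I will exploit that.

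First I would recall the standard fact from distance geometry that in $\mathbb{R}^m$ the position of a point is determined, up to a global rigid motion, by its distances to any set of $m+1$ affinely independent anchor points. I would then consider the generic configuration: a partition $\Gamma$ with at least two clusters such that the union of the clusters other than some chosen cluster $C$ contains at least $m+1$ affinely independent points. This is the typical situation whenever the data genuinely lives in $m$ dimensions and $C$ is not the whole data set.

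Next I would take any point $e \in C$ and any other point $e' \in C$ and observe that, since all distances from $e$ (respectively $e'$) to the $m+1$ affinely independent anchors in $\bigcup_{C''\neq C}C''$ must be preserved under an inner-consistency transform, the positions of $e$ and $e'$ in $\mathbb{R}^m$ are forced (up to a single global rigid motion fixing the anchors). Hence the Euclidean distance $d'(e,e')$ is forced to equal $d(e,e')$. Iterating over all pairs inside $C$, and then over all clusters, shows that $d' \equiv d$, contradicting the requirement that at least one within-cluster distance strictly decrease.

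Finally I would dispose of the degenerate cases where the anchor condition fails: if the other clusters together span fewer than $m$ dimensions, then the same argument applies in the affine hull they span, yielding a symmetric reflection as the only remaining freedom, which still preserves all pairwise distances and hence cannot strictly decrease any within-cluster distance. Thus in every case no Euclidean realization of $d'$ in $\mathbb{R}^m$ is available, so the inner-consistency transform simply cannot be carried out, justifying the theorem. The main obstacle will be stating the genericity assumption cleanly (affine independence of the anchors in the ambient $\mathbb{R}^m$) and explicitly ruling out the degenerate low-dimensional configurations, much as was already done in the proof of Theorem~\ref{thm:noConsistentSingleClusterGamma}.
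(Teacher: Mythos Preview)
Your proposal is correct and follows essentially the same route as the paper. In the paper, Theorem~\ref{thm:noinnerConsistency} is not given a separate proof; it is stated immediately after the rigidity argument for Theorem~\ref{thm:noConsistentSingleClusterGamma} as a direct consequence, with the remark that the $\Gamma$-transform would force either an extra dimension or motion in at least two clusters. You make this explicit by invoking the distance-geometry fact that $m+1$ affinely independent anchors pin down a point, and you add a treatment of the low-dimensional degenerate case that the paper simply dismisses as ``strange configurations.''

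One point worth tightening: inner consistency allows distances inside \emph{every} cluster to decrease, not only inside the chosen cluster $C$. Hence if several of your $m+1$ anchors lie in a common cluster $C''\neq C$, their mutual distances need not be preserved and the anchor simplex is not automatically rigid. The easy fix is to pick the anchors from pairwise distinct clusters (so all anchor--anchor distances are between-cluster and hence frozen), or to note that the complete bipartite distance constraints between $C$ and any single other cluster $C''$ with at least $m+1$ points in general position already force rigidity of both sides simultaneously. The paper's own argument is at the same informal ``generic configuration'' level and does not address this either, so your version is, if anything, more careful.
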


\begin{figure}
\centering
\includegraphics[width=0.8\textwidth]{\figaddr{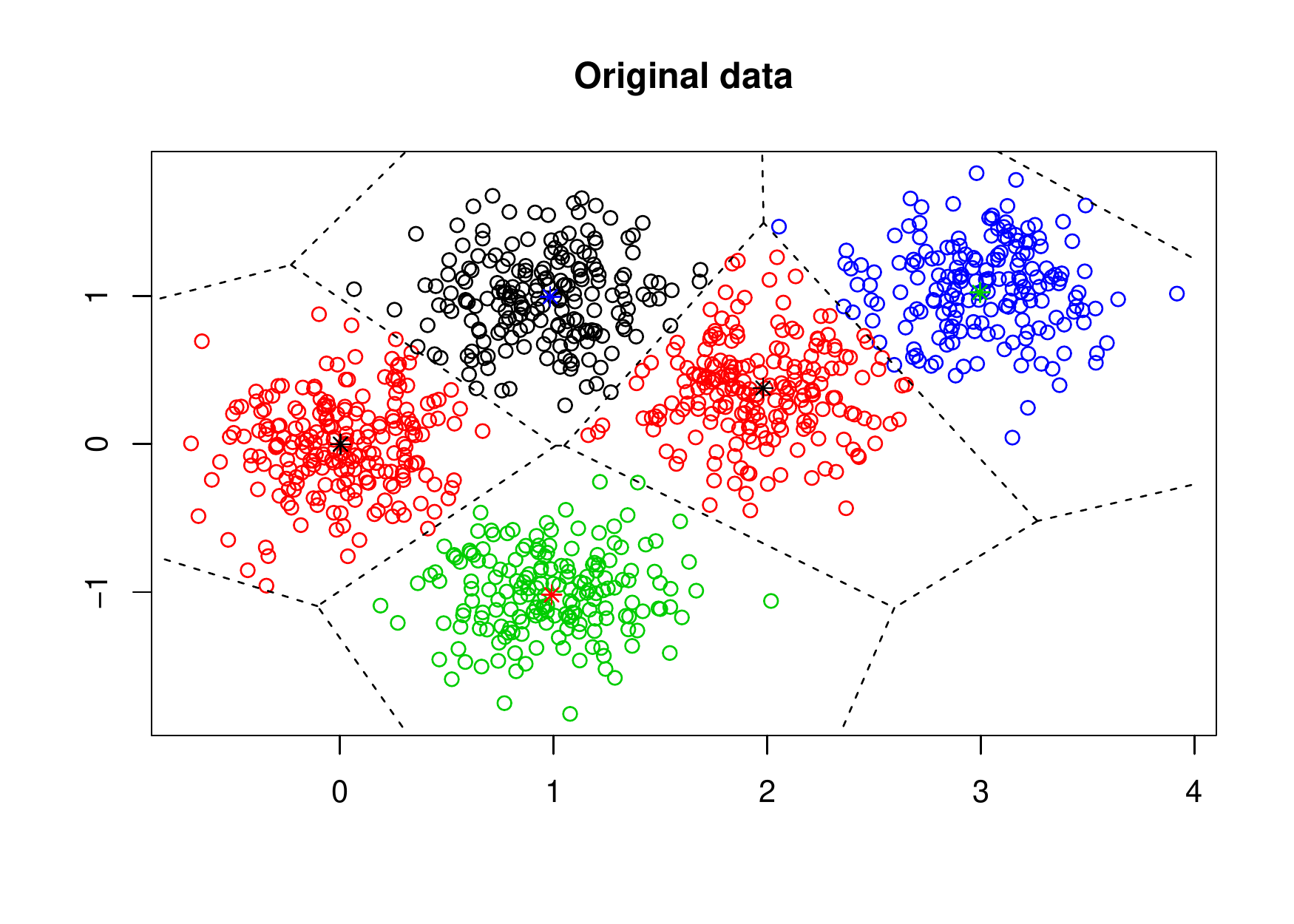}}  %
\caption{A mixture of 5 normal distributions as clustered by $k$-means algorithm (Voronoi diagram superimposed).  
}\label{fig:5clusters}
\end{figure}

Why not moving a second cluster is so problematic?
Let us illustrate the difficulties with the original Kleinberg's consistency by looking at an application of the known $k$-means algorithm, with $k$ being allowed to cover a range, not just a single value, to the two-dimensional data set visible in Figure 
\ref{fig:5clusters}\footnote{Already Ben-David \cite{Ben-David:2009} indicated problems in this direction.
}. This example is a mixture of data points sampled from  5 normal distributions. The $k$-means algorithm with $k=5$, as expected, separates quite well the points from various distributions.   
As visible from the second column of Table \ref{tab:comparison}, in fact $k=5$ does the best job in reducing the unexplained variance. 
Figure \ref{fig:5clustersKleinberg} illustrates a result of a $\Gamma$-transform on the results of the former clustering. 
Visually we would tell that now we have two clusters. 
A look into the third column of the  Table \ref{tab:comparison} convinces that really $k=2$ is the best choice for clustering these data with $k$-means algorithm. 
This of course contradicts Kleinberg's consistency axiom. 
And demonstrates the weakness of outer-consistency concept as well. 

\begin{figure}
\centering
\includegraphics[width=0.8\textwidth]{\figaddr{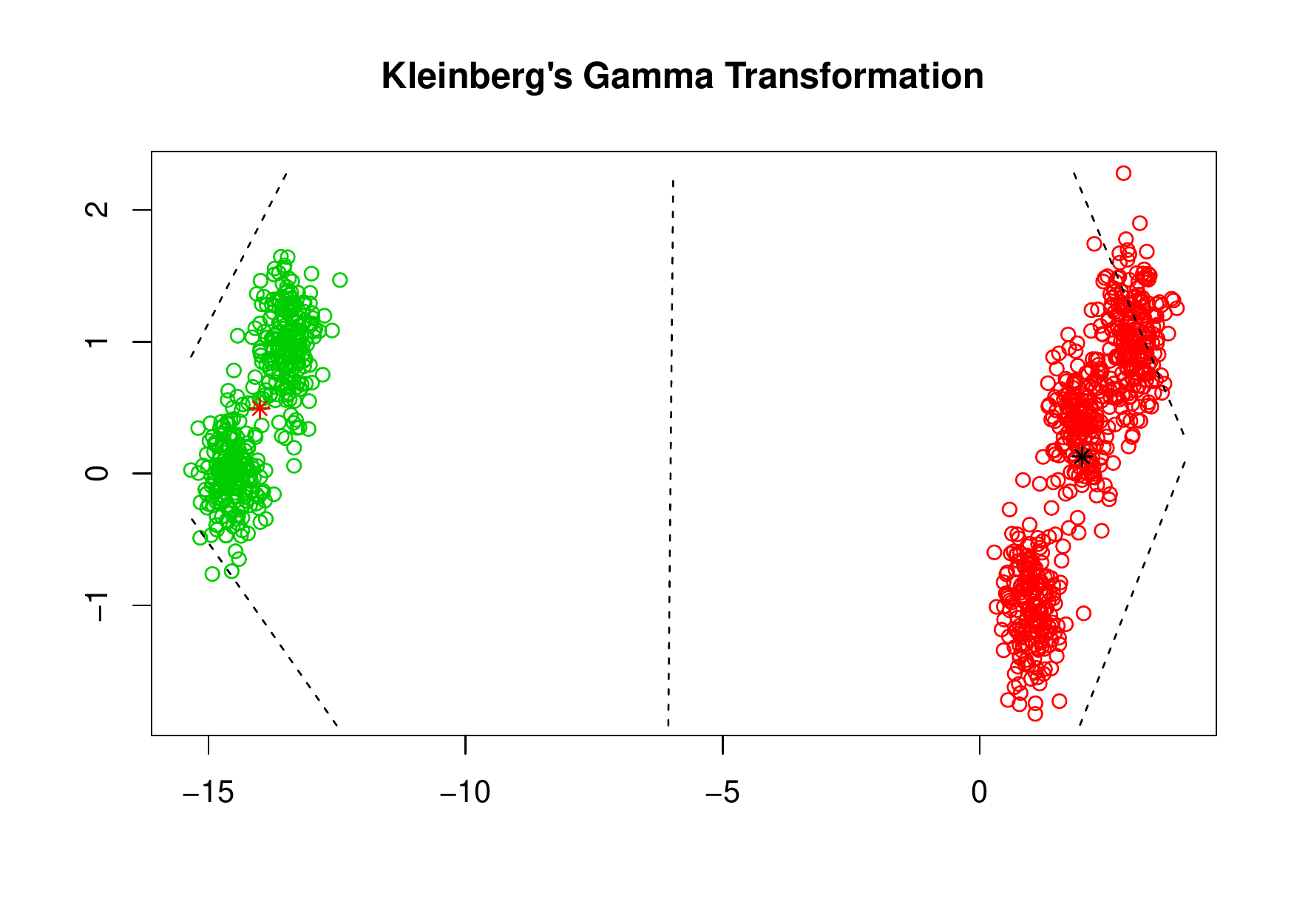}}  %
\caption{Data from 
Figure \ref{fig:5clusters} after Kleinberg's $\Gamma$-transformation 
 clustered by $k$-means algorithm  into two groups. 
}\label{fig:5clustersKleinberg}
\end{figure}

\begin{theorem} {} \label{thm:noouterConsistencykmeans}
$k$-means with $k$ allowed to range over a set of values (approximating richness) 
with limited variance increase criterion for choice of $k$ 
  operating in a fix-dimensional space under Euclidean distance 
cannot conform to outer-consistency axiom. 
\end{theorem}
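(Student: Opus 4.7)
The plan is to produce a concrete counterexample that mirrors the construction already illustrated in Figures \ref{fig:5clusters} and \ref{fig:5clustersKleinberg}, but with a rigid-translation transformation carefully engineered to satisfy outer-consistency rather than full Kleinberg consistency. The whole argument reduces to displaying one dataset and one transformation and tracking the within-cluster sum of squares as a function of a translation parameter.

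First I would construct a dataset $X \subset \mathbb{R}^2$ drawn from five tight, well-separated Gaussian components, and fix the selection rule so that on $X$ the algorithm returns the natural 5-cluster partition $\Gamma = \{C_1,\dots,C_5\}$. Any threshold-based ``limited variance increase'' rule (for example, pick the smallest $k$ such that the relative gain $(W(k{-}1)-W(k))/W(1)$ drops below some $\varepsilon$, where $W(k)$ is the optimal within-cluster sum of squares for $k$-means on $k$ clusters) can be tuned via the cluster tightness and separation so that it selects $k=5$ on $X$.

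Next I would group the five clusters into two super-groups $A = C_1 \cup C_2$ and $B = C_3 \cup C_4 \cup C_5$, separated along some axis, and define a one-parameter family of rigid translations: translate every point of $A$ by $-t\mathbf{v}$ and every point of $B$ by $+t\mathbf{v}$ for a fixed unit vector $\mathbf{v}$ aligned with the $A$/$B$ separation. The resulting distance function $d_t$ satisfies $d_t(i,j)=d(i,j)$ whenever $i,j$ lie in the same super-group (in particular whenever they lie in the same $C_\ell$), while $d_t(i,j) \to \infty$ for $i,j$ in different super-groups as $t \to \infty$. Hence $d_t$ is an outer-consistency transformation of $d$ with respect to $\Gamma$ for every $t$ large enough, since between-cluster distances within each super-group are unchanged and between-cluster distances across super-groups strictly increase.

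Now I would track $W_t(k)$ under $d_t$. Because rigid translation preserves pairwise distances inside each super-group, $W_t(5) = W(5)$ and $W_t(2) \le W(5) + C_0$ (with $C_0$ the contribution of between-$C_i$, within-super-group pairs, independent of $t$); however $W_t(1) = \Theta(t^2)$ grows quadratically. Consequently $(W_t(1)-W_t(2))/W_t(1) \to 1$ and $(W_t(k{-}1)-W_t(k))/W_t(1) \to 0$ for every $k \ge 3$ as $t \to \infty$. So for $t$ sufficiently large the elbow-type criterion must pick $k=2$, and the optimal $2$-means partition is $\{A,B\} \ne \Gamma$. This exhibits a $d$ with $f(d)=\Gamma$ and an outer-consistency transformation $d_t$ with $f(d_t)\ne\Gamma$, proving the theorem.

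The main obstacle is not the geometry but making the informal ``limited variance increase criterion'' precise enough to argue about. I would handle this by fixing one concrete threshold rule (as above) and tuning the Gaussian spreads so that before the transformation the criterion genuinely lands on $k=5$, then use the $\Theta(t^2)$ blow-up of $W_t(1)$ to force the flip to $k=2$. Any reasonable elbow/gap/relative-drop variant behaves the same way under this rescaling-by-translation argument, so the counterexample is robust to the exact formalization of the criterion.
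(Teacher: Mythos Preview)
Your proposal is correct and follows essentially the same approach as the paper: the paper establishes the theorem by the illustrative five-Gaussian example of Figures~\ref{fig:5clusters}--\ref{fig:5clustersKleinberg} and Table~\ref{tab:comparison}, where moving the five clusters apart into two super-groups causes the variance-increase criterion to prefer $k=2$ over $k=5$. Your write-up is in fact more careful than the paper on two points: you use a rigid translation of the two super-groups so that within-cluster distances are \emph{exactly} preserved (guaranteeing the transformation is genuinely outer-consistent rather than merely Kleinberg-consistent), and you replace the paper's numerical table by the clean asymptotic argument $W_t(1)=\Theta(t^2)$ while $W_t(k)$ stays bounded for $k\ge 2$.
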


And finally have a look at Figure \ref{fig:5clusters} once again. 
If we ignore the most right cluster, it turns out that each cluster has points being "surrounded" by points in other clusters. 
Therefore, it is not possible to move a cluster by infinitely small distance without decreasing distances of some different clusters. 
Therefore  

\begin{theorem} {} \label{thm:noouterConsistency}
No algorithm operating in a fix-dimensional space under Euclidean distance 
can conform continuously to outer-consistency axiom. 
\end{theorem}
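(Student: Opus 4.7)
The plan is to adapt the interior-point reasoning driving Theorem \ref{thm:noConsistentSingleClusterGamma} to the outer-consistency setting, where within-cluster distances must now be preserved exactly. I interpret ``continuously'' as follows: a continuous outer-consistency transformation is a smooth one-parameter family of configurations $\mathbf{X}(t)\subset \mathbb{R}^m$, $t\in[0,\varepsilon)$, with $\mathbf{X}(0)=\mathbf{X}$, such that within-cluster distances are preserved identically, all between-cluster distances are non-decreasing in $t$, and at least one between-cluster distance is strictly larger at some $t_0>0$. Preservation of within-cluster distances forces each cluster $C_j$ to move as a rigid body, so every member follows a smooth curve $x_p(t)=\sigma_j(t)(x_p)$ of Euclidean isometries with $\sigma_j(0)=\mathrm{Id}$.

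First I would reduce to a minimal ``surrounded'' configuration: take $C_1=\{x\}$ together with a second cluster $C_2=\{y_0,\dots,y_m\}$ whose convex hull contains $x$ in its strict interior. The planar Figure \ref{fig:5clusters} instance already provides this after ignoring the rightmost cluster; a simplex-based construction works in any $\mathbb{R}^m$. By composing the whole evolution with $\sigma_1(t)^{-1}$, I may assume $\sigma_1(t)\equiv\mathrm{Id}$, so $x$ stays fixed and only $C_2$ moves. Expanding $\sigma_2(t)(y)=y+t\,\Xi(y)+o(t)$ with $\Xi(y)=\tau+\Omega y$ an infinitesimal isometry ($\Omega$ antisymmetric, $\tau\in\mathbb{R}^m$), I obtain
$$\|x-\sigma_2(t)(y_k)\|^2=\|x-y_k\|^2-2t(x-y_k)\cdot\Xi(y_k)+o(t),$$
so outer-consistency forces $(y_k-x)\cdot\Xi(y_k)\ge 0$ for every $k$, with strict inequality for at least one $k$.

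The key cancellation then comes from the interior-point assumption. Writing $x=\sum_k\lambda_k y_k$ with $\lambda_k>0$ and $\sum_k\lambda_k=1$, the weighted sum
$$\sum_k\lambda_k(y_k-x)\cdot\Xi(y_k)=\sum_k\lambda_k y_k\cdot(\tau+\Omega y_k)-x\cdot\sum_k\lambda_k\Xi(y_k)$$
collapses, via $\sum_k\lambda_k y_k=x$, $\sum_k\lambda_k=1$, and the antisymmetry identity $y\cdot\Omega y=0$, to $\tau\cdot x-x\cdot\tau=0$. On the other hand, since every summand on the left-hand side is non-negative with at least one strictly positive and all $\lambda_k$ are strictly positive, the same weighted sum is strictly greater than zero; this contradicts the computed value $0$. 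Consequently no nontrivial continuous family $\sigma_2(t)$ can satisfy outer-consistency against the fixed cluster $\{x\}$.

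I expect the main obstacle to be the degenerate case in which all first-order derivatives of between-cluster distances vanish identically along the family, so that the strict increase occurs only at higher order in $t$. The plan to handle this is to push the same cancellation identity to the next nonvanishing Taylor order using the rigidity constraint, or equivalently to invoke real analyticity of smooth one-parameter subgroups of Euclidean isometries: once the full first-order term in the weighted sum vanishes while every summand is non-negative, each summand must vanish individually, and this constraint recursively forces $\sigma_2(t)\equiv\mathrm{Id}$. The multi-cluster case of Figure \ref{fig:5clusters} reduces to the two-cluster one by selecting any cluster $C_i$ containing a point interior to $\mathrm{conv}(\mathbf{X}\setminus C_i)$ and applying the argument cluster-by-cluster: the cancellation goes through verbatim because each other cluster moves by its own rigid motion and the antisymmetric-rotation identity applies to each $\Omega_j$ independently.
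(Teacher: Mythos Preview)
Your argument is far more rigorous than the paper's own, which is literally a one-sentence observation preceding the theorem: looking at Figure~\ref{fig:5clusters}, each cluster (ignoring the rightmost) contains points ``surrounded'' by points of other clusters, so no infinitesimal move is possible without shrinking some inter-cluster distance. No computation is given. Your formalisation via rigid motions and the convex-combination cancellation is the right way to make that observation precise, and your first-order two-cluster computation is correct: with $x=0$ and $\sum_k\lambda_k y_k=0$, the weighted sum collapses to $0$ exactly as you wrote, forcing all first-order increments to vanish.

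There is, however, a genuine gap in your multi-cluster reduction. You assert that ``the cancellation goes through verbatim'' when the surrounding points come from several clusters $C_j$, each moving by its own $\Xi_j(y)=\tau_j+\Omega_j y$. The rotation terms do vanish individually via $y\cdot\Omega_j y=0$, but the translation terms do not: with $x=0$ one gets
\[
\sum_{j,k}\lambda_{jk}(y_{jk}-x)\cdot\Xi_j(y_{jk})
=\sum_j \tau_j\cdot\Bigl(\sum_k\lambda_{jk}\,y_{jk}\Bigr),
\]
and only the \emph{total} barycenter $\sum_{j,k}\lambda_{jk}y_{jk}$ equals $x=0$, not each partial barycenter $\sum_k\lambda_{jk}y_{jk}$ separately. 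With distinct $\tau_j$ this sum need not vanish, so the contradiction does not follow. The easy repair is to strengthen the geometric hypothesis to ``$x$ lies in the interior of the convex hull of a \emph{single} other cluster $C_j$'' (which is what actually happens in Figure~\ref{fig:5clusters} for adjacent Gaussians), so that only one $\tau_j$ appears; alternatively, simply drop the multi-cluster paragraph, since your two-cluster simplex example already exhibits a clustering admitting no nontrivial continuous outer-consistency transform, which is all the theorem asserts.

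Your treatment of the degenerate higher-order case is adequate for the paper's level of rigour. A cleaner route than order-by-order analysis is the exact identity (with $x=0$)
\[
\sum_k\lambda_k\bigl(\|y_k(t)\|^2-\|y_k\|^2\bigr)=\|b(t)\|^2,
\]
valid for any rigid motion $\sigma_2(t)(y)=R(t)y+b(t)$; nontriviality forces $b(t_0)\ne 0$, and then the individual constraints $2(R(t)y_k)\cdot b(t)\ge -\|b(t)\|^2$ together with the spanning property of the $y_k$ yield the contradiction for all small $t$ directly, without invoking one-parameter subgroups.
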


This theorem contradicts  apparently \cite{Ackerman:2010NIPS}  claim that $k$-means possesses the property of outer-consistency. 
The key word in this theorem is however "continuously" in strict conjunction with "Euclidean distance". It is the embedding into the Euclidean space that causes the problem.

\section{Problems of Richness  Axiom}  \label{sec:badrichness}

As already mentioned, richness or near-richness forces the introduction of "refinement-consistency" which is a too weak concept. 
But even if we allow for such a resolution of the contradiction in Kleinberg's framework, it still does not make it suitable for practical purposes. 
The most serious drawback of Kleinberg's axioms is the richness requirement.

But we may ask whether or not it is possible 
to have richness, that is for any partition there exists always a distance function that the clustering function will return this partition, and yet
if we restrict ourselves to $\mathbb{R}^m$, the very same clustering function is not rich any more, or even it is not anti-chain. 

Consider the following clustering function $f()$. 
If it takes a distance function $d()$ that takes on only two distinct values $d_1$ and $d_2$ such that $d_1<0.5 d_2$ and for any three data points $a,b,c$ if $d(a,b)=d_1, d(b,c)=d_1$ then $d(a,c)=d_1$, it creates clusters of points in such a way that $a,b$ belong to the same cluster if and only if $d(a,b)=d_1$, and otherwise they belong to distinct clusters. If on the other hand $f()$ takes a distance function not exhibiting this property, it works like $k$-means. Obviously, function $f()$ is rich, but at the same time, if confined to $\mathbb{R}^m$, if $n>m+1$ and $k\ll n$, then it is not rich -- it is in fact $k$-rich, and hence not anti-chain. 

Can we get around the problems of  all three Kleinberg's axioms in a similar way in $\mathbb{R}^m$?
Regrettably, 

\begin{theorem}{} \label{thm:KleinbergImpossibilityInRm}
If $\Gamma$ is a partition of $n>2$ elements returned by 
a clustering function $f$ under some distance function $d$,
and  
 $f$
 satisfies 
Consistency, 
then there exists a distance function $d_E$ embedded in 
$\mathbb{R}^m$ for the same set of elements such that 
$\Gamma$ is the partition of this set under $d_E$.
\end{theorem}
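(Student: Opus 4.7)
The plan is to use the Consistency property to reduce the theorem to a purely geometric exercise. If I can exhibit any distance function $d_E$ that (a) is a $\Gamma$-transform of $d$, and (b) coincides with the Euclidean distances of some $n$-point configuration in $\mathbb{R}^m$, then Consistency forces $f(d_E)=f(d)=\Gamma$, while $d_E$ is embedded in $\mathbb{R}^m$ by construction. The whole task therefore reduces to building such a ``Euclidean-realisable'' $\Gamma$-transform of $d$, and no delicate argument about the internal workings of $f$ is needed.

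To construct $d_E$, I would first extract two scalar thresholds from $d$ and $\Gamma$:
\[
 \epsilon \;=\; \min\bigl\{\, d(i,j) : i\neq j,\ i,j \text{ in the same cluster of } \Gamma\,\bigr\},
\qquad
 M \;=\; \max\bigl\{\, d(i,j) : i,j \text{ in distinct clusters of } \Gamma\,\bigr\},
\]
noting $\epsilon>0$ by Kleinberg's Definition~\ref{def:KleinbergDistance}. Fix some $\eta$ with $0<\eta<\epsilon$ and, in $\mathbb{R}^m$ (any $m\ge 1$ works, since placing everything on a single line already suffices), choose $k$ ``cluster anchors'' $\mu_1,\dots,\mu_k$ with pairwise Euclidean distance strictly larger than $M+\eta$. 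For each cluster $C_\ell$ of $\Gamma$, place its $|C_\ell|$ members at \emph{distinct} positions inside the ball of radius $\eta/2$ around $\mu_\ell$, and let $d_E$ denote the Euclidean distance function induced by this point configuration.

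The $\Gamma$-transform conditions then reduce to two one-line estimates. For distinct $i,j$ belonging to the same cluster one has $d_E(i,j)\le \eta<\epsilon\le d(i,j)$, and for $i,j$ in different clusters the reverse triangle inequality gives $d_E(i,j) > (M+\eta)-\eta = M\ge d(i,j)$. Both inequalities are exactly what Kleinberg's Consistency demands of a $\Gamma$-transform, so the Consistency property of $f$ yields $f(d_E)=\Gamma$, finishing the proof. The one step that needs real care is coordinating the anchor separation with the intra-cluster radius so that between-cluster distances cannot collapse below $M$ while within-cluster distances stay strictly below $\epsilon$; this is the main obstacle in the plan, but it is purely a matter of consistently picking the two scalars $\eta$ and the anchor gap. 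Placing the intra-cluster points at distinct locations, which is trivial for $m\ge 1$, guarantees that $d_E$ is a proper Kleinberg distance function, completing the construction.
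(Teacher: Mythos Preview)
Your proposal is correct and follows essentially the same approach as the paper: both proofs pick a radius controlled by the minimum intra-cluster distance under $d$, place each cluster inside a small ball of that radius, separate the balls by more than the maximum inter-cluster distance under $d$, and then verify that the resulting Euclidean metric is a $\Gamma$-transform of $d$ so that Consistency forces $f(d_E)=\Gamma$. The only cosmetic differences are that the paper uses cluster-specific radii $r_i=\tfrac12\min_{x\ne y\in C_i}d(x,y)$ and a sequential nested-ball placement of the anchors, whereas you use a single uniform $\eta$ and a direct pairwise separation $>M+\eta$; neither choice affects the argument.
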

The consequence of this theorem is of course that the constructs of contradiction of Kleinberg axioms are simply transposed from the domain of any distance functions to distance functions in $\mathbb{R}^m$. 

\begin{proof}
To show the validity of the theorem, we will construct the appropriate distance function 
$d_E$ by embedding in the $\mathbb{R}^m$.
Let $dmax$ be the maximum distance between the considered elements under $d$. 
Let $C_1,\dots,C_k$ be all the clusters contained in $\Gamma$. 
For each cluster $C_i$ we construct a ball $B_i$ with radius $r_i$ equal to 
$r_i=\frac12 \min_{x,y\in C_i, x\ne y} d(x,y)$.
The ball $B_1$ will be located in the origin of the coordinate system.
$B_{1,\dots,i}$  be the ball of containing all the balls $B_1,\dots,B_i$.
Its center be at $c_{1,\dots,i}$ and radius $r_{1,\dots,i}$. 
The ball $B_{i}$ will be located on the surface of the ball 
with center at $c_{1,\dots,i-1}$ and radius $r_{1\dots,i-1}+dmax+r_{i}$.
For each $i=1,\dots,k$ select distinct locations for elements of $C_i$ within the ball $B_i$. 
The distance function $d_E$ define as the Euclidean distances within $\mathbb{R}^m$  in these constructed locations. 

Apparently, $d_E$ is a $\Gamma$-transform of $d$, as distances between 
elements of $C_i$ are smaller than  or equal to  $2 r_{i}=\min_{x,y\in C_i, x\ne y} d(x,y)$, 
and the distances between elements of different balls exceed 
$dmax$. 
\end{proof}

\begin{table}
\centering
\caption{Variance explained (in percent)
when applying $k$-means algorithm with 
$k=2,\dots,6$ to data from Figures 
\ref{fig:5clusters} (Original),  
\ref{fig:5clustersKleinberg} (Kleinberg)   and 
\ref{fig:5clustersKlopotek} (Centric)  }
\label{tab:comparison}
\begin{tabular}{llll}
\hline 
$k$ & Original & Kleinberg & Centralized \\
\hline 
2 & 54.3 & 98.0 &  54.9\\ 
 3 & 72.2 &   99.17 &   74.3 \\ 
 4  & 83.5 & 99.4 & 86.0\\
 5  & 90.2 &  99.7& 92.9\\
 6 &  91.0 &  99.7& 93.6 \\ 
\hline 
\end{tabular}
\end{table}

But richness is not only a problem in conjunction with scale-invariance and consistency, but rather it is a problem by itself.

It has to be stated first that richness is easy to achieve. 
Imagine the following 'clustering function".
You order nodes by average distance to other nodes, on tights on squared distance and so on, and if no sorting can be achieved, the unsortable points are set into one cluster. 
Then we create an enumeration of all clusters and map it onto unit line segment.
Then we take the quotient of the lowest distance to the largest distance and state that this quotient mapped to that line segment identifies the optimal clustering of the points. 
Though the algorithm is simple in principle (and useless also), and meets axioms of richness and scale -invariance, we have a practical problem:
As no other limitations are imposed, one has to check 
up to 
$ \sum_{k=2}^n
\frac{1}{k!}\sum_{j=1}^{k}(-1)^{k-j}\Big(\begin{array}{l}k\\j\end{array}\Big)j^n \label{CLU:eq-liczebnosc} 
$
possible partitions (Bell number) in order to verify which one of them is the best for a given distance function 
because there must exist at least one distance function suitable for each of them. 
This is prohibitive and cannot be done in reasonable time even if each check is polynomial (even linear) in the dimensions of the task ($n$).

Furthermore, most algorithms of cluster analysis are constructed in an incremental way.
But this can be useless if the clustering quality function is designed in a very unfriendly way.
For example as an XOR function over logical functions of class member distances and non-class member distances
(e.g. being true if the distance rounded to an integer is odd between class members and divisible by a prime number for distances between class members and non-class members, or the same with respect to class center or medoid). 

\begin{table}
\centering
\caption{Data points to be clustered using a ridiculous clustering quality function  }
\label{tab:richnessdata}
\begin{tabular}{lll}
\hline 
id & $x$ coordinate  & $y$ coordinate   \\
\hline 
 1& 4.022346 &5.142886	\\
  2& 3.745942& 4.646777	\\
  3& 4.442992& 5.164956	\\
  4& 3.616975& 5.188107	\\
  5& 3.807503& 5.010183	\\
  6& 4.169602& 4.874328	\\
  7& 3.557578& 5.248182	\\
  8& 3.876208& 4.507264	\\
  9& 4.102748& 5.073515	\\
10& 3.895329& 4.878176\\ 
\hline 
\end{tabular}
\end{table}

Just have a look at sample data from Table \ref{tab:richnessdata}.
A cluster quality function was invented along the above line and exact quality value was computed for partitioning first $n$ points from this data set as illustrated in Table 
\ref{tab:bestpartitionrichnessdata}.
It turns out that the best partition 
for $n$ points does not give any hint for the best partition for $n+1$ points 
therefore each possible partition needs to be investigated in order to find the best one.% 
\footnote{
Strict separation   \cite{Blum:2009} mentioned earlier is another kind of 
a weird cluster quality function, requiring visits to all the partitions 
}

\begin{table}
\centering
\caption{Partition of the best quality 
(the lower the value the better)
after including $n$ first points from Table \ref{tab:richnessdata}.   }
\label{tab:bestpartitionrichnessdata}
\begin{tabular}{llp{5.5cm}}
\hline
$n$ & quality & partition \\
\hline 
  2&1270  & \{ 1, 2 \}\\                                  
  3&1270  & \{ 1, 2 \}  \{ 3 \}\\                           
  4&823   &\{ 1, 3, 4 \}  \{ 2 \}\\                         
  5&315   &\{ 1, 4 \}  \{ 2, 3, 5 \}\\                      
  6&13   &\{ 1, 5 \}  \{ 2, 4, 6 \}  \{ 3 \}\\                
  7&3   &\{ 1, 6 \}  \{ 2, 7 \}  \{ 3, 5 \}  \{ 4 \}\\          
  8&2   &\{ 1, 2, 4, 5, 6, 8 \}  \{ 3 \}  \{ 7 \}\\           
  9&1   &\{ 1, 2, 4, 5 \}  \{ 3, 8 \}  \{ 6, 9 \}  \{ 7 \}\\    
10&1   &\{ 1, 2, 3, 5, 9 \}  \{ 4, 6 \}  \{ 7, 10 \}  \{ 8 \}\\
\end{tabular}
\end{table}

Summarizing these examples, 
the learnability theory points at two basic weaknesses of the richness or even near-richness axioms.
On the one hand the hypothesis space is too big for learning 
a clustering from a sample (it grows too quickly  with the sample size).   
On the other hand an exhaustive search in this space is prohibitive sop that some theoretical clustering functions do not make practical sense.

There is one more problem. 
If the clustering function can fit any data, we are practically unable to learn any structure of data space from data \cite{MAK0:1991}. And this learning capability is necessary at least in the cases: either when  
the data may be only representatives of a larger population 
or the distances are measured with some measurement error (either systematic or random) or both. 
Note that we speak here about a much broader aspect than so-called cluster stability or cluster validity,   pointed at by Luxburg \cite{Luxburg:2011,Luxburg:2009}.

\section{Correcting Formalization of Kleinberg Axioms} \label{sec:KleinbergNotBad}

It is obvious that richness axiom of Kleinberg needs to be replaced 
with a requirement of the space of hypotheses to be "large enough".
For $k$-means algorithm it has been shown via Theorem \ref{th:krichnesskmeans} that $k$-richness is satisfied (and the space is still large, a Bell number of partitions to choose from). 
$k$-means satisfies the scale-invariance axiom, so that 
only the consistency axiom needs to be adjusted to be more realistic. 

Therefore a meaningful redefinition of Kleinberg's $\Gamma$-transform is urgently needed. 
It must not be annihilated by scaling and it must be executable.

Let us create for $\mathbb{R}$ a working definition of   the $\Gamma^*$ transform as follows:
Distances in only one cluster X are changed by 
moving a point along the axis connecting it to cluster X center 
reducing them within the cluster X by the same factor, the distances between any elements outside the cluster X are kept  
[as well as to the gravity center of the cluster X]\footnote{Obviously, 
  for any element outside the cluster X the distance to the closest element of X before the transform will not be smaller than its distance to the closest element of X after the transform. 
Note the shift of attention. We do not insist any longer that the distance to each element of other cluster is increased, rather only the distance to the cluster as a "whole" shall increase. 
This is by the way a stronger version of  inner-consistency which would be insufficient for our purposes.
}

Consider the following one-dimensional clustering function:
For a set of $n\ge 2$ points two elements belong to the same cluster if their distance is strictly lower than $\frac{1}{n+1}$ of the largest distance between the elements. 
When $a,b$ belong to the same cluster and $b,c$ belong to the same cluster, then  $a,c$ belong to the same cluster. 
As a consequence, the minimum distance between elements of distinct clusters is $\frac{1}{n+1}$ of the largest distance between the elements of $S$.
It is easily seen that the weakened richness is fulfilled. 
The scale-invariance is granted by the relativity of inter-cluster distance. And the  consistency under redefined $\Gamma$ transform holds also. 
In this way all three axioms hold. 

A generalization to an Euclidean space of higher dimensionality seems to be quite obvious if there are no ties on distances (the exist one pair of points the distance between which is unique and largest among distances\footnote{otherwise some tie breaking measures have to be taken that would break the any symmetry and allow to choose a unique direction}). 
We embed the points in the space, 
and then say that two points belong to the same cluster 
if the distance along each of the dimensions is lower than 
$\frac{1}{n+1}$ of the largest distance between the elements along the respective dimension. 
 The distance is then understood as the maximum of distances along all dimensions. 

\begin{definition}
Let $\Gamma$ be a partition embedded in $\mathbb{R}^m$. 
Let $C\in \Gamma$ and let $\boldsymbol\mu_c$ be the center of the cluster $C$.
We say that we execute the \emph{$\Gamma^*$} transform (or a centric consistency  transformation)  
if for some $0<\lambda\le 1$ 
we create a set $C'$ with cardinality identical with $C$
 such that for each element  $\textbf{x}\in C$ there exists 
 $\textbf{x'}\in C'$  such that $\textbf{x'}=\boldsymbol\mu_c+\lambda(\textbf{x}-\boldsymbol\mu_c)$,
and then substitute $C$ in $\Gamma$ with $C'$. 
\end{definition} 
 
\begin{ax}\label{ax:centricconsistency}
A method matches the condition of \emph{centric consistency}
if after a $\Gamma^*$ transform it returns the same partition. 
\end{ax}

Hence 
\begin{theorem}{} \label{thm:KleinbergImpossibilityDenied} 
For
each
$n\ge 2$,
there exists   clustering  function 
 $f$
that satisfies
Scale-Invariance,
 near-Richness,
and
  Centric-Consistency\footnote{
Any algorithm being  consistent
 is also refinement-consistent. 
Any algorithm being  inner-consistent
 is also consistent. 
Any algorithm being  outer-consistent
 is also consistent. 
But there are no such subsumptions for the centric-consistency.
}.
\end{theorem}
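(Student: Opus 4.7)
The plan is to exhibit an explicit clustering function that satisfies all three properties, following the idea sketched in the paragraph preceding the theorem. Work in $\mathbb{R}$ (one spatial dimension is enough for an existence statement). For a set $S$ of $n\ge 2$ points carrying a distance $d$, put $D_{\max}:=\max_{a,b\in S}d(a,b)$, declare two points to be joined by an edge iff $d(a,b)<D_{\max}/(n+1)$, and let $f(d)$ be the partition into connected components of the resulting graph.

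Scale-invariance is immediate: under $d'=\alpha d$ with $\alpha>0$, both sides of the edge inequality scale by the same factor, so the edge graph, and hence $f(\alpha d)$, coincides with $f(d)$. For near-richness, given any target partition $\Gamma=\{C_1,\dots,C_k\}$ of $S$ with $k\ge 2$, place the elements of $C_i$ inside the interval $[iM,iM+1]$ for $M:=n+3$. Within-cluster distances are then at most $1$, between-cluster distances at least $M-1$, and a routine arithmetic check shows that $1<D_{\max}/(n+1)\le M-1$ for every admissible $k$, so $f(d)=\Gamma$. The only partition $f$ cannot produce is the all-in-one partition (since the threshold is always strictly less than $D_{\max}$), which is exactly what near-richness excludes.

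The main work is centric-consistency. Let $\Gamma=f(d)$ and apply the $\Gamma^*$-transform to some $C\in\Gamma$ with factor $\lambda\in(0,1]$ to obtain a new distance $d'$. Two geometric observations in $\mathbb{R}$ drive the argument: (i) each cluster of $\Gamma$ is a contiguous interval $[L,R]$, because an external point lying in $(L,R)$ would be dragged into $C$ by the chain of edges that spans $C$; (ii) shrinking $C$ toward its centroid $\mu_c\in(L,R)$ moves every element of $C$ weakly toward $\mu_c$ and hence away from any external point, so $\min_{x\in C'}d'(x,b)\ge \min_{x\in C}d(x,b)$ for every external $b$. Combining (i) with the presence of an external point (since $k\ge 2$) and a one-line triangle-inequality bound yields the key estimate $D'_{\max}\ge D_{\max}-(1-\lambda)r_C$ with $r_C:=\max(\mu_c-L,R-\mu_c)<D_{\max}$, and hence $D'_{\max}>\lambda D_{\max}$ strictly. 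With this estimate one audits the three kinds of changes to the edge graph: intra-$C$ edges shrink to $\lambda d(x,y)<\lambda D_{\max}/(n+1)\le D'_{\max}/(n+1)$ and survive; edges among points outside $C$ are unchanged while the new threshold is weakly smaller, so they also survive; and inter-cluster non-edges with one endpoint in $C$ satisfy $d'(a,b)\ge\min_{x\in C'}d'(x,b)\ge\min_{x\in C}d(x,b)\ge D_{\max}/(n+1)\ge D'_{\max}/(n+1)$ by (ii), while inter-cluster non-edges with neither endpoint in $C$ face an unchanged distance against a weakly smaller threshold. Hence the connected components of the edge graph coincide before and after, and $f(d')=\Gamma$.

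The main obstacle is the bookkeeping in the centric-consistency step: when $D_{\max}$ is realized by a pair involving $C$, the transform decreases $D_{\max}$, and the new threshold $D'_{\max}/(n+1)$ against which the shrunken intra-$C$ chain edges must compete is correspondingly smaller. The estimate $D'_{\max}>\lambda D_{\max}$, obtained via the strict inequality $r_C<D_{\max}$ guaranteed by the existence of at least one external point, is precisely what keeps the strict inequality alive for those edges; without it, chains inside $C$ could silently snap and $C$ would fragment under $f(d')$.
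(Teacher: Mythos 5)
Your construction is the same one the paper uses (the paper's own proof is a single sentence pointing back at the threshold/connected-components function), and your treatment of scale-invariance, near-richness, the contiguity of clusters in $\mathbb{R}$, and the non-edge cases is fine. The gap is in the second item of your edge audit: ``edges among points outside $C$ are unchanged while the new threshold is weakly smaller, so they also survive.'' This inference runs in the wrong direction. An edge is a pair with $d(a,b)<D_{\max}/(n+1)$, a \emph{strict upper} bound; if the threshold decreases (which happens exactly when the shrunk cluster $C$ contains a global extreme point of the data, so that $D'_{\max}<D_{\max}$), an unchanged distance that was just under the old threshold can land at or above the new one, and the edge disappears. Your estimate $D'_{\max}>\lambda D_{\max}$ rescues the intra-$C$ edges, whose lengths also shrink by the factor $\lambda$, but it gives nothing for edges in \emph{other} clusters, whose lengths do not shrink at all; for those you would need $D'_{\max}\ge D_{\max}$, which is false in general. (Your observation (ii) is also misstated --- points of $C$ on the far side of $\mu_c$ move \emph{toward} an external point --- but the min-distance inequality you actually use is correct, so that is only a cosmetic issue.)

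The gap is not repairable for this function: it genuinely fails centric consistency. Take $n=6$ points on the line, $x_1=0$, $x_2=0.999$ and $x_3=4.003$, $x_4=5.002$, $x_5=6.001$, $x_6=7$. Then $D_{\max}=7$, the threshold is $1$, and $f(d)=\{\{x_1,x_2\},\{x_3,\dots,x_6\}\}$ since all consecutive gaps inside each group are $0.999<1$ while the groups are $3.004$ apart. Apply the $\Gamma^*$ transform to the second cluster with $\lambda=0.1$: its centroid is $5.5015$, its new rightmost point is $5.65135$, so $D'_{\max}=5.65135$ and the new threshold is $\approx 0.807$. The untouched edge $(x_1,x_2)$ of length $0.999$ now exceeds the threshold, so $\{x_1,x_2\}$ fragments into singletons and $f(d')\ne f(d)$. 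Note that $D'_{\max}>\lambda D_{\max}=0.7$ holds here, confirming that your key estimate is true but insufficient. Since the paper's proof of this theorem consists only of the assertion that ``the above-mentioned clustering function is the proof,'' it offers no way around this; establishing the theorem would require either a different witness function (one whose edge threshold is not tied to a quantity that the $\Gamma^*$ transform can decrease) or an additional restriction on which clusters may be shrunk.
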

\begin{proof}
The above-mentioned clustering function is the proof of validity of this theorem. 
\end{proof}
This way of resolving Kleinberg's contradictions differs from earlier approaches in that 
a realistic embedding into an $\mathbb{R}^m$ is considered and the distances are metric.

We created herewith  the possibility of shrinking a single cluster without having to "move" the other ones. 
As pointed out, this was impossible under Kleinberg's $\Gamma$ transform, that is under increase of all distances between objects from distinct clusters. 
In fact intuitively we do not want the objects to be more distant but rather the clusters. 
We proposed to keep the cluster centroid unchanged while decreasing distances between cluster elements proportionally, insisting that no distance of other elements to the closest element of the shrunk cluster should decrease. 
This approach is pretty rigid. 
It assumes that we are capable to embed the objects into some Euclidean space so that the centroid has a meaning. 

\section{$k$-means fitting centric-consistency axiom}
\label{sec:kmeanscentricconsistent}

\begin{figure}
\centering
\includegraphics[width=0.8\textwidth]{\figaddr{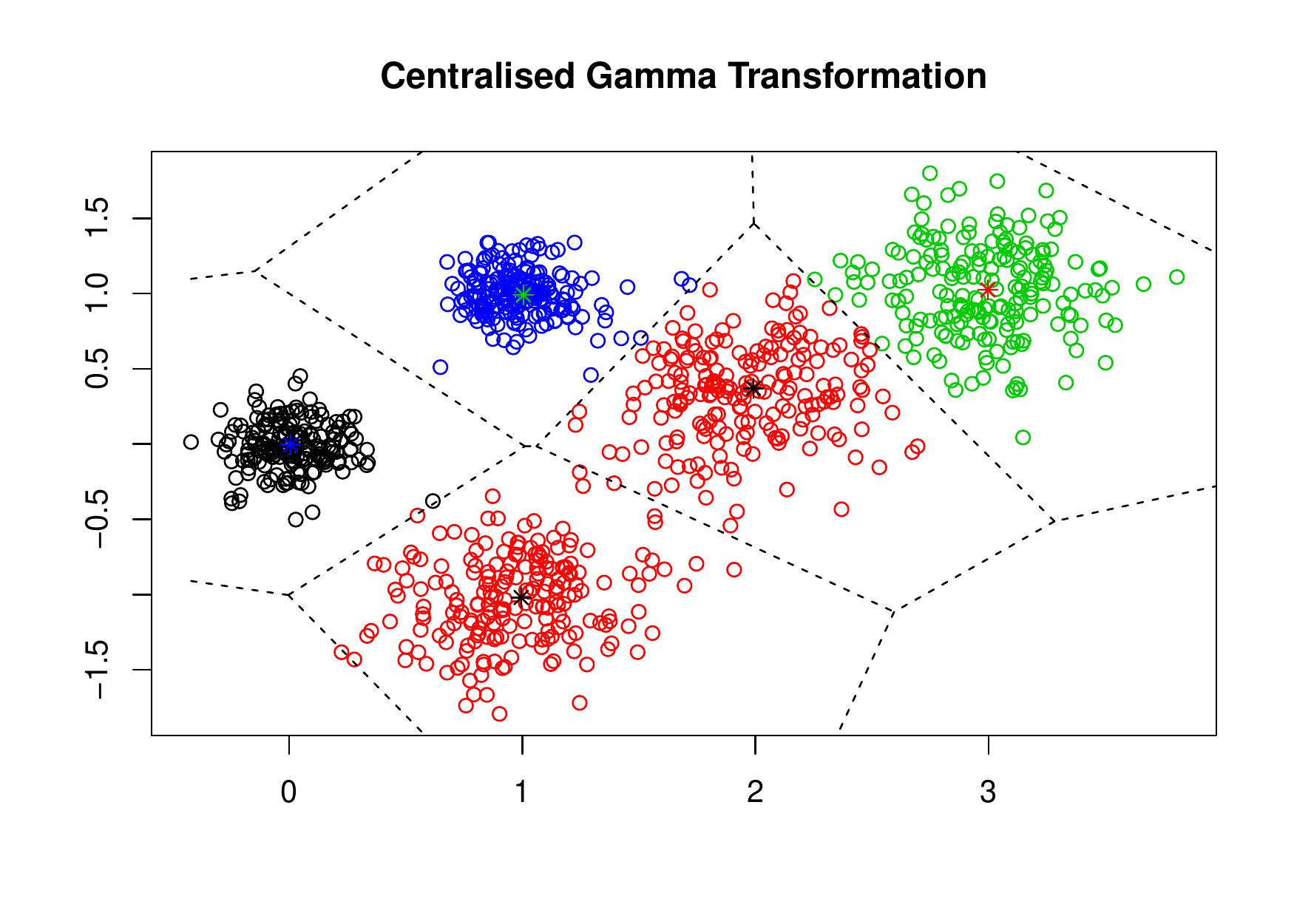}}  %
\caption{Data from 
Figure \ref{fig:5clusters} after a centralized  $\Gamma$-transformation ($\Gamma^*$ transformation),
 clustered by $k$-means algorithm  into 5 groups
}\label{fig:5clustersKlopotek}
\end{figure}

Our proposal of centric-consistency   has a practical background. 
Kleinberg proved that $k$-means does not fit his consistency axiom. 
As shown experimentally in table \ref{tab:comparison}, $k$-means algorithm behaves properly under $\Gamma^*$ transformation. 
Figure \ref{fig:5clustersKlopotek} illustrates a two-fold application of the 
$\Gamma^*$ transform (same clusters affected as by $\Gamma$-transform in the preceding figure).
As recognizable visually and by inspecting the forth column 
of  Table \ref{tab:comparison}, here $k=5$ is the best choice for $k$-means algorithm, so the centric-consistency axiom is followed. 

Let us now demonstrate theoretically, that $k$-means algorithm really fits 
"in the limit" the centric-consistency axiom.
\begin{theorem}{} \label{thm:localCentricConsistencyForKmeans} 
$k$-means algorithm    satisfies
 centric consistency in the following way:  
if the partition $\Gamma$ is a local minimum of $k$-means, 
and the partition $\Gamma$  has been subject to centric consistency yielding $\Gamma'$, then $\Gamma'$ is also a local minimum of $k$-means.  
\end{theorem}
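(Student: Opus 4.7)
The plan is to show that $\Gamma'$ satisfies the two fixed-point conditions characterizing a local minimum of $k$-means on the transformed dataset: (i) every cluster centroid equals the mean of its cluster, and (ii) every point lies in the Voronoi cell of its own centroid. Both conditions are preserved by the $\Gamma^*$ transform because the transformation is centered at $\boldsymbol\mu_c$ and affects only points inside the single cluster $C$.

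First I would observe that the centroid is $\Gamma^*$-invariant. Writing each transformed point as $\mathbf{x}' = \boldsymbol\mu_c + \lambda(\mathbf{x} - \boldsymbol\mu_c)$ and averaging, one gets $\boldsymbol\mu_c' = \boldsymbol\mu_c + \lambda(\boldsymbol\mu_c - \boldsymbol\mu_c) = \boldsymbol\mu_c$. Consequently, all $k$ centroids of $\Gamma'$ equal those of $\Gamma$, so condition (i) is immediate, and for every point outside $C$ the distance to every centroid is unchanged, which handles (ii) trivially on those clusters.

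The substantive step is to verify (ii) on the shrunken cluster, i.e.\ that for each $\mathbf{x}' \in C'$ and each other centroid $\boldsymbol\mu_j$ we still have $\|\mathbf{x}' - \boldsymbol\mu_c\|^2 \le \|\mathbf{x}' - \boldsymbol\mu_j\|^2$. The plan here is to expand both sides around $\boldsymbol\mu_c$, using $\mathbf{x}' - \boldsymbol\mu_c = \lambda(\mathbf{x} - \boldsymbol\mu_c)$, to reduce the desired inequality to
\[
2\lambda\,(\mathbf{x}-\boldsymbol\mu_c)\cdot(\boldsymbol\mu_j-\boldsymbol\mu_c)\;\le\;\|\boldsymbol\mu_c-\boldsymbol\mu_j\|^2.
\]
Because $\Gamma$ was already a local minimum, the original Voronoi inequality $\|\mathbf{x}-\boldsymbol\mu_c\|^2 \le \|\mathbf{x}-\boldsymbol\mu_j\|^2$ expands (after cancellation of $\|\mathbf{x}-\boldsymbol\mu_c\|^2$ and rearrangement) to the bound $(\mathbf{x}-\boldsymbol\mu_c)\cdot(\boldsymbol\mu_j-\boldsymbol\mu_c) \le \tfrac12\|\boldsymbol\mu_c-\boldsymbol\mu_j\|^2$. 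Multiplying by $2\lambda$ and using $\lambda \le 1$ yields exactly what is needed when the inner product is nonnegative; when it is negative the required inequality is trivial because the left-hand side is nonpositive while the right-hand side is nonnegative. So condition (ii) is preserved, and $\Gamma'$ is a local minimum on the transformed data.

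The main obstacle I anticipate is this cross-cluster Voronoi check: one has to be careful that a point $\mathbf{x}$ drawn toward $\boldsymbol\mu_c$ does not accidentally move it into the Voronoi cell of some $\boldsymbol\mu_j$ that happens to lie on the same side of $\boldsymbol\mu_c$. The computation above shows this cannot happen precisely because $\lambda \le 1$; if the definition of $\Gamma^*$ were relaxed to allow $\lambda > 1$ (or to expand the cluster away from $\boldsymbol\mu_c$), the argument would break, which is a useful sanity check on the formulation. A minor secondary point is to emphasize that we are proving preservation of \emph{local} optimality only: the transformation may create or destroy other local optima, and nothing is asserted about the global minimum, so the statement should be read as saying that the particular local optimum $\Gamma$ survives the $\Gamma^*$ transform as a local optimum $\Gamma'$.
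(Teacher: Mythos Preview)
Your argument is correct, but it proves a slightly different statement than the paper does, because you and the paper use different notions of ``local minimum of $k$-means.'' You take it to mean a Lloyd fixed point: centroids equal cluster means, and every point lies in the Voronoi cell of its own centroid. The paper instead works with the \emph{single--swap stability} criterion from Duda and Hart: moving any one point $\mathbf{x}^*$ from its cluster $C_{j_0}$ to another cluster $C_{j_l}$ (and recomputing both centroids) does not decrease $Q$, which translates to
\[
\frac{n_{j_0}}{n_{j_0}-1}\,\|\mathbf{x}^*-\boldsymbol\mu_{j_0}\|^2 \;\le\; \frac{n_{j_l}}{n_{j_l}+1}\,\|\mathbf{x}^*-\boldsymbol\mu_{j_l}\|^2 .
\]
This is strictly stronger than the Voronoi inequality you verify (since $\tfrac{n_{j_0}}{n_{j_0}-1}>1>\tfrac{n_{j_l}}{n_{j_l}+1}$), so your theorem as stated is about a larger class of partitions. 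The paper establishes preservation of the stronger property by projecting onto the plane through $\mathbf{x}^*,\boldsymbol\mu_{j_0},\boldsymbol\mu_{j_l}$, introducing coordinates $(x,y)$ for the foot of the perpendicular, and doing a two-case computation depending on whether that foot falls between the two centroids.

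Your vector-algebra route is noticeably cleaner than the paper's coordinate case analysis, and it adapts to the paper's swap-stable criterion with only a little extra work: writing $u=\|\mathbf{x}-\boldsymbol\mu_c\|^2$, $v=2(\mathbf{x}-\boldsymbol\mu_c)\cdot(\boldsymbol\mu_c-\boldsymbol\mu_j)$, $w=\|\boldsymbol\mu_c-\boldsymbol\mu_j\|^2$, the hypothesis becomes $(a-b)u\le b(v+w)$ and the goal $(a-b)\lambda^2 u\le b(\lambda v+w)$, where $a=\tfrac{n_{j_0}}{n_{j_0}-1}$ and $b=\tfrac{n_{j_l}}{n_{j_l}+1}$. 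When $v<0$ the goal is immediate from $\lambda^2<1$ and $\lambda v>v$; when $v\ge 0$ one multiplies the hypothesis by $\lambda^2$ and uses the elementary inequality $\lambda^2(v+w)\le \lambda v+w$ for $0<\lambda\le 1$. So your approach actually subsumes the paper's once the right criterion is plugged in; the only thing missing from your write-up is acknowledging (and handling) these cardinality-weighted factors rather than the bare Voronoi inequality.
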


\begin{proof}
The $k$-means algorithm minimizes the sum\footnote{We use here the symbol $Q$ for the cluster quality function instead of $J$ from section \ref{sec:prevWork} because $Q$ does not fit axiomatic system for $J$ - it is not scale-invariant and in case of consistency it changes in opposite direction, and with respect of richness we can only apply $k$-richness.} 
$Q$ from equation (\ref{eq:Q::kmeans}). 
$V(C_j)$ be the sum of squares of distances of all objects of the cluster   $C_j$ from its gravity center. 
Hence $Q(\Gamma)= \sum_{j=1}^k \frac{1}{n_j} V(C_j)$. 
Consider moving a data point $\textbf{x}^*$  from the cluster $C_{j_0}$ 
to cluster $C_{j_l}$ 
As demonstrated by \cite{Duda:1973},  
$V( C_{j_0} -\{\textbf{x}^*\})= V(C_{j_0}) -
  \frac{n_{j_0}}{n_{j_0}-1}\|\textbf{x}^*-\boldsymbol{\mu}_{j_0}\|^2$
and 
$V(C_{j_l}\cup\{\textbf{x}^*\})= V(C_{j_l}) +
 \frac{n_l}{n_l+1}\|\textbf{x}^*-\boldsymbol{\mu}_{j_l}\|^2 $
So it pays off to move a point from one cluster to another if
$\frac{n_{j_0}}{n_{j_0}-1}\|\textbf{x}^*-\boldsymbol{\mu}_{j_0}\|^2 > \frac{n_{j_l}}{n_{j_l}+1}\|\textbf{x}^*-\boldsymbol{\mu}_{j_l}\|^2$.
If we assume local optimality of $\Gamma$, 
this obviously did not pay off.
Now transform this data set to $\mathbf{X'}$ in that 
  we transform elements of cluster $C_{j_0}$ in such a way that 
it has now elements 
$\textbf{x}_i'=\textbf{x}_i+\lambda(\textbf{x}_i - \boldsymbol{\mu}_{j_0})$ for some $0<\lambda<1$,
see figure \ref{fig:XBETWEENOUTSIDE}.
Consider a   partition $\Gamma'$
of $\mathbf{X'}$.  All clusters are the same as in $\Gamma$ except for 
the transformed elements that form now a cluster $C'_{j_0}$.
The question  is: 
does it pay off   to move a data point $\textbf{x'}^*\in C'_{j_0}$ between the clusters? 
Consider the plane containing 
$\textbf{x}^*, \boldsymbol{\mu}_{j_0}, \boldsymbol{\mu}_{j_l}$.
Project orthogonally the point
$\textbf{x}^*$ onto the line $\boldsymbol{\mu}_{j_0}, \boldsymbol{\mu}_{j_l}$, giving a point $\textbf{p}$.
Either $\textbf{p}$  lies  between 
$\boldsymbol{\mu}_{j_0}, \boldsymbol{\mu}_{j_l}$
or 
  $\boldsymbol{\mu}_{j_0}$  lies  between 
$\textbf{p}, \boldsymbol{\mu}_{j_l}$. Properties of $k$-means exclude other possibilities. 
Denote distances 
$y=\|\textbf{x}^*- \textbf{p}\|$,
$x=\|\boldsymbol{\mu}_{j_0}- \textbf{p}\|$,
$d=\|\boldsymbol{\mu}_{j_0}- \boldsymbol{\mu}_{j_l}\|$
In the second case 
the condition that moving the point does not pay off means:
$$\frac{n_{j_0}}{n_{j_0}-1} (x^2+y^2) \le \frac{n_{j_l}}{n_{j_l}+1}((d+x)^2+y^2) $$ 
If we multiply both sides with $\lambda^2$, we have:
\begin{align}
\lambda^2\frac{n_{j_0}}{n_{j_0}-1} (x^2+y^2) 
=& \frac{n_{j_0}}{n_{j_0}-1} ((\lambda x)^2+(\lambda y)^2) 
\nonumber \\
\le & \lambda^2 \frac{n_{j_l}}{n_{j_l}+1}((d+x)^2+y^2) 
\nonumber \\
=&
\frac{n_{j_l}}{n_{j_l}+1}( \lambda^2d^2+\lambda^2 2dx+\lambda^2x^2 +\lambda^2y^2) 
\nonumber \\
\le 
& \frac{n_{j_l}}{n_{j_l}+1}(  d^2+  2d\lambda x+\lambda^2x^2 +\lambda^2y^2) 
\nonumber \\
=& \frac{n_{j_l}}{n_{j_l}+1}(  (d +  \lambda x)^2+ (\lambda y)^2) 
\end{align} 
which means that it does not payoff to move the point $\textbf{x'}^*$ between clusters either.
Consider now the first case and assume that it pays off to move $\textbf{x'}^*$.
So we would have 
$$\frac{n_{j_0}}{n_{j_0}-1} (x^2+y^2) \le \frac{n_{j_l}}{n_{j_l}+1}((d-x)^2+y^2) $$ 
and at the same time
$$\frac{n_{j_0}}{n_{j_0}-1} \lambda^2(x^2+y^2) > \frac{n_{j_l}}{n_{j_l}+1}((d-\lambda x)^2+\lambda^2 y^2) $$ 
Subtract now both sides:
$$\frac{n_{j_0}}{n_{j_0}-1} (x^2+y^2) 
-\frac{n_{j_0}}{n_{j_0}-1} \lambda^2(x^2+y^2)$$
$$< \frac{n_{j_l}}{n_{j_l}+1}((d-x)^2+y^2) 
-\frac{n_{j_l}}{n_{j_l}+1}((d-\lambda x)^2+\lambda^2 y^2)$$
This  implies 
$$\frac{n_{j_0}}{n_{j_0}-1} (1-\lambda^2) (x^2+y^2) 
< 
\frac{n_{j_l}}{n_{j_l}+1}(
 (1-\lambda^2) (x^2+y^2)-2d\lambda x
)
$$ 
 It is a contradiction because 
$$\frac{n_{j_0}}{n_{j_0}-1} (1-\lambda^2) (x^2+y^2) 
>
\frac{n_{j_l}}{n_{j_l}+1} (1-\lambda^2) (x^2+y^2) 
 > 
\frac{n_{j_l}}{n_{j_l}+1}(
 (1-\lambda^2) (x^2+y^2)-2d\lambda x
)
$$
So it does not pay off to move $\textbf{x'}^*$, hence the partition $\Gamma'$ remains  locally optimal for the transformed data set. 
\end{proof}
If the data have one stable optimum only like in case of
"well separated" normally distributed 
   $k$ real clusters, then both turn to global optima. 

\begin{figure}
\centering
\includegraphics[width=0.45\textwidth]{\figaddr{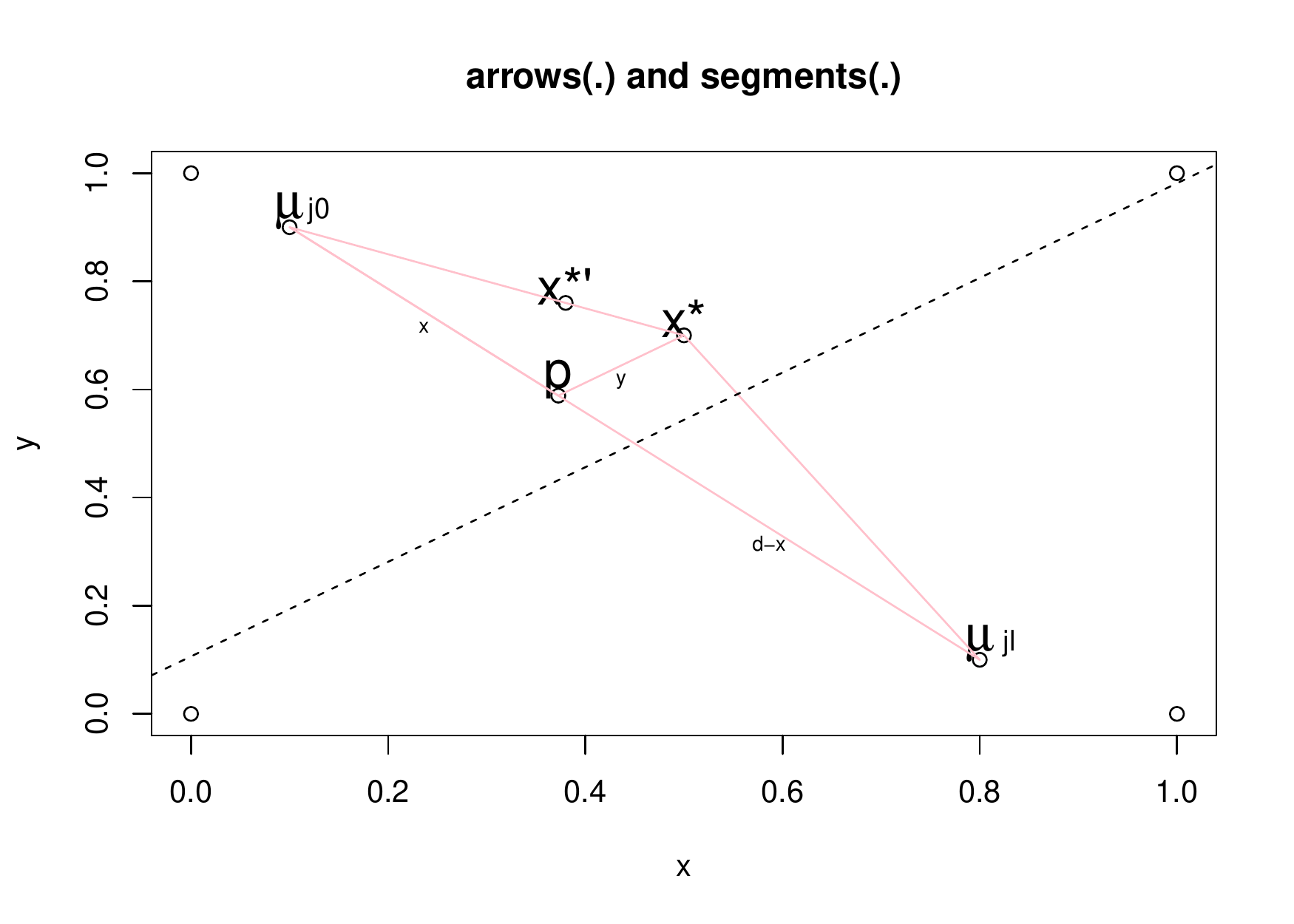}}  %
\includegraphics[width=0.45\textwidth]{\figaddr{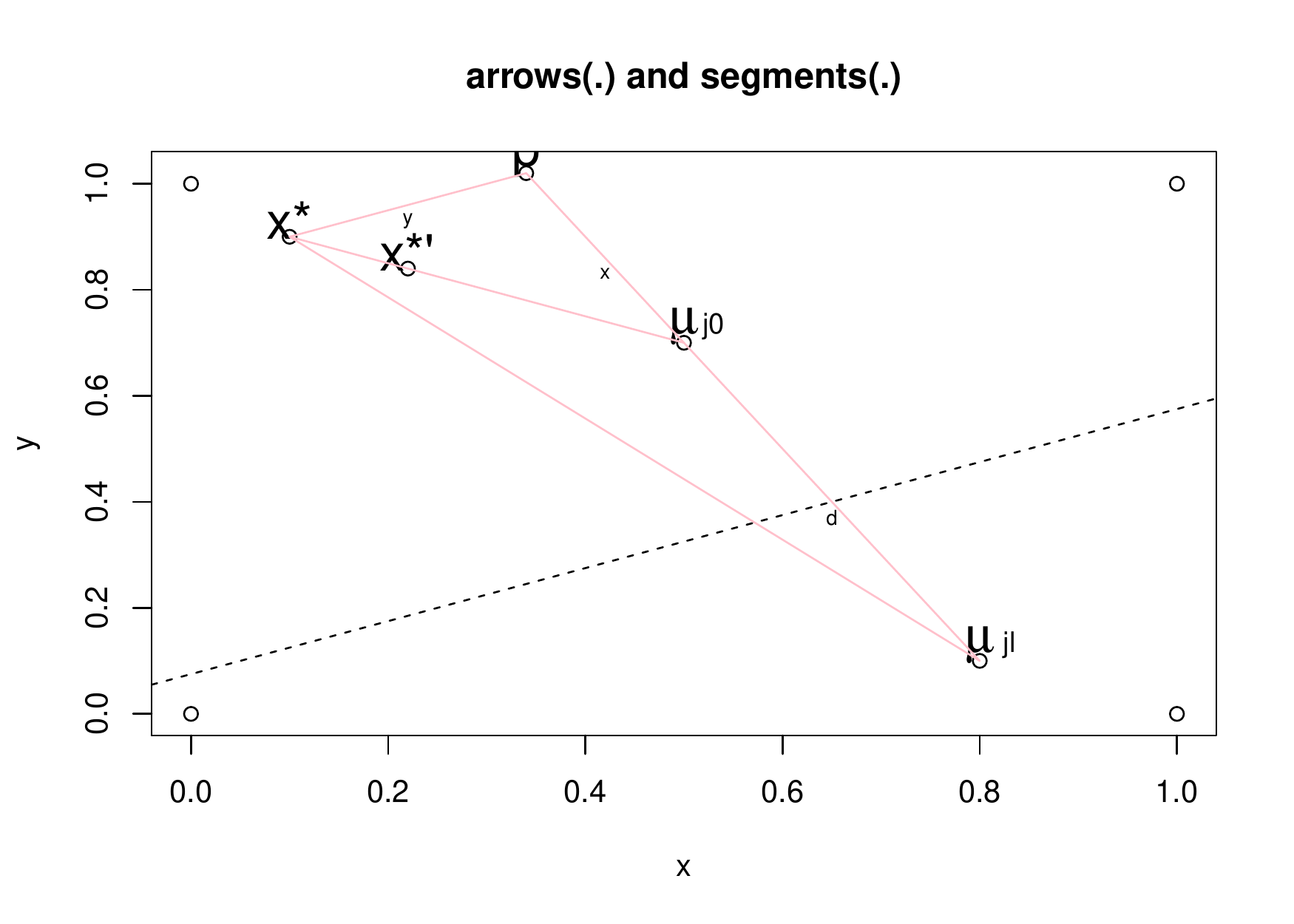}}  %
\caption{Impact of contraction 
towards cluster center by a factor lambda 
- local optimum maintained 
}\label{fig:XBETWEENOUTSIDE}
\end{figure}

%-----------------------------------------------------
However, it is possible to demonstrate that 
the newly defined transform preserves also the global optimum of $k$-means.
\begin{theorem}{} \label{thm:globalCentricCinsistencyForKmeans} 
$k$-means algorithm    satisfies
 centric consistency in the following way:  
if the partition $\Gamma$ is a global minimum of $k$-means, 
and the partition $\Gamma$  has been subject to centric consistency yielding $\Gamma'$, then $\Gamma'$ is also a global minimum of $k$-means.  
\end{theorem}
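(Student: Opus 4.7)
The plan is to build on the local-optimality argument of Theorem \ref{thm:localCentricConsistencyForKmeans} by embedding the centric transform into a continuous one-parameter family, exploiting the resulting quadratic structure of the $k$-means cost in the parameter, and then closing the argument by a boundary analysis at both endpoints. Concretely, for $\lambda \in [0,1]$ let $T_\lambda(\textbf{x}) = \boldsymbol{\mu}_{j_0} + \lambda(\textbf{x} - \boldsymbol{\mu}_{j_0})$ for $\textbf{x} \in C_{j_0}$ and $T_\lambda(\textbf{x}) = \textbf{x}$ otherwise, yielding a family of datasets $\mathbf{X}(\lambda)$ with $\mathbf{X}(1) = \mathbf{X}$ and $\mathbf{X}(\lambda^*) = \mathbf{X}'$ for the target contraction factor $\lambda^* \in (0,1]$.

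For any $k$-partition $\Pi = \{D_1, \dots, D_k\}$ decompose each cluster as $D_q = A_q \cup B_q$ with $A_q = D_q \cap C_{j_0}$ and $B_q = D_q \setminus C_{j_0}$. A direct computation using the splitting identity $V(E \cup F) = V(E) + V(F) + \tfrac{|E||F|}{|E|+|F|}\|\boldsymbol{\mu}_E - \boldsymbol{\mu}_F\|^2$ shows that $Q(\Pi, \lambda) = \alpha_\Pi \lambda^2 + \beta_\Pi \lambda + \gamma_\Pi$ is a quadratic polynomial in $\lambda$, with leading coefficient $\alpha_\Pi = V(C_{j_0}) - \sum_q \tfrac{|A_q|^2}{|D_q|}\|\boldsymbol{\mu}_{A_q} - \boldsymbol{\mu}_{j_0}\|^2 \le V(C_{j_0}) = \alpha_\Gamma$. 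Consequently, $f_\Pi(\lambda) := Q(\Pi, \lambda) - Q(\Gamma, \lambda)$ is a quadratic in $\lambda$ with nonpositive leading coefficient, so its minimum over the interval $[0, 1]$ is attained at one of the two endpoints.

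The endpoint $\lambda = 1$ is immediate: $f_\Pi(1) = Q_{\mathbf{X}}(\Pi) - Q_{\mathbf{X}}(\Gamma) \ge 0$ by the hypothesis that $\Gamma$ is globally optimal on $\mathbf{X}$. The harder endpoint is $\lambda = 0$, where all of $C_{j_0}$ has collapsed to $\boldsymbol{\mu}_{j_0}$. I would first use concavity: each summand $\tfrac{|A_q||B_q|}{|A_q|+|B_q|}\|\boldsymbol{\mu}_{B_q} - \boldsymbol{\mu}_{j_0}\|^2$ is concave in $|A_q|$, so subject to $\sum_q |A_q| = |C_{j_0}|$ the cost is minimised at a vertex of the simplex, i.e.\ by concentrating all collapsed copies inside a single cluster $q^*$. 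A merge-and-split argument (which can only decrease cost) ensures the resulting ``concentrated'' partition $\Pi^{\mathrm{conc}}$ has exactly $k$ nonempty clusters. Lifting $\Pi^{\mathrm{conc}}$ from $\mathbf{X}(0)$ back to $\mathbf{X}$ replaces the $|C_{j_0}|$ copies of $\boldsymbol{\mu}_{j_0}$ with the original points of $C_{j_0}$; this adds exactly $V(C_{j_0})$ to the cluster $q^*$ cost, giving the key identity $Q_{\mathbf{X}}(\Pi^{\mathrm{conc}}) = V(C_{j_0}) + Q_{\mathbf{X}(0)}(\Pi^{\mathrm{conc}})$. The same identity holds with $\Gamma$ in place of $\Pi^{\mathrm{conc}}$, and combined with $Q_{\mathbf{X}}(\Pi^{\mathrm{conc}}) \ge Q_{\mathbf{X}}(\Gamma)$ (global optimality on $\mathbf{X}$) it yields $Q_{\mathbf{X}(0)}(\Pi^{\mathrm{conc}}) \ge Q_{\mathbf{X}(0)}(\Gamma)$, whence $f_\Pi(0) \ge 0$.

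Since $f_\Pi(0), f_\Pi(1) \ge 0$ and $f_\Pi$ is a downward-opening (or linear) quadratic, $f_\Pi(\lambda) \ge 0$ for every $\lambda \in [0, 1]$, in particular at $\lambda^*$; hence $\Gamma'$ is a global minimum on $\mathbf{X}'$. The main obstacle I anticipate is the $\lambda = 0$ analysis: while the concavity observation is straightforward, the reduction to the concentrated partition must be carried out so as to both preserve the $k$-cluster constraint and not increase the cost, which requires careful bookkeeping when the original $\Pi$ already contains one or more all-collapsed clusters.
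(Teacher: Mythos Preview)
Your approach is correct and follows the same overarching strategy as the paper: parametrize the centric transform by $\lambda\in[0,1]$, observe that $Q(\Pi,\lambda)-Q(\Gamma,\lambda)$ is quadratic in $\lambda$ with nonpositive leading coefficient (the paper phrases this with the opposite sign convention, as a convex quadratic $h(x)=Q(\Gamma,x)-Q(\Pi,x)$ that cannot be negative at both endpoints yet positive in between), and then verify both endpoints $\lambda=1$ and $\lambda=0$.

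Where you genuinely diverge from the paper is the $\lambda=0$ endpoint. The paper introduces, for each index $l$, the auxiliary partition obtained by merging the whole contracted cluster $T$ into the $l$th piece $Z_{*,l}$ of the competitor, derives from global optimality at $\lambda=1$ the inequalities $\sum_i n_{Z_i}\mathrm{Var}(Z_i)-\sum_j n_{Z_{*,j}}\mathrm{Var}(Z_{*,j})<\tfrac{n_T n_{Z_{*,l}}}{n_T+n_{Z_{*,l}}}\|\w_{Z_{*,l}}\|^2$, and then shows $g(0)<0$ by taking a convex combination of these bounds with weights $n_{T_j}/n_T$. Your concavity-and-lifting argument is more structural: concavity of $a\mapsto ab/(a+b)$ pushes all collapsed mass into a single cluster $q^*$, after which the identity $Q_{\mathbf{X}}(\Pi^{\mathrm{conc}})=V(C_{j_0})+Q_{\mathbf{X}(0)}(\Pi^{\mathrm{conc}})$ reduces everything to the global optimality of $\Gamma$ on $\mathbf{X}$. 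This is cleaner and avoids the somewhat ad hoc auxiliary partitions; it also makes transparent why the paper's convex-combination step works (it is implicitly the dual of your vertex argument).

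One small remark on your stated obstacle: the ``merge-and-split'' repair is not actually needed. If some $B_q=\emptyset$ so that $\Pi^{\mathrm{conc}}$ has $m<k$ nonempty clusters after concentration, the lifted partition on $\mathbf{X}$ is an $m$-partition, and since the $k$-means optimum is monotone nonincreasing in $k$ you still have $Q_{\mathbf{X}}(\Pi^{\mathrm{conc}})\ge Q_{\mathbf{X}}(\Gamma)$ directly. So the bookkeeping concern you flag dissolves without further work.
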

 
\begin{proof}
Let us consider first the simple case of two clusters only (2-means). 
Let the optimal clustering for a given set of objects $X$ consist of two clusters: $T$ and $Z$. 
The subset $T$ shall have its gravity center at the origin of the coordinate system. 
The quality of this partition 
$Q(\{T,Z\})=n_T Var(T) +n_Z Var(Z)$
where $n_T, n_Z$   denote the cardinalities of $T,Z$ and
$Var(T),Var(Z)$ their variances (averaged squared distances to gravity center). 
We will prove  by contradiction that by applying our $\Gamma$ transform we get partition that will be still optimal for the transformed data points.
We shall assume the contrary that is that we can 
transform the set $T$ by some $1>\lambda>0$ to $T'$ in such a way that optimum of $2$-means clustering is not the partition $\{T',Z\}$ but another one, say 
$\{A'\cup D,B'\cup C\}$ where $Z=C\cup D$, $A'$ and $B'$ are transforms of sets $A,B$ for which in turn $A\cup B=T$. 
It may be easily verified that
\newcommand{\w}{\mathbf{v}}  
$$Q(\{A\cup B, C\cup D\})= 
n_A Var(A) + n_A \w_A^2 
+ n_B Var(B) + n_B \w_B^2 
$$ $$
+ n_C Var(C) + n_D Var(D) +\frac{n_C n_D}{n_C+n_D} (\w_C-\w_D)^2$$  
while  
$$Q(\{A\cup C, B\cup D\})= 
n_A Var(A) + n_D Var(D)+ 
+\frac{n_A n_D}{n_A+n_D} (\w_A-\w_D)^2
$$ $$
+ n_B Var(B) + n_C Var(C)+ 
+\frac{n_B n_C}{n_B+n_C} (\w_B-\w_C)^2
$$  
and 
$$Q(\{A'\cup B', C\cup D\})= 
n_A \lambda^2Var(A) + n_A \lambda^2\w_A^2 
+ n_B \lambda^2Var(B) + n_B \lambda^2\w_B^2 
$$ $$
+ n_C Var(C) + n_D Var(D) +\frac{n_C n_D}{n_C+n_D} (\w_C-\w_D)^2$$  
while  
$$Q(\{A'\cup C, B'\cup D\})= 
n_A \lambda^2Var(A) + n_D Var(D)+ 
+\frac{n_A n_D}{n_A+n_D} (\lambda\w_A-\w_D)^2
$$ $$
+ n_B \lambda^2Var(B) + n_C Var(C)+ 
+\frac{n_B n_C}{n_B+n_C} (\lambda\w_B-\w_C)^2
$$  
The following must hold:
\begin{equation}
Q(\{A'\cup B', C\cup D\})>Q(\{A'\cup D, B'\cup C\})
\end{equation}
and 
\begin{equation}
Q(\{A\cup B, C\cup D\})<Q(\{A\cup D, B\cup C\})
\end{equation}
Additionally also 
\begin{equation}
Q(\{A\cup B, C\cup D\})<Q(\{A\cup B\cup C,  D  \})
\end{equation}
 and 
\begin{equation}
Q(\{A\cup B, C\cup D\})<Q(\{A\cup B\cup D,  C  \})
\end{equation}
These two latter inequalities imply:
\Bem{
$$Q(\{A\cup B, C\cup D\})= 
n_A Var(A) + n_A \w_A^2 
+ n_B Var(B) + n_B \w_B^2 
$$ $$
+ n_C Var(C) + n_D Var(D) +\frac{n_C n_D}{n_C+n_D} (\w_C-\w_D)^2$$  
$$<Q(\{A\cup B\cup C, D\})= 
n_A Var(A) + n_A \w_A^2 
+ n_B Var(B) + n_B \w_B^2 
$$ $$
+ n_C Var(C)+\frac{(n_A+n_B) n_C}{(n_A+n_B)+n_C} \w_C ^2 
+n_D Var(D) 
 $$ 
that is
}%\Bem
$$ 
   \frac{n_C n_D}{n_C+n_D} (\w_C-\w_D)^2 < 
  \frac{(n_A+n_B) n_C}{(n_A+n_B)+n_C} \w_C ^2 
 $$   
and
$$ 
   \frac{n_C n_D}{n_C+n_D} (\w_C-\w_D)^2 < 
  \frac{(n_A+n_B) n_D}{(n_A+n_B)+n_D} \w_D ^2 
 $$  
Consider now an extreme contraction ($\lambda=0$) yielding sets $A",B"$ out of $A,B$.
Then we have 

$$Q(\{A"\cup B", C\cup D\})
- Q(\{A"\cup C, B"\cup D\})
$$ $$
= 
   \frac{n_C n_D}{n_C+n_D} (\w_C-\w_D)^2
-  
 \frac{n_A n_D}{n_A+n_D}  \w_D^2
-\frac{n_B n_C}{n_B+n_C} \w_C^2
$$ $$
= 
   \frac{n_C n_D}{n_C+n_D} (\w_C-\w_D)^2
$$ $$
-  
 \frac{n_A n_D}{n_A+n_D}
\frac{(n_A+n_B)+n_D}{(n_A+n_B) n_D}
\frac{(n_A+n_B) n_D}{(n_A+n_B)+n_D}
  \w_D^2
$$ $$
-\frac{n_B n_C}{n_B+n_C}
\frac{(n_A+n_B)+n_C}{(n_A+n_B) n_C}
\frac{(n_A+n_B) n_C}{(n_A+n_B)+n_C}
 \w_C^2
$$ $$
= 
   \frac{n_C n_D}{n_C+n_D} (\w_C-\w_D)^2
$$ $$
-  
 \frac{n_A  }{n_A+n_D}
\frac{(n_A+n_B)+n_D}{(n_A+n_B)  }
\frac{(n_A+n_B) n_D}{(n_A+n_B)+n_D}
  \w_D^2
$$ $$
-\frac{n_B  }{n_B+n_C}
\frac{(n_A+n_B)+n_C}{(n_A+n_B)  }
\frac{(n_A+n_B) n_C}{(n_A+n_B)+n_C}
 \w_C^2
$$ $$
= 
   \frac{n_C n_D}{n_C+n_D} (\w_C-\w_D)^2
$$ $$
-  
  \frac{n_A   }{ n_A+n_B   }
(1+\frac{n_B}{n_A+n_D})
\frac{(n_A+n_B) n_D}{(n_A+n_B)+n_D}
  \w_D^2
$$ $$
-\frac{n_B  }{n_A+n_B  }
(1+\frac{n_A}{n_B+n_C})
\frac{(n_A+n_B) n_C}{(n_A+n_B)+n_C}
 \w_C^2
$$ $$
< 
   \frac{n_C n_D}{n_C+n_D} (\w_C-\w_D)^2
$$ $$
-  
  \frac{n_A   }{ n_A+n_B   }
\frac{(n_A+n_B) n_D}{(n_A+n_B)+n_D}
  \w_D^2
$$ $$
-\frac{n_B  }{n_A+n_B  } 
\frac{(n_A+n_B) n_C}{(n_A+n_B)+n_C}
 \w_C^2
<0
$$
because the linear combination of two numbers that are bigger than a third yields another number bigger than this. 
Let us define a function 
$$h(x)=
  + n_A x^2\w_A^2 
  + n_B x^2\w_B^2 
    +\frac{n_C n_D}{n_C+n_D} (\w_C-\w_D)^2
$$ $$    
-\frac{n_A n_D}{n_A+n_D} (x\w_A-\w_D)^2  
-\frac{n_B n_C}{n_B+n_C} (x\w_B-\w_C)^2
$$   
It can be easily verified that $h(x)$ is a quadratic polynomial 
with a positive coefficient at $x^2$.
Furthermore
$h(1)=Q(\{A\cup B, C\cup D\})
- Q(\{A\cup C, B\cup D\})<0$, 
$h(\lambda)=Q(\{A'\cup B', C\cup D\})
- Q(\{A'\cup C, B'\cup D\})>0$, 
$h(0)=Q(\{A"\cup B", C\cup D\})
- Q(\{A"\cup C, B"\cup D\})<0$. 
But no quadratic polynomial with a positive coefficient at $x^2$ can be negative at the ends of an interval and positive in the middle.
So we have the contradiction. This proves the thesis
that the (globally) optimal $2$-means clustering remains 
(globally) optimal after transformation. 
%-----------------------------------------------

Let us turn to the general case of $k$-means. 
Let the optimal clustering for a given set of objects $X$ consist of $k$ clusters: $T$ and $Z_1,\dots,Z_{k-1}$. 
The subset $T$ shall have its gravity center at the origin of the coordinate system. 
The quality of this partition 
$Q(\{T,Z_1,\dots,Z_{k-1}\}) =n_T Var(T) +\sum_{i=1}^{k-1}n_{Z_i} Var(Z_i)$,  
where $n_{Z_i} $ is the cardinality of the cluster ${Z_i} $.
We will prove  by contradiction that by applying our $\Gamma$ transform we get partition that will be still optimal for the transformed data points.
We shall assume the contrary that is that we can 
transform the set $T$ by some $1>\lambda>0$ to $T'$ in such a way that optimum of $k$-means clustering is not the partition $\{T',Z_1,\dots,Z_{k-1}\}$ but another one, say 
$\{T'_1\cup Z_{1,1} \cup \dots \cup Z_{k-1,1}
, T'_2\cup Z_{1,2} \cup \dots \cup Z_{{k-1},2}
\dots
, T'_k\cup Z_{1,k} \cup \dots \cup Z_{{k-1},k}
\}$ where $Z_i= \cup_{j=1}^{k} Z_{i,j}$ (where $Z_{i,j}$ are pairwise disjoint), 
$T'_1,\dots,T'_k$   are transforms of disjoint sets 
$T_1,\dots,T_k$ for which in turn $\cup_{j=1}^{k}T_j=T$. 
It may be easily verified that
$$Q(\{T,Z_1,\dots,Z_{k-1}\})= 
\sum_{j=1}^{k}n_{T_j}  Var({T_j}) + \sum_{j=1}^{k}n_{T_j} \w_{T_j}^2 
%% $$ $$
+ \sum_{i=1}^{k-1}n_{Z_i} Var({Z_i})$$  
while (denoting $Z_{*,j}= \cup_{i=1}{k-1}Z_{*,j}$) 
$$Q(\{T_1\cup  Z_{*,1},
\dots, T_k\cup  Z_{*,k}
  \})= $$ $$ = 
\sum_{j=1}^{k}
\left(
n_{T_j} Var({T_j}) + n_{Z_{*,j}}  Var(Z_{*,j})+ 
+\frac{n_{T_j} n_{Z_{*,j}}}{n_{T_j}+n_{Z_{*,j}}} (\w_{T_j}-\w_{Z_{*,j}})^2
\right)
$$ 
whereas
$$Q(\{T',Z_1,\dots,Z_{k-1}\})= 
\sum_{j=1}^{k}n_{T_j}  \lambda^2Var({T_j}) + \sum_{j=1}^{k}n_{T_j} \lambda^2\w_{T_j}^2 
$$ $$
+ \sum_{i=1}^{k-1}n_{Z_i} Var({Z_i})$$  
while
$$Q(\{T'_1\cup  Z_{*,1},
\dots, T'_k\cup  Z_{*,k}
  \})= $$ $$ = 
\sum_{j=1}^{k}
\left(
n_{T_j} \lambda^2Var({T_j}) + n_{Z_{*,j}}  Var(Z_{*,j})+ 
+\frac{n_{T_j} n_{Z_{*,j}}}{n_{T_j}+n_{Z_{*,j}}} (\lambda\w_{T_j}-\w_{Z_{*,j}})^2
\right)
$$ 
The following must hold:
\begin{equation}
Q(\{T',Z_1,\dots,Z_{k-1}\})>Q(\{
T'_1\cup  Z_{*,1},
\dots, T'_k\cup  Z_{*,k} \})
\end{equation} 
and 
\begin{equation}
Q(\{T,Z_1,\dots,Z_{k-1}\})<Q(\{
\{T_1\cup  Z_{*,1},
\dots, T_k\cup  Z_{*,k}\})
\end{equation}

Additionally also 
\begin{equation}
Q(\{T,Z_1,\dots,Z_{k-1}\})<Q(\{
\{T\cup  Z_{*,1},Z_{*,2},  \dots, Z_{*,k}
)\end{equation}
 and  
\begin{equation}
Q(\{T,Z_1,\dots,Z_{k-1}\})<Q(\{
T\cup  Z_{*,2},Z_{*,1},Z_{*,3},  \dots, Z_{*,k} \}
)\end{equation}
 and \dots and   
\begin{equation}
Q(\{T,Z_1,\dots,Z_{k-1}\})<Q(\{
T\cup  Z_{*,k},Z_{*,1},  \dots, Z_{*,k-1} \}
)\end{equation}

These latter $k$ inequalities imply that for $l=1,\dots,k$:
$$Q(\{T,Z_1,\dots,Z_{k-1}\})= 
 n_{T}  Var({T})+
\sum_{j=1}^{k}n_{T_j}  Var({T_j}) + \sum_{j=1}^{k}n_{T_j} \w_{T_j}^2 
$$ $$
+ \sum_{i=1}^{k-1}n_{Z_i} Var({Z_i})
< $$ $$
Q(\{T\cup  Z_{*,l},
Z_{*,1},\dots,Z_{*,l-1},Z_{*,l+1}
\dots,   Z_{*,k}
  \})= $$ $$ = 
 n_{T}  Var({T})+
\sum_{j=1}^{k}n_{Z_{*,j}}  Var(Z_{*,j})
+\frac{n_{T} n_{Z_{*,l}}}{n_{T}+n_{Z_{*,l}}} (\w_{T}-\w_{Z_{*,l}})^2
$$
%--------
  $$
+ \sum_{i=1}^{k-1}n_{Z_i} Var({Z_i})
< $$ $$
\sum_{j=1}^{k}n_{Z_{*,j}}  Var(Z_{*,j})
+\frac{n_{T} n_{Z_{*,l}}}{n_{T}+n_{Z_{*,l}}} (\w_{T}-\w_{Z_{*,l}})^2
$$
%--------
  $$
+  \sum_{i=1}^{k-1}n_{Z_i} Var({Z_i})
-\sum_{j=1}^{k}n_{Z_{*,j}}  Var(Z_{*,j})
< $$ $$
\frac{n_{T} n_{Z_{*,l}}}{n_{T}+n_{Z_{*,l}}} (\w_{Z_{*,l}})^2
$$

Consider now an extreme contraction ($\lambda=0$) yielding sets $T_j"$ out of $T_j$.
Then we have 

$$Q(\{T",Z_1,\dots,Z_{k-1}\})
-Q(\{T"_1\cup  Z_{*,1},
\dots, T"_k\cup  Z_{*,k}
  \})
$$ $$
= 
   \sum_{i=1}^{k-1}n_{Z_i} Var({Z_i}) 
-
\sum_{j=1}^{k}
\left(
  n_{Z_{*,j}}  Var(Z_{*,j}) 
+\frac{n_{T_j} n_{Z_{*,j}}}{n_{T_j}+n_{Z_{*,j}}} (\w_{Z_{*,j}})^2
\right)
$$ $$
= 
 \sum_{i=1}^{k-1}n_{Z_i} Var({Z_i})
-\sum_{j=1}^{k} n_{Z_{*,j}}  Var(Z_{*,j})
$$ $$
-\sum_{j=1}^{k}
\frac{n_{T_j} n_{Z_{*,j}}}{n_{T_j}+n_{Z_{*,j}}}
\frac{n_{T}+n_{Z_{*,j}}} {n_{T} n_{Z_{*,j}}}
\frac{n_{T} n_{Z_{*,j}}}{n_{T}+n_{Z_{*,j}}} 
 (\w_{Z_{*,j}})^2
$$ $$
= 
 \sum_{i=1}^{k-1}n_{Z_i} Var({Z_i})  
-\sum_{j=1}^{k} n_{Z_{*,j}}  Var(Z_{*,j})
$$ $$
-\sum_{j=1}^{k}
\frac{n_{T_j}  }{n_{T_j}+n_{Z_{*,j}}}
\frac{n_{T}+n_{Z_{*,j}}} {n_{T}  }
\frac{n_{T} n_{Z_{*,j}}}{n_{T}+n_{Z_{*,j}}} 
 (\w_{Z_{*,j}})^2
$$ $$
\le  
 \sum_{i=1}^{k-1}n_{Z_i} Var({Z_i})  
-\sum_{j=1}^{k} n_{Z_{*,j}}  Var(Z_{*,j})
-\sum_{j=1}^{k}
\frac{n_{T_j}  } {n_{T}  }
\frac{n_{T} n_{Z_{*,j}}}{n_{T}+n_{Z_{*,j}}} 
 (\w_{Z_{*,j}})^2
<0
$$
because the linear combination of  numbers that are bigger than a third yields another number bigger than this. 
Let us define a function 
$$g(x)=
 \sum_{j=1}^{k}n_{T_j} x^2\w_{T_j}^2 
+ \sum_{i=1}^{k-1}n_{Z_i} Var({Z_i})  
$$ $$
-
\sum_{j=1}^{k}
\left(
  n_{Z_{*,j}}  Var(Z_{*,j})+ 
+\frac{n_{T_j} n_{Z_{*,j}}}{n_{T_j}+n_{Z_{*,j}}} (x\w_{T_j}-\w_{Z_{*,j}})^2
\right)
$$ 
It can be easily verified that $g(x)$ is a quadratic polynomial 
with a positive coefficient at $x^2$.
Furthermore
$g(1)=Q(\{T,Z_1,\dots,z_{k-1}\})
- Q(\{T_1\cup  Z_{*,1},\dots, T_k\cup  Z_{*,k} \})<0$, 
$g(\lambda)=Q(\{T',Z_1,\dots,Z_{k-1}\})
- Q(\{T'_1\cup  Z_{*,1},\dots, T'_k\cup  Z_{*,k} \})>0$, 
$g(0)=Q(\{T",Z_1,\dots,Z_{k-1}\})
- Q(\{T"_1\cup  Z_{*,1},\dots, T"_k\cup  Z_{*,k} \})<0$. 
But no quadratic polynomial with a positive coefficient at $x^2$ can be negative at the ends of an interval and positive in the middle.
So we have the contradiction. This proves the thesis
that the (globally) optimal $k$-means clustering remains 
(globally) optimal after transformation. 
\end{proof}

So summarizing the new $\Gamma$ transformation preserves local and global optima of $k$-means for a fixed $k$.
Therefore $k$-means algorithm is consistent under this transformation.

Hence 
\begin{theorem}{} \label{thm:KleinbergImpossibilityDeniedForKmeans} 
$k$-means algorithm    satisfies
Scale-Invariance,
 $k$-Richness,
and
 centric Consistency.
\end{theorem}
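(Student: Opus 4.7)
The plan is to establish the three claimed properties separately, since each has already been prepared by results earlier in the paper. First I would verify Scale-Invariance by direct inspection of the target function (\ref{eq:Q::kmeans}): under a rescaling $d \mapsto \alpha d$ with $\alpha > 0$, every squared distance entering $Q(\Gamma)$ is multiplied uniformly by $\alpha^2$, so the set of $Q$-minimizing partitions is unchanged; this handles $k$-means-ideal. For $k$-means-random, uniform sampling of seeds is manifestly scale-free; for $k$-means++ the seeding probabilities are ratios of squared distances and hence likewise unchanged. The subsequent assignment and mean-update steps depend only on comparisons of squared distances, which are invariant under a common positive rescaling, so the induced distribution over output partitions is identical under $d$ and $\alpha d$.

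Next, for $k$-Richness I would simply invoke Theorem \ref{th:krichnesskmeans}, whose one-dimensional construction with explicitly chosen inter-cluster gaps produces any prescribed $k$-partition under each of $k$-means-ideal, $k$-means-random (with the lower bound $k!/k^k$ on per-run success probability) and $k$-means++ (with the product lower bound derived there). For the randomized variants, repeated independent runs drive the success probability arbitrarily close to one, which matches the probabilistic reading of $k$-richness used throughout the paper.

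For centric consistency I would combine Theorems \ref{thm:localCentricConsistencyForKmeans} and \ref{thm:globalCentricCinsistencyForKmeans}, which together show that both local and global minima of $Q$ are preserved when a $\Gamma^*$ transform is applied to a single cluster. A general $\Gamma^*$ transform acting on several clusters is a composition of single-cluster transforms applied to disjoint sets of points, so iterating the single-cluster result yields the full property. Since $k$-means-random and $k$-means++ converge to local minima of $Q$, and since the centric-consistency argument preserves the full landscape of local minima, the output distributions of the randomized variants are also preserved.

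The main obstacle, had the preceding technical work not been done, would have been the centric-consistency step, which hinged on the quadratic-polynomial sign argument of Theorem \ref{thm:globalCentricCinsistencyForKmeans}; given that this is already in place, the present theorem reduces to a careful assembly of earlier results. The only subtlety I would watch for is consistent handling of the three algorithmic variants (ideal, random, ++) across all three properties, but since each cited theorem was already stated for all relevant variants, no new technical work is required.
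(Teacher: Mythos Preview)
Your proposal is correct and mirrors the paper's own treatment: the paper does not give a separate proof but simply records the theorem as a summary of scale-invariance (noted in Section~\ref{sec:kmeansContraKleinberg}), $k$-richness (Theorem~\ref{th:krichnesskmeans}), and centric consistency (Theorems~\ref{thm:localCentricConsistencyForKmeans} and~\ref{thm:globalCentricCinsistencyForKmeans}). One small caution: your final claim that the \emph{full landscape} of local minima---and hence the output distributions of the randomized variants---is preserved goes beyond what those theorems establish (they guarantee only that the \emph{given} optimum remains one; the paper itself notes in a footnote that other optima may change), so you should phrase the conclusion for $k$-means-random and $k$-means++ in terms of the specific returned partition rather than the entire distribution.
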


Note that ($\Gamma^*$ based) centric  Consistency 
is not a specialization of Kleinberg's consistency 
as the requirement of increased distance between all elements of different clusters is not required in
$\Gamma^*$ based  Consistency. Note also that the decrease of distance does not need to be 
equal for all elements as long as the gravity center does not relocate. 
Also a limited rotation of the cluster may be allowed for. 
\Bem{
But we could strengthen centric-consistency to be in concordance with Kleinberg['s consistency and under this strengthening $k$-means would of course still behave properly. 
}%Bem
%-----------------------------------------------
\section{Moving clusters - motion consistency} \label{sec:motionconsistency}
As we have stated already, in the $\mathbb{R} ^n$ it is actually impossible to move clusters in such a way as to increase distances to all the other elements of all the other clusters (see Theorem \ref{thm:noouterConsistency}).
However, we shall ask ourselves if we may possibly move away clusters as whole, via increasing the distance between cluster centers and not overlapping cluster regions, which, in case of $k$-means, represent Voronoi-regions. 

\begin{ax}
A clustering method conforms to \emph{motion consistency}, 
if it returns the same clustering when the distances of cluster centers are increased by moving each point of a cluster by the same vector without leading to overlapping of the convex regions of clusters.
\end{ax}

Let us concentrate on the $k$-means case and let us look at two neighboring clusters. 
The Voronoi regions, associated with $k$-means clusters,  are in fact polyhedrons, such that the "outer" polyhedrons (at least one of them) can be moved away from the rest without overlapping any other region.

So is such an operation on regions permissible without changing the cluster structure? 
A closer look at the issue tells us that it is not. 
As $k$-means terminates, the neighboring clusters' polyhedrons touch each other via a hyperplane such that the straight line connecting centers of the clusters is orthogonal to this hyperplane. 
This causes that points on the one side of this hyperplane lie more closely to the one center, and on the other to the other one. 
But if we move  the clusters in such a way that both touch each other along the same hyperplane, then it happens that  some points within the first cluster will become closer to the center of the other cluster and vice versa.%
\footnote{This is by the way the nice trick behind the claim in 
\cite{Ackerman:2014nips} that incremental $k$-means does not identify perfectly separated clusters. 
Clusters in $k$-means are not the points, they are polyhedrons, contrary to the assumptions in \cite{Ackerman:2014nips}. 
} 
So moving the clusters generally will change their structure (points switch clusters) unless the points lie actually not within the polyhedrons but rather within "paraboloids" with appropriate equations. 
Then moving along the border hyperplane will not change cluster membership (locally). 
But the intrinsic cluster borders are now "paraboloids". What would happen if we relocate the clusters allowing for touching along the "paraboloids"? The problem will occur again.

Hence the question can be raised: What shape should have the $k$-means clusters in order to be (locally) immune to movement of whole clusters?

Let us consider the problem of susceptibility to class membership change 
within a 2D plane containing the two cluster centers.
Let the one cluster center be located at a point (0,0) in this plane 
and the other at $(2x_0,2y_0)$. Let further the border of the first cluster be characterized by a (symmetric) function $f(x)$ and le the shape of the border of the other one $g(x)$  
be the same, but properly rotated: $g(x)=2y_0-f(x-2x_0)$ so that the cluster center is in the same.
Let both have a touching point (we excluded already a straight line and want to have convex smooth borders).
From the symmetry conditions one easily sees that the touching point must be 
$(x_0, y_0)$. 
As this point lies on the surface of $f()$, $y_0=f(x_0)$ must hold. 
For any point $(x,f(x)$ of the border of the first cluster with center $(0,0)$ the following must hold:
\begin{equation}
(x-2x_0)^2+(f(x)-2f(x_0))^2-x^2-f^2(x)\ge 0
\end{equation}

That is
$$-2x_0(2x-2x_0)-2f(x_0)\left(2f(x)-2f(x_0)\right) \ge 0$$ 
%$$-2f(x_0)\left(2f(x)-2f(x_0)\right) \ge  2x_0(2x-2x_0)$$ 
$$-f(x_0)\left(f(x)-f(x_0)\right) \ge  x_0(x-x_0)$$ 
Let us consider only positions of the center of the second cluster below the $X$ axis. In this case $f(x_0)< 0$. 
Further let us concentrate on $x$ lower than $x_0$. We get 
$$-\frac{ f(x)-f(x_0)}{x-x_0}  \ge  \frac{x_0}{-f(x_0)}$$ 
In the limit, when $x$ approaches $x_0$. 
$$-f'(x_0)   \ge  \frac{x_0}{-f(x_0)}$$ 
Now turn to   $x$ greater than $x_0$. We get 
$$-\frac{ f(x)-f(x_0)}{x-x_0}  \le  \frac{x_0}{-f(x_0)}$$ 
In the limit, when $x$ approaches $x_0$. 
$$-f'(x_0)   \le  \frac{x_0}{-f(x_0)}$$ 
This implies 
\begin{equation}
-f'(x_0)   =  \frac{-1}{ \frac{f(x_0)}{x_0}}
\end{equation}

Note that $\frac{f(x_0)}{x_0}$ is the directional tangent of the straight line connecting both cluster centers.
As well as it is the directional tangent of the line connecting the center of the first cluster to its surface. 
$f'(x_0)$ is the tangential of the borderline of the first cluster at the touching point of both clusters. 
The equation above means both are orthogonal. 
But this property implies that  $f(x)$ must be definition (of a part) a circle centered at $(0,0)$. 
As the same reasoning applies at any touching point of the clusters, a $k$-means cluster would have to be (hyper)ball-shaped in order to allow the movement of the clusters without elements switching cluster membership.

The tendency of $k$-means to recognize best ball-shaped clusters has been known long ago, but we are not aware of presenting such an argument for this tendency.

It has to be stated however that clusters, even if enclosed in a ball-shaped region, need to be separated sufficiently to be properly recognized. 
Let us consider, under which circumstances a cluster $C_1$ of radius $r_1$ containing $n_1$ elements 
would take over $n_{21}$ elements (subcluster  $C_{21}$) of a cluster  $C_2$ of radius $r_2$ of cardinality $n_{2}$. 
Let $n_{22}=n_2-n_{21}$ be the number of the remaining elements (subcluster  $C_{22}$ of the second cluster.
Let the enclosing balls of both clusters be separated by the distance (gap) $g$. 
Let us consider the worst case that is that 
the center of the $C_{21}$ subcluster lies on a straight line segment connecting both cluster centers.
The center of the remaining $C_{22}$ subcluster would lie on the same line but on the other side of the second cluster center. 
Let $r_{21}, r_{22}$ be distances of centers of $n_{21}$ and $n_{22}$ from the center of the second cluster. 
The relations
$$n_{21}\cdot r_{21}= n_{22}\cdot r_{22}, \ r_{21}\le r_2, \ r_{22}\le r_2$$
must hold. 
Let denote with $SSC(C)$ the sum of squared distances of elements of the set $C$ to the center of this set.

So in order for the clusters to be stable 
$$SSC(C_1)+SSC(C_2) \le SSC(C_1\cup C_{21})+SSC(C_{22}) $$
must hold. But
$$SSC(C_2)=SSC(C_{21})+SSC(C_{22})+n_{21}\cdot r_{21}^2+n_{22}\cdot r_{22}^2$$
$$SSC(C_1\cup C_{21})=SSC(C_1)+SSC(C_{21})+\frac{n_1 n_{21}}{n_1+ n_{21}}(r_1+r_2+g-r_{21})^2$$

Hence
$$SSC(C_1)+SSC(C_{21})+SSC(C_{22})+n_{21}\cdot r_{21}^2+n_{22}\cdot r_{22}^2
$$ $$
 \le SSC(C_1)+SSC(C_{21})+\frac{n_1 n_{21}}{n_1+ n_{21}}(r_1+r_2+g-r_{21})^2+SSC(C_{22}) $$

$$    n_{21}\cdot r_{21}^2+n_{22}\cdot r_{22}^2
 \le    \frac{n_1 n_{21}}{n_1+ n_{21}}(r_1+r_2+g-r_{21})^2  $$
 
$$   \frac{ n_{21}\cdot r_{21}^2+n_{22}\cdot r_{22}^2 }
{   \frac{n_1 n_{21}}{n_1+ n_{21}} }
 \le  (r_1+r_2+g-r_{21})^2  $$

$$ \sqrt{  \frac{ n_{21}\cdot r_{21}^2+n_{22}\cdot r_{22}^2 }
{   \frac{n_1 n_{21}}{n_1+ n_{21}} }}
 \le   r_1+r_2+g-r_{21}   $$

$$ \sqrt{  \frac{ n_{21}\cdot r_{21}^2+n_{22}\cdot r_{22}^2 }
{   \frac{n_1 n_{21}}{n_1+ n_{21}} }}
 -r_1-r_2+r_{21} 
 \le   g   $$

$$ \sqrt{  \frac{ n_{21}\cdot r_{21}^2+n_{21}\cdot r_{21}\cdot r_{22}  }
{   \frac1{1/n_1+ 1/n_{21}} }}
 -r_1-r_2+r_{21} 
 \le   g   $$

$$ \sqrt{  ( n_{21}\cdot r_{21}^2+n_{21}\cdot r_{21}\cdot r_{22}  )
 (1/n_1+ 1/n_{21} )}
  -r_1-r_2+r_{21} 
 \le   g   $$
$$ \sqrt{  (   r_{21}^2+  r_{21}\cdot r_{22}  )
 (n_{21}/n_1+ 1  )}
  -r_1-r_2+r_{21} 
 \le   g   $$

As $r_{22}= \frac{r_{21} n_{21}}{n_2-n_{21}}$

$$ \sqrt{  (   r_{21}^2+  r_{21}\cdot \frac{r_{21} n_{21}}{n_2-n_{21}}  )
 (n_{21}/n_1+ 1  )}
  -r_1-r_2+r_{21} 
 \le   g   $$
$$ r_{21}  \sqrt{  (  1 +    \frac{ n_{21}}{n_2-n_{21}}  )
 (n_{21}/n_1+ 1  )}
  -r_1-r_2+r_{21} 
 \le   g   $$
 
$$ r_{21}  \sqrt{     \frac{ n_{2}}{n_2-n_{21}}   \frac{n_1+n_{21}}{n_1}}
  -r_1-r_2+r_{21} 
 \le   g   $$
$$ r_{21}  \sqrt{     \frac{ n_2}{n_1}   \frac{n_1+n_{21}}{n_2-n_{21}}}
  -r_1-r_2+r_{21} 
 \le   g   $$
Let us consider the worst case when the elements to be taken over are at the "edge" of the cluster region ($r_{21}=r_2$). 
Then
$$ r_2  \sqrt{     \frac{ n_2}{n_1}   \frac{n_1+n_{21}}{n_2-n_{21}}}
  -r_1 
 \le   g   $$
The lower limit on $g$ will grow with $n_{21}$, but  $n_{21}\le 0.5 n_2$, because otherwise $r_{22}$ would exceed $r_2$.
Hence in the worst case 
$$ r_2  \sqrt{     \frac{ n_2}{n_1}   \frac{n_1+n_2/2}{n_2/2}}
  -r_1 
 \le   g   $$
\begin{equation}
 r_2  \sqrt{     2 (1+0.5n_2/n_1)  }
  -r_1 
 \le   g   
\end{equation}

In case of clusters with equal sizes and equal radius this amounts to
$$g\ge r_1(\sqrt{3}-1)\approx 0.7r_1 $$

So we can conclude 
\begin{theorem}{}
$k$-means algorithm conforms (locally) to Motion Consistency axiom. 
\end{theorem}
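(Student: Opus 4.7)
The plan is to assemble the proof from the two structural ingredients developed in the analysis just above the statement: (i) the geometric fact that for a $k$-means cluster to survive translation of itself (or of its neighbor) by a rigid vector, its boundary at every potential contact point must be locally a circular arc centered at $\boldsymbol\mu_i$, which forces each cluster to be enclosed in a ball $B(\boldsymbol\mu_i,r_i)$; and (ii) the quantitative separation inequality
\begin{equation*}
g \;\ge\; r_2\sqrt{\,2\,(1+0.5\,n_2/n_1)\,} \;-\; r_1
\end{equation*}
just derived, which rules out one cluster absorbing any subcluster of a neighbor. My goal is to show that any motion admissible under the motion consistency axiom (i.e., one that does not overlap the Voronoi/convex regions) preserves both conditions, and hence preserves the local minimum of $Q$ reached by $k$-means.

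First I would fix notation and reduce to pairwise interactions: because the $k$-means assignment rule is pointwise (each $\mathbf{x}$ goes to the nearest center), it suffices to verify, for every pair of clusters $C_i,C_j$ and every $\mathbf{x}\in C_i$, that after translation by the cluster-specific vectors $\mathbf{t}_i,\mathbf{t}_j$ one still has $\|\mathbf{x}+\mathbf{t}_i-(\boldsymbol\mu_i+\mathbf{t}_i)\|\le\|\mathbf{x}+\mathbf{t}_i-(\boldsymbol\mu_j+\mathbf{t}_j)\|$. Equivalently, I would show that no subcluster $C_{ij}\subseteq C_i$ strictly lowers $Q$ by being reassigned to $C_j$ after the motion, which is exactly the criterion analysed in the preceding gap computation.

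Next I would carry the ball-shape derivation of the preceding section over verbatim: the argument that translating two touching clusters causes a boundary exchange unless $-f'(x_0)=-1/(f(x_0)/x_0)$ holds at every contact point forces each cluster region to agree with a (hyper)ball around its own center. I would therefore assume the clusters are enclosed in balls $B(\boldsymbol\mu_i,r_i)$ and argue that if the motion does not overlap the convex cluster regions, then it does not overlap the enclosing balls either (up to the gap $g$ already present before the motion), so the pre-motion separation inequality remains valid post-motion. Plugging this into the derived threshold, the worst-case subcluster $C_{ij}$ (taken as a boundary chunk of at most half the neighbor's mass, at distance $r_{j}$ from $\boldsymbol\mu_j$) cannot be absorbed by $C_i$. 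Applying this to every ordered pair of neighboring clusters closes the argument.

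The step I expect to be the main obstacle is the passage from the two-cluster worst-case inequality to the global multi-cluster case. Two difficulties need to be handled: (a) the constant $\sqrt{2(1+0.5\,n_j/n_i)}$ is derived for one pair with a specific worst-case placement of the absorbed subcluster on the line through $\boldsymbol\mu_i,\boldsymbol\mu_j$, and I must argue that in a configuration with many neighbors the same bound suffices per pair (this is where the pointwise nature of $k$-means reassignment is essential, since contributions from different neighbors do not compound); and (b) the non-overlap-of-Voronoi-regions hypothesis on the motion must be shown to preserve the stronger ball-separation gap, which is the quantity actually used in the inequality. Once these two points are dispatched, local conformance to motion consistency follows: the translated partition is still a local minimum of $Q$, and $k$-means, treated as a local-search procedure, returns it.
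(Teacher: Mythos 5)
Your proposal follows essentially the same route as the paper: the theorem is obtained there by combining the ball-shape requirement (derived from the tangency condition $-f'(x_0)=-1/(f(x_0)/x_0)$) with the pairwise gap inequality $g \ge r_2\sqrt{2(1+0.5\,n_2/n_1)}-r_1$, which are exactly the two ingredients you assemble. The paper's own argument is in fact no more formal than your plan --- it simply states ``so we can conclude'' after the two-cluster worst-case analysis --- so the multi-cluster compounding issue and the preservation of the ball-separation gap under the motion, which you correctly flag as the delicate steps, are left implicit there as well.
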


Note that the motion consistency axiom is a substitute for outer-consistency which is impossible continuously in Euclidean space. 
It is to be underlined that we speak here about local optimum of $k$-means. With the abovementioned gap size the global $k$-means minimum may lie elsewhere, in a clustering possibly without gaps. 
Also the motion consistency transformation preserves as local minimum the partition it is applied to. Other local minima and global minimum can change.    

Note that compared to inner-consistency the centric consistency is quite rigid. 
And so is motion consistency compared to outer-consistency. 

In two subsequent sections we will investigate if the rigidity of these transformations can be weakened under appropriate width of the gaps, and if we can grant these properties under global minimum. 
In particular, 
we shall study, 
how well the clusters need to be separated 
so that it is enough to find the global optimum.

%-----------------------------------------------
\section{Cluster separation versus Kleinberg's axioms of $k$-richness, consistency, scaling invariance} 
\label{sec:kmeansperfectballclustering}

\subsection{Perfect ball clusterings}

The problem with $k$-means (-random and ++) is the discrepancy between the theoretically optimized function ()$k$-means-ideal) 
and the actual approximation of this value. 
It appears to be problematic even for "well-separated" clusters.

%what impact the well-separability as characterized in the %preceding section will have on the attainability of the 
%clustering goal.

First let us point to the fact that "well-separatedness" may keep the algorithm in a local minimum. 

It is commonly assumed that a good initialization of a $k$-means clustering is one 
where the seeds hit different clusters.
It is well known, that under some circumstances the $k$-means does not recover from poor initialization and as a consequence a natural cluster may be split even for "well-separated" data. 

But hitting each cluster may be not sufficient as neighboring clusters may be able to shift the cluster center away from its cluster. 

Hence let us investigate what kind of well-separability would be sufficient to ensure that once clusters are hit by one seed each, would never loose the cluster center. 

Let us 
investigate the working hypothesis 
 that two clusters are well separated if we can draw  a ball of some radius $\rho$ around true cluster center of each of them and there is a gap between these balls. 
We claim that 
\begin{theorem}{}
If the distance between any the cluster centers $A,B$ is at least $4\rho_{AB}$, where $\rho_{AB}$ is the radius of a ball centered 
at $A$ and enclosing its cluster (that is cluster lies in the interior of the ball)
and it also is the radius of a ball centered 
at $B$ and enclosing its cluster% 
,  then once each cluster is seeded the clusters cannot loose their cluster elements for each other during  $k$-means-random and $k$-means++ iterations.
\end{theorem}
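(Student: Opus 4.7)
The plan is to proceed by induction on the iteration count of $k$-means, with the inductive invariant that every cluster center maintained by the algorithm lies within distance $\rho$ of exactly one true cluster center (the one it started in), and that the assignment step respects the true cluster partition in the sense that no element of true cluster $A$ is ever assigned to a center that originated in another true cluster $B$. Note the hypothesis permits multiple seeds to land in the same true cluster; the claim is only that no cross-contamination between \emph{different} true clusters occurs, so the invariant must be stated per-center rather than per-cluster.

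The base case is immediate: after seeding, each maintained center is a data point that lies strictly inside the enclosing ball of radius $\rho$ around its true cluster center (since the cluster is in the interior of that ball). For the inductive step I would first establish the key geometric lemma: if $c_A$ is a center with $\|c_A - A\| < \rho$ and $c_B$ is a center with $\|c_B - B\| < \rho$, where $\|A-B\| \ge 4\rho$, then any $x$ with $\|x - A\| < \rho$ satisfies $\|x - c_A\| < 2\rho$ (triangle inequality) while $\|x - c_B\| > 2\rho$ (reverse triangle inequality: $\|x - c_B\| \ge \|A-B\| - \|x - A\| - \|c_B - B\| > 4\rho - \rho - \rho = 2\rho$). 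Consequently $x$ is strictly closer to $c_A$ than to $c_B$, so in the assignment step $x$ cannot migrate from a center belonging to $A$'s cluster to a center belonging to $B$'s cluster.

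For the update step I would use convexity: once the assignment is compatible with the true partition, the set of data points assigned to a given center $c_A$ is a subset of true cluster $A$, hence contained in the open ball $B(A,\rho)$. Since open balls in Euclidean space are convex, the centroid of that subset also lies in $B(A,\rho)$, which re-establishes the invariant for the next iteration. Combining this with the assignment-step lemma closes the induction and yields the theorem for both $k$-means-random and $k$-means++, since after the (possibly different) seedings both algorithms perform identical assignment/update iterations.

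The main obstacle here is merely bookkeeping rather than any deep difficulty: one must carefully track strict versus non-strict inequalities (the hypothesis places clusters in the \emph{interior} of the enclosing ball, which is exactly what is needed to turn the $4\rho$ bound into a strict comparison $\|x - c_A\| < 2\rho < \|x - c_B\|$ without ties) and one must state the invariant at the level of individual centers rather than clusters so that the argument accommodates seedings in which one true cluster receives several seeds. Tie-breaking in the assignment step never comes into play because the inequality is strict.
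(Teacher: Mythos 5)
Your proof is correct, and it reaches the conclusion by a genuinely different and considerably more elementary route than the paper. The paper argues geometrically about the Voronoi boundary: it projects onto the plane through $A$ and $B$ parallel to the segment $XY$ joining the two current centers, and shows via a tangent-line/angle computation (ultimately reducing to $\sin\alpha\le 1$) that the perpendicular bisector of $XY$ passes between the two enclosing balls, so neither ball is cut by the cell boundary. Your key lemma establishes the same separation pointwise with two applications of the triangle inequality: $\|x-c_A\|<\rho+\rho=2\rho$ and $\|x-c_B\|>4\rho-\rho-\rho=2\rho$, which is shorter, dimension-free from the outset, and makes transparent why $4\rho$ is exactly the threshold at which the argument closes (the two worst cases meet in a tie at $2\rho$, excluded only because the clusters lie in the \emph{interior} of the balls). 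You also buy something the paper leaves implicit: the paper treats $X,Y$ as arbitrary points of the balls and does not explicitly verify that the updated centers remain inside them across iterations, whereas your convexity step (the centroid of a nonempty subset of an open ball lies in that ball) closes the induction cleanly. One minor remark: since the hypothesis "each cluster is seeded" with $k$ seeds and $k$ clusters forces exactly one seed per cluster, the multi-seed-per-cluster case you accommodate is vacuous here (and with it the worry about a center losing all its points); this extra generality is harmless but not needed for the stated theorem.
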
  

Before starting the proof, let us introduce related definitions.

\begin{definition}
We shall say that clusters centered at $A$ and $B$ and enclosed  in balls centered at $A,B$ and with radius $\rho_{AB}$ each are \emph{nicely ball-separated},
if the distance between  $A,B$  is at least $4\rho_{AB}$.
  If all pairs of clusters are nicely ball separated with the same ball radius, then we shall say that they are  \emph{perfectly ball-separated}. 
\end{definition} 

\begin{proof}
For the illustration of the proof  see Figure \ref{fig:fourradii}.

\begin{figure}
\centering
\includegraphics[width=0.8\textwidth]{\figaddr{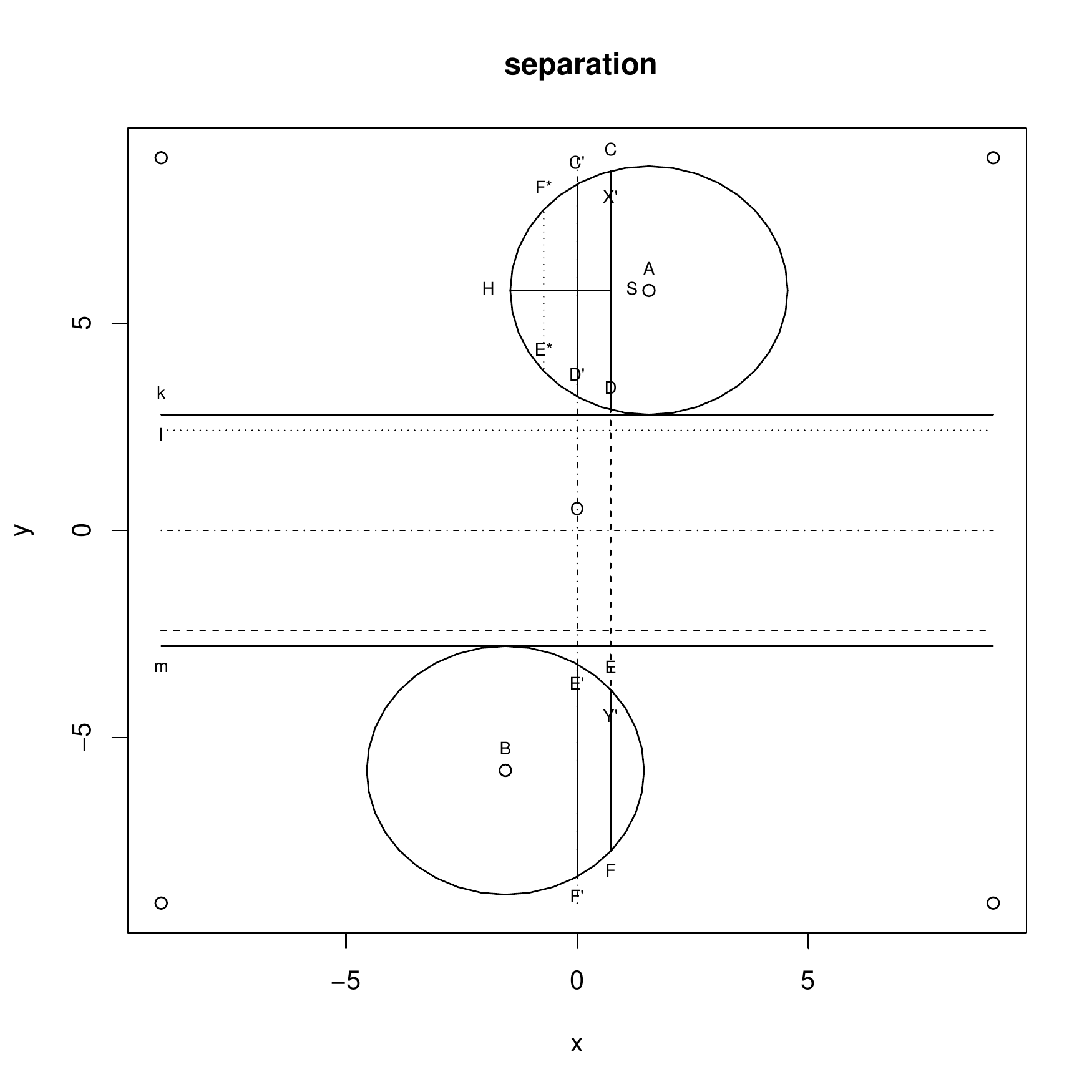}}  %
\caption{An illustrative figure for proof of 4 radius distance ensuring good separability.  
}\label{fig:fourradii}
\end{figure}

Consider the two points $A,B$ being the two ball centers  and two points, $X,Y$, one being in each ball (presumably the cluster centers at some stage of the $k$-means algorithm). To represent their distances faithfully, we need at most a 3D space. 

Let us consider the plane established by the line $AB$ and parallel to the line $XY$. 
Let $X'$ and $Y'$ be projections of $X,Y$ onto this plane. 
Now let us establish that the hyperplane $\pi$ orthogonal to $X,Y$, and passing through the middle of the line segment $XY$, that is the hyperplane containing the boundary between clusters centered at $X$ and $Y$ does not cut any of the balls centered at $A$ and $B$. 
This hyperplane will be orthogonal to the plane of the Figure  \ref{fig:fourradii} and so it will manifest itself as an intersecting line $l$ that should not cross circles around $A$ and $B$, being projections of the respective balls.  Let us draw two solid lines $k,m$ between circles $O(A,\rho)$ and $O(B,\rho)$ tangential to each of them. Line $l$ should lie between these lines, in which case the cluster center will not jump to the other ball. 

Let the line $X'Y'$ intersect with the circles $O(A,\rho)$ and $O(B,\rho)$ at points $C,D,E,F$ as in the figure. 

It is obvious that the line $l$ would get closer to circle A, if the points X', Y' would lie closer to C and E, or closer to circle $B$ if they would be closer to $D$ and $F$. 

Therefore, to show that the line $l$ does not cut the circle $O(A,\rho)$ it is sufficient to consider $X'=C$ and $Y'=E$. (The case with ball $Ball(B,\rho)$ is symmetrical).

Let $O$ be the center of the line segment $AB$. Let us draw through this point a line parallel to $CE$ that cuts the circles at points $C', D', E'$ and $F'$. 
Now notice that centric symmetry through point $O$ transforms the circles $O(A,\rho)$,$O(B,\rho)$ into one another, and point $C'$  into $F'$ and $D'$ into $E'$. Let $E^*$ and $F^*$ be images of points $E$ and $F$ under this symmetry. 

In order for the line $l$ to lie between $m$ and $k$, the middle point of the line segment $CE$ shall lie between these lines.
 
Let us introduce a planar coordinate system centered at $O$ with $\mathcal{X}$ axis parallel to  lines $m,k$, such that $A$ has both coordinates non-negative, and $B$ non-positive. 
Let us denote with  $\alpha$ the angle between the lines $AB$ and $k$. 
As we assume that the distance between $A$ and $B$ equals $4\rho$, then the distance between lines $k$ and $m$ amounts to $2\rho(2\sin(\alpha)-1)$. 
Hence the  $\mathcal{Y}$ coordinate of line $k$ equals $\rho (2 \sin(\alpha)-1)$.

So the  $\mathcal{Y}$ coordinate of the center of line segment $CE$ shall be not higher than this. 
Let us express this in vector calculus:
$$4(y_{OC}+y_{OE})/2 \le  \rho (2\sin(\alpha)-1)$$

Note, however that 
$$y_{OC}+y_{OE}=y_{OA}+y_{AC}+y_{OB}+y_{BE}= y_{AC}+y_{BE} =  y_{AC}-y_{AE^*} 
=y_{AC}+y_{E^*A}$$ 

So let us examine the circle with center at A. 
Note that the lines $CD$ and $E^*F^*$ are at the same distance from  the line C'D'. Note also that the absolute values  of direction coefficients of tangentials of circle A at C' and D' are identical.
The more distant these lines are, as line $CD$ gets closer to $A$, the $y_{AC}$ gets bigger, and $y_{E^*A}$ becomes smaller. 
But from the properties of the circle we see that 
$y_{AC}$ increases at a decreasing rate, while  $y_{E^*A}$ decreases at an increasing rate. 
So the sum $y_{AC}+y_{E^*A}$  has the biggest value when $C$ is identical with $C'$ and we need hence to prove only that 
$$(y_{AC'}+y_{D'A} )/2=y_{AC'} \le  \rho(2\sin(\alpha)-1)$$

Let $M$ denote the middle point of the line segment $C'D'$. 
As point $A$ has the coordinates $(2\rho \cos(\alpha), 2\rho \sin(\alpha))$,
the point $M$ is at distance of $2\rho\cos(\alpha)$ from $A$. 
But $C'M^2= \rho^2-(2\rho\cos(\alpha))^2 $.  

So we need to show that 
$$\rho^2-(2\rho\cos(\alpha))^2 \le  ( \rho (2\sin(\alpha)-1))^2$$
In fact we get from the above 
$$\rho^2-4\rho^2\cos(\alpha)^2 \le  \rho^2(2\sin(\alpha)-1)^2$$
Dividing by $\rho^2$
$$1-4 \cos(\alpha)^2 \le   (2\sin(\alpha)-1)^2$$
$$1-4 \cos(\alpha)^2 \le   4\sin(\alpha)^2 -4\sin(\alpha) +1$$
Adding $4 \cos(\alpha)^2$ to both sides and subtracting 1 we get 
$$0  \le   4  -4\sin(\alpha)  $$
Dividing by 4
$$0  \le   1  - \sin(\alpha)  $$
which is a known trigonometric  relation. 

This means in practice that whatever point from the one and the other cluster is picked randomly as cluster center, then the Voronoi tessellation of the space will contain only points from a single cluster. 
%
%This ends the proof.    
\end{proof}

Let us discuss at this point a bit the notions of "perfect separation" as introduced in \cite{Ackerman:2014nips}. 
In their Theorem 4.4. Ackerman and Dasgupta  \cite{Ackerman:2014nips} show that the incremental $k$-means algorithm, as introduced in  their Algorithm 2.2 , is not able to cluster correctly data that is "perfectly clusterable" (their Definition 4.1). 
However,  it is obvious that under the "perfect-ball-separation"   as introduced here their  incremental $k$-means algorithm\footnote{ 
Algorithm 2.2. (Sequential $k$-means) should be slightly modified:
\\ Set $T = (t_1,\dots,t_k)$ to the first $k$ data points 
\\Initialize the counts $n_1, n_2,\dots, n_k$ to 1 
\\Repeat:
\\ \hspace*{0.5cm}  Acquire the next example, $t_{k+1}$. Set $n_{k+1}=1$ 
\\ \hspace*{0.5cm}  
If $t_i$ is the closest center to $t_j$, $j\ne i$, 
\\ \hspace*{0.5cm} \hspace*{0.5cm}  Replace $t_i=(t_in_i+t_jn_j)/(n_i+n_j)$, thereafter $n_i=n_i+n_j$
\\ \hspace*{0.5cm} \hspace*{0.5cm}  If $j\ne k+1$ then replace $t_j=t_{k+1}$, $n_j=n_{k+1}$. 
}   will discover the structure of the clusters.
The reason is as follows. Perfect ball separation ensures that there exists an $r$ of the enclosing ball such that the distance between any two points within the same ball is lower than $2r$ and between them is bigger than $2r$. So whenever Ackerman's incremental $k$-mean merges two points, they are the points of the same ball. And upon merging the resulting point lies again within the ball. 
So we can conclude

\begin{theorem}{} The incremental $k$-means algorithm will discover the structure of perfect-ball-clustering.
\end{theorem}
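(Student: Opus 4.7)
The plan is to exploit the geometric separation implied by perfect ball clustering to argue that every merge performed by the incremental $k$-means algorithm happens strictly within a single ball, and hence the algorithm never mixes points from distinct clusters.

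First I would extract from the hypothesis the following strict inequality, which is already sketched in the paragraph preceding the theorem: there exists $r>0$ such that any two data points lying in the same enclosing ball are at Euclidean distance strictly less than $2r$ (they lie in the interior of a ball of radius $r$), while any two data points lying in distinct enclosing balls are at distance strictly greater than $2r$ (because the ball centers are separated by at least $4r$ and the radii are $r$, so two interior points are separated by more than $4r-2r=2r$). This gap is the one fact that drives the whole argument.

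Next I would state and prove, by induction on the number of arrived data points, the invariant that after every iteration each active center $t_i$ is a convex combination of data points that all belong to a single original cluster, and consequently $t_i$ itself lies in the interior of the enclosing ball of that cluster. The base case is immediate because the initial $k$ centers are single data points. For the inductive step, once $t_{k+1}$ has been appended the algorithm holds $k+1$ centers distributed among at most $k$ balls, so the pigeonhole principle forces at least two of them into a common ball; by the inductive hypothesis they lie in the interior of that ball and are therefore at distance strictly less than $2r$, whereas any pair of centers occupying distinct balls is at distance strictly greater than $2r$. Hence the closest pair of centers is necessarily a same-ball pair, the merge is a same-cluster merge, and the weighted average stays inside the same ball, preserving the invariant.

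Finally I would observe that, as the data stream continues to present points from every cluster, each ball eventually acquires a center and keeps one forever: once a ball is represented, no merge can remove its center (the closest pair always lives within a multiply represented ball), and a fresh point arriving from a yet unrepresented ball is kept because the algorithm's rule "if $j\ne k+1$ then replace $t_j=t_{k+1}$" places it into the slot vacated by the same-ball merge that is triggered. In the limit the algorithm returns exactly one center per ball, and the induced partition coincides with the perfect ball clustering.

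The main obstacle I expect is precisely this last bookkeeping step. One has to verify carefully that whenever some ball is not yet represented, the closest pair among the $k+1$ active centers cannot involve the freshly arrived point $t_{k+1}$; only then is $t_{k+1}$ routed into a free slot rather than silently absorbed into a center from a different cluster. This once again reduces to the strict $2r$ gap, since $t_{k+1}$ sits in a ball containing no other active center and is therefore at distance greater than $2r$ from all of them, while the merging pair of same-ball centers is at distance less than $2r$.
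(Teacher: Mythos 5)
Your proposal is correct and follows essentially the same route as the paper: the paper's (much terser) argument likewise rests on the observation that perfect ball separation yields a threshold $2r$ with all same-ball distances below it and all cross-ball distances above it, so every merge performed by the incremental algorithm combines points of one ball and the weighted average remains in that ball. Your added pigeonhole step and the bookkeeping for unrepresented balls are details the paper leaves implicit, and they check out.
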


Let us note at this point, however, that the incremental $k$-means  algorithm  would return only a set of cluster centers without stating whether or not we got a perfect ball clustering.  But it  is important to know if this is the case because otherwise the resulting set of cluster centers may be arbitrary and under unfavorable conditions it may not correspond to a local minimum of $k$-means ideal at all.  
However, if we are allowed to inspect the data for the second time, such an information  can be provided.\footnote{
One shall proceed as follows on the second pass: 
\\ Let $T = (t_1,\dots,t_k)$ be the resulting set of cluster centers from the first pass.  
\\Initialize the furthest neighbors   $f_1, f_2,\dots, f_k$ with  $t_1, t_2\dots,t_k$ respectively.
\\Repeat:
\\ \hspace*{0.5cm}   Acquire the next example, $x$. 
\\ \hspace*{0.5cm}  
If $t_i$ is the closest center to x,    
\\ \hspace*{0.5cm} \hspace*{0.5cm}  
if $x$ is further away from $t_i$ than $f_i$ then replace $f_i$ with $X$. 
\\Compute distances between corresponding $t_i$ and $f_i$, pick the highest one, compute distances between each pair $t_i,t_j$ and pick the lowest one. 
If the latter is 4 times or more higher than the former one, we got a perfect ball clustering.  
}
A second pass for other algorithms from their section 2 would not yield such a decision.

The difference between our and  their definition of well separatedness lies essentially in their understanding of clustering 
as a partition of data points, while in fact the user is interested in partition of the sample space (in terms of learnability theory of Valiant). 
Hence also a further correction of Kleinberg's axiomatic framework should take this flaw into account. 

Let us further turn to their concept of "nice clustering" (their Def. 3.1.). As they show in their  Theorem 3.8., nice clustering cannot be discovered by an incremental algorithm with memory linear in $k$. 
In Theorem 5.3 they show that their incremental algorithm 5.2. with up to $2^{k-1}$ cluster centers can detect points from each of nice clusters. Again it is not the incremental $k$-means that may achieve it (see their theorem 5.7.) even under "nice convex" conditions. 
Surely our concept of nice-ball-clustering is even more restrictive than their "nice-convex" clustering. 
But if we upgrade their CANDIDATES(S) algorithm so that it behaves like $k$-means that is if we replace the step "Moving bottom-up, assign each internal node the data point in one of its children" with the assignment to the internal node the properly weighted (with respective cardinalities of leaves) average, then the algorithm 5.2. upgraded to incremental $k$-means version will in fact return the "refinement" of the clustering.% 
\footnote{The modified algorithm would look like:
\\CANDIDATES(S) 
\\Run single linkage on S to get a tree (distances between $t$ are used) 
\\Assign each leaf node the corresponding data point 
\\Moving bottom-up, assign each internal node the 
$n=n_L+n_R$, $t=(t_Ln_L+t_Rn_R)/n$, L,R indicating left and right child.
\\ Return all points at distance $< k$ from the root
}
What is more, if we are allowed to have a second pass through the data, then we can pick out the real cluster centers using an upgrade of the CANDIDATES(S) algorithm. The other algorithms considered in their section 5 will fail to do this on the second pass through the data (because of deviations from true cluster center).%      
\footnote{The needed algorithm would look like:
\\Take the tree from the first pass with $t$ values assigned in the first pass. Assign each node an $f$ value identical to $t$ value.  
\\Repeat:
\\ \hspace*{0.5cm}  Acquire the next example, $x$. 
\\ \hspace*{0.5cm} Find the leaf with $t$ closest to $x$.
\\ \hspace*{0.5cm}  Update its $f$ value with $x$ if it is further away from $t$ than $f$.  
\\ \hspace*{0.5cm} Pass $x$ to all  direct and indirect ancestors (internal) nodes  of this leaf 
\\ \hspace*{0.7cm} where in each of these nodes  update its $f$ value with $x$ 
\\ \hspace*{0.7cm} if it is further away from $t$ than $f$.  
\\ For each cut of the tree engaging exactly $k$ nodes check if the nice ball clustering condition is fulfilled for balls rooted at $t$ with radii $\|f-t\|$. 
\\ If for any such a cut the condition holds, the nice ball clustering is found, otherwise it is not.
}

Let us discuss Kleinberg axioms for perfectly ball-separated clusters. 
It is clear that if $k$-means random or $k$-means++ gets initiated in such a way that each initial cluster center hits a different cluster, then upon subsequent steps the cluster centers will not leave the clusters. One gets stuck in a minimum, not necessarily the global one. Let us understand the Kleinberg's phrase "the function returns the clustering" as one of possible (local) minima of the clustering functions. 
$k$-richness is trivially granted if we restrict ourselves to perfectly-ball-separated clusters. 
If one performs the scaling on perfectly ball separated clusters, they will remain perfectly ball separated (scale invariance). 
If one applies moving-consistency transformation   (keeping inner distances and relative positions to the cluster fixed coordinate systems, not bothering about distances between elements in distinct clusters) then the clusters will remain perfectly ball separated.
Also a centric-consistency transformation will keep the partition in the realm of perfect-ball-clusterings. 
Hence 
\begin{theorem}{}
$k$-means, if restricted to perfectly ball separated clusterings, conforms (locally) to $k$-richness, scale-invariance, motion consistency and centric consistency. 
\end{theorem}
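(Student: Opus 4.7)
The plan is to verify each of the four properties in turn, exploiting the fact that perfect ball-separation ($4\rho$ gap between every pair of enclosing balls) is a very rigid structural condition, and that all four transformations involved preserve this structure. By the $4\rho$-separation theorem proved immediately above, once every cluster is seeded by a point lying in its own enclosing ball, the $k$-means iterations cannot swap elements between clusters, so the partition produced is exactly the one induced by the balls. Hence it suffices to show that (i) for every admissible $k$-partition there is a perfectly ball-separated realization, and (ii) each of the three transformations sends a perfectly ball-separated configuration to another such configuration with the same clusters.

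For $k$-\emph{richness}, I would re-use the construction in Theorem~\ref{th:krichnesskmeans}, but place each cluster inside a tiny ball of radius $\rho$ and space the ball centers along a line with gaps well above $4\rho$; this instantly gives a perfectly ball-separated realization of any desired $k$-partition, so the same partition is returned (as a local minimum) by $k$-means-random and $k$-means++ with positive probability. For \emph{scale-invariance}, multiplying all distances by $\alpha>0$ multiplies both the radii $\rho$ and the inter-center distances by $\alpha$, so the $4\rho$ inequality is preserved verbatim; scale-invariance of $Q$ from equation~(\ref{eq:Q::kmeans}) then forces the same partition.

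For \emph{motion consistency}, the transformation translates each cluster rigidly by a vector, keeping inner distances intact, and the hypothesis forbids overlap of the convex (hence ball) regions; thus each cluster still lies in a ball of the same radius $\rho$, and the gaps only grow, so the $4\rho$ condition is maintained and the preceding theorem delivers the same partition. For \emph{centric consistency}, applying the $\Gamma^{*}$ transform with parameter $0<\lambda\le 1$ to one cluster $C$ shrinks it inside a ball of radius $\lambda\rho<\rho$ about its centroid while leaving all other clusters and their centroids fixed; consequently the pairwise ball-center distances are unchanged and the radii do not increase, so the $4\rho$ inequalities continue to hold. Again the stability theorem applies and the clustering returned is unchanged.

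The main (and only real) obstacle is a bookkeeping one: since we are asserting these properties \emph{locally}, we must be careful to interpret ``returns the partition'' as ``retains it as a local minimum after initialization in the respective balls'', which is precisely what the $4\rho$-separation theorem guarantees. Global optimality is not claimed, nor needed; every transformation above maps a perfectly ball-separated local optimum of $k$-means to another perfectly ball-separated local optimum on the same index set, which is exactly the content of the theorem.
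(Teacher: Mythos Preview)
Your proposal is correct and follows essentially the same line as the paper: the paper's justification (in the paragraph immediately preceding the theorem) simply observes that $k$-richness is trivially met, and that each of scaling, motion-consistency and centric-consistency transformations sends a perfectly ball-separated configuration to another such configuration, whence the $4\rho$-separation theorem forces the same local-minimum partition. Your write-up is in fact more explicit than the paper's—particularly in spelling out the $k$-richness construction and the ``local'' interpretation of ``returns the partition''—but the underlying argument is the same preservation-of-structure check.
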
 

But we gain still something more.

\begin{ax}
A clustering method conforms to \emph{inner cluster consistency}, 
if it returns the same clustering when the positions distances of cluster centers are kept, while the distances within each cluster are decreased 
\end{ax}

Note that inner cluster consistency, as compared to inner-consistency, is less restrictive as one does not need to care about distances between elements in different clusters. 

If one performs an inner cluster consistency transformation, the clusters will remain perfectly ball separated (a kind of inner-consistency). So we get

\begin{theorem}{}\label{thm:perfect3axiomsOnek}
$k$-means, if restricted to perfectly ball separated clusterings, conforms (locally) to $k$-richness,  scale-invariance,  motion consistency and inner cluster  consistency. 
\end{theorem}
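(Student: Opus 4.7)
The plan is to reduce the proof to the previous theorem (the one establishing conformance to $k$-richness, scale-invariance, motion consistency, and centric consistency for perfectly ball-separated clusterings) and then to supply a single short argument replacing centric consistency with inner cluster consistency. Since the first three properties ($k$-richness, scale-invariance, motion consistency) are established under exactly the same restriction to perfectly ball-separated clusterings, those parts of the statement can simply be inherited verbatim: the proofs of scale-invariance and motion consistency preservation translate unchanged because neither a uniform scaling nor a translation of an entire cluster alters the fact that each cluster is enclosed in a ball of a given radius, nor the ratio between separation distance and enclosing radius.

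The new ingredient is inner cluster consistency. Here the plan is to argue that the transformation preserves perfect-ball-separation and then invoke Theorem on $4\rho$-separability (stated a few paragraphs earlier) to conclude that the partition remains a local $k$-means minimum. Concretely: assume the original partition $\Gamma$ is perfectly ball-separated, with each cluster $C_i$ enclosed in a ball $B(\mu_i,\rho)$ and pairwise distances $\|\mu_i-\mu_j\|\ge 4\rho$. An inner cluster consistency transformation keeps the $\mu_i$ fixed and can only decrease distances within each cluster. Since the positions $\mu_i$ are unchanged, the new cluster $C_i'$ still lies in a ball centered at $\mu_i$, and its enclosing radius $\rho_i'$ satisfies $\rho_i'\le\rho$ (no intracluster distance increases, so no point can leave the ball around $\mu_i$). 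Hence the transformed partition $\Gamma'$ remains perfectly ball-separated with separation ratio at least $4$, and the $\mu_i$ continue to play the role of feasible seeds.

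Applying the $4\rho$-separability theorem to $\Gamma'$ gives that, once each cluster has been hit by a distinct seed (the $\mu_i$ themselves being such a hit), the $k$-means-random / $k$-means++ iterations cannot migrate any element across the Voronoi hyperplanes separating the $\mu_i$. Thus $\Gamma'$ is fixed by the iteration and is a local minimum of $Q$, which is precisely what the ``returns the same clustering'' clause of inner cluster consistency demands, combining with the already-proven three properties to give the full statement.

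I expect the main obstacle to be purely interpretational rather than computational: the axiom of inner cluster consistency as stated only requires the \emph{cluster centers} to stay in place while intracluster distances decrease, and nothing is said about \emph{how} the points inside a cluster are rearranged; one must check that any such rearrangement keeps each cluster inside a ball around the (unchanged) center whose radius does not exceed the original $\rho$. This follows because the maximum distance from $\mu_i$ to any point of $C_i'$ is itself one of the intracluster distances (between $\mu_i$ viewed as the centroid and the extremal point), which is non-increasing; once this is granted, the rest is immediate from the $4\rho$ criterion.
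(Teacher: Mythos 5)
Your overall route is exactly the paper's: inherit $k$-richness, scale-invariance and motion consistency from the preceding theorem, show that an inner cluster consistency transformation keeps the partition perfectly ball-separated, and then invoke the $4\rho$-separability theorem to conclude the partition stays a local minimum. The paper in fact only \emph{asserts} the preservation step in one sentence, so the interesting part of your write-up is precisely the justification you add for it --- and that justification does not hold. You argue that the enclosing radius cannot grow because ``the maximum distance from $\mu_i$ to any point of $C_i'$ is itself one of the intracluster distances.'' It is not: $\mu_i$ is the centroid, not a data element, and the axiom only constrains distances \emph{between elements} of the cluster (plus fixing the centroid's position). A point can move farther from a fixed centroid while every pairwise element distance weakly decreases. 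Concretely, take the four corners $(\pm1,\pm1)$ (centroid at the origin, enclosing radius $\sqrt2$) and send one corner to approximately $(3/2,0)$ and the other three to points clustered near $(-1/2,0)$; the centroid stays at the origin, all six pairwise distances weakly decrease (the relevant ones go from $2$ or $2\sqrt2$ down to at most $2$), yet the farthest point is now at distance about $3/2>\sqrt2$ from the center. In general one can show via the identity $\|x_i-\mu\|^2=\frac1n\sum_j\|x_i-x_j\|^2-\frac1n\sum_j\|x_j-\mu\|^2$ that the enclosing radius can grow by a factor of up to $\sqrt2$. Consequently a clustering that is separated at exactly the $4\rho$ threshold need not remain perfectly ball-separated after the transformation, and your appeal to the $4\rho$ theorem for $\Gamma'$ is not licensed as written.

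To repair the argument you would need either (i) a quantitative hypothesis with slack --- e.g.\ start from separation at least $4\sqrt2\,\rho$, so that even after the worst-case $\sqrt2$ inflation of the enclosing radius the $4\rho'$ criterion still holds --- or (ii) a stronger reading of the inner cluster consistency axiom in which the distances from each element to its own cluster center are also required not to increase (which is what your ``extremal point'' sentence implicitly assumes). Under either fix the rest of your argument (seeds at the $\mu_i$, no element crossing the separating hyperplanes, hence a fixed point of the iteration and a local minimum) goes through and coincides with the paper's intent. You should be aware, though, that the paper's own one-line proof suffers from the same unproven preservation claim; your version merely makes the hidden assumption visible.
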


As with perfect clustering (see  \cite{Ackerman:2014nips}), also if there exists a perfect ball clustering into $k$ clusters, then there exists only one such clustering. 
Regrettably, via an  inner cluster  consistency  transformation for a data set with perfect ball $k$-clustering one can obtain a data set for which perfect ball $k+l$ clustering is possible for an $l>0$ even if it was impossible prior to transformation. Albeit only nested clusters will emerge.  
If one would choose to have the largest number of clusters with cluster cardinality $\ge 2$, then  one can speak about "refinement inner cluster consistency", with the direction of the refinement towards smaller clusters. 

Similarly, if we move away cluster centers (motion consistency transformation), we can obtain a new perfect ball $k-l$ clustering even if it did not exist prior to the transformation. Again, cluster nesting occurs. So if one would choose to have the lowest number of clusters $k\ge 2$, then  one can speak about "refinement motion consistency", with the direction of the refinement towards larger clusters. 

The very same statements can be made about Kleinberg's axioms for nice ball clustering and $k$-means. 
Except that for a given $k$ the clustering, if exists, does not need to be unique.

Last not least let us make the remark that  even if the perfect-ball-clustering exists, it does not need to be the global optimum of $k$-means ideal, because of possible different cardinalities of these clusters. So in fact the global optimum may be one that is imperfect, even if the perfect clustering exists. 

But let us state one more thing.
Assume that we allow for a broader range of $k$ values with $k$-means. Note that with centric consistency, contrary to inner cluster consistency transform, no new perfect ball structures will emerge.  Therefore: 

\begin{theorem}{}\label{thm:perfect3axiomsManyk}
$k$-means, with $k$ ranging over a set of values, if we assume that it returns the    perfectly/nicely ball separated clusterings for the largest possible $k$ (excluding too small clusters, we call it \emph{max-$k$-means algorithm}), then it conforms (locally) to richness,  scale-invariance,  motion consistency and centric consistency. 
\end{theorem}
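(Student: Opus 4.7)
The plan is to establish each of the four properties in turn, leaning on Theorem~\ref{thm:perfect3axiomsOnek} for the single-$k$ versions and then checking that the ``max-$k$'' selection rule is not disturbed by the relevant transformation. The key observation to exploit, emphasised by the authors just before the statement, is that centric consistency (unlike inner cluster consistency) does \emph{not} create new perfect-ball refinements, and motion consistency (unlike outer consistency) does \emph{not} create new perfect-ball coarsenings. This is what distinguishes this theorem from Theorem~\ref{thm:perfect3axiomsOnek} and is what makes the max-$k$ selection well-defined and stable.

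First I would handle richness by the construction already used in Theorem~\ref{th:krichnesskmeans}, modified so that the target partition into $k$ clusters is realised as a perfect-ball clustering: place the points of each cluster inside a ball of some common radius $r$, and space the ball centres apart by at least $4r$. One must additionally verify that no finer perfect ball clustering into $k' > k$ classes exists (otherwise max-$k$-means would return $k'$). By placing the points of each cluster at distances strictly greater than $r/2$ from each other inside its enclosing ball, one guarantees that no sub-ball of radius less than $r$ can enclose two of them while being well-separated from another sub-ball, which prevents refinement. Scale-invariance is immediate, since multiplying the distance function by $\alpha>0$ multiplies every candidate ball radius and every inter-centre distance by the same $\alpha$, so the set of admissible $k$'s, and therefore its maximum, is unchanged.

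For motion consistency I would argue that a rigid translation of a whole cluster preserves all within-cluster distances and, by the non-overlap condition built into the axiom, does not shrink any inter-cluster gap. Hence the current perfect-ball clustering at level $k$ remains perfect-ball. The delicate point is that the cluster internals are bitwise unchanged by a rigid translation, so any would-be refinement into more than $k$ perfect balls that did not exist before still cannot exist; and a coarsening into fewer perfect balls would require merging two clusters into a common enclosing ball, which is ruled out because the enclosing radius of such a merger can only grow when we push cluster centres further apart. Thus the max-$k$ is preserved.

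For centric consistency the same template applies, but with the crucial refinement observation. A $\Gamma^*$ transform by factor $\lambda\in(0,1]$ scales the cluster's internal geometry uniformly and never decreases its distance to the closest element of any other cluster, so the current level-$k$ perfect ball clustering remains perfect-ball. Because the transform is a uniform contraction around the centroid, the \emph{relative} internal configuration of the transformed cluster is identical to the original (up to scale), so if it was not itself perfect-ball-clusterable into two or more pieces before, it still is not after; and, as in the motion case, no new coarsening can appear because inter-cluster gaps do not close. The maximal admissible $k$ is therefore unchanged. I expect the main obstacle to be this last ``no new refinement'' step: one must argue carefully that a centric contraction cannot, by moving boundary points inward, suddenly split a cluster into two sub-balls with a genuine gap between them of the required $4r$ ratio, and the cleanest way to do this is to note that such a split would, after undoing the contraction by factor $1/\lambda$, already have been a valid perfect-ball split of the original cluster, contradicting the maximality of $k$ for the pre-transform data.
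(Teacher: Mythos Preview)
Your proposal is correct and follows the paper's approach: the paper does not give a formal proof of this theorem at all, but simply states the key observation immediately before it (``with centric consistency, contrary to inner cluster consistency transform, no new perfect ball structures will emerge'') and then asserts the theorem. Your write-up is a faithful fleshing-out of that one-line justification, supplying the richness construction, the scale-invariance check, and the parallel ``no new refinement / no relevant new coarsening'' arguments for motion and centric consistency that the paper leaves implicit.

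One small remark: your motion-consistency step (``cluster internals are bitwise unchanged, so any would-be refinement \dots\ still cannot exist'') is slightly too quick as written, because a candidate refinement must also satisfy the $4\rho$ separation condition against the \emph{other} clusters, and those inter-cluster distances do change under the motion. The conclusion is still correct---if $A$ splits into $A_1,A_2$ with common radius $\rho$, then $\rho\le r/2$ (since $A_1,A_2$ sit inside $A$'s ball of radius $r$), and the original $4r$ separation already forces $\mathrm{dist}(A_i,B)\ge 3r\ge 4\rho$ \emph{before} the motion---but you should say this explicitly rather than appealing only to unchanged internals.
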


\subsection{Core-based clusterings}

But as we have seen in the previous section, for various purposes the distance between the balls enclosing clusters may be smaller. 
So let us  discuss what happens if the distances (gaps) between clusters are smaller.

We claim that 
\begin{theorem}{}
Let $A,B$ be cluster centers. 
Let $\rho_{AB}$ be the radius of a ball centered 
at $A$ and enclosing its cluster 
and it also is the radius of a ball centered 
at $B$ and enclosing its cluster.  
If the distance between   the cluster centers $A,B$
amounts to $2\rho_{AB}+g$, $g>0$ ($g$ being the "gap" between clusters),  
if we pick any two points, $X$ from the cluster of $A$ and $Y$ from the cluster of $B$, then the new clusters will preserve the balls centered at $A$ and $B$ of radius $g/2$ (called subsequently "cores") each ($X$ the core of $A$, $Y$ the core of $B$).  
\end{theorem}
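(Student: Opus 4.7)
The plan is to prove the claim by a direct application of the triangle inequality in both directions. Concretely, I want to fix a point $P$ in the core ball around $A$ (that is, $\|P-A\|\le g/2$) and show that $\|P-X\|\le \|P-Y\|$, so that $P$ falls into the new Voronoi cell of $X$; the case of the core of $B$ then follows by symmetry.

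First I would use the forward triangle inequality to get an upper bound on $\|P-X\|$. Because $X$ lies in the cluster of $A$, which by assumption is contained in the ball $\mathrm{Ball}(A,\rho_{AB})$, we have $\|A-X\|\le \rho_{AB}$. Combining this with $\|P-A\|\le g/2$ yields
\begin{equation*}
\|P-X\|\;\le\;\|P-A\|+\|A-X\|\;\le\;\tfrac{g}{2}+\rho_{AB}.
\end{equation*}

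Next I would use the reverse triangle inequality to bound $\|P-Y\|$ from below. Since $\|A-B\|=2\rho_{AB}+g$ and $\|B-Y\|\le \rho_{AB}$, writing $A-B=(A-P)+(P-Y)+(Y-B)$ and taking norms yields
\begin{equation*}
\|P-Y\|\;\ge\;\|A-B\|-\|A-P\|-\|B-Y\|\;\ge\;(2\rho_{AB}+g)-\tfrac{g}{2}-\rho_{AB}\;=\;\rho_{AB}+\tfrac{g}{2}.
\end{equation*}
Chaining the two bounds gives $\|P-X\|\le \rho_{AB}+g/2\le \|P-Y\|$, which is exactly the claim that the Voronoi cell of $X$ (against $Y$) contains the core ball around $A$. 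A symmetric argument with the roles of $A,X$ and $B,Y$ swapped handles the core of $B$.

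Honestly, the main obstacle here is not the calculation but the \emph{scope} of the statement. The theorem names only two cluster centers $A$ and $B$, so the Voronoi partition is only against these two. If the intent were to extend the guarantee to a configuration with more than two clusters, one would need to run the same two–center comparison for \emph{every} other cluster center $B'$ simultaneously, which is fine provided the distance assumption $\|A-B'\|\ge 2\rho_{AB'}+g$ holds pairwise; but this is not an additional geometric difficulty, only a bookkeeping one. I would therefore state and prove the two-cluster version cleanly via the inequalities above and, if needed, add a one-line remark that the argument applies pair-by-pair in the $k$-cluster setting.
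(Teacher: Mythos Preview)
Your argument is correct. The two triangle-inequality bounds you state are valid, and chaining them gives $\|P-X\|\le \rho_{AB}+g/2\le \|P-Y\|$ for every $P$ in the $g/2$-ball about $A$, which is precisely the assertion that the core of $A$ lies on the $X$-side of the perpendicular bisector of $XY$. The symmetric case for $B$ follows identically, and your remark about the multi-cluster case being a pairwise repetition is accurate.

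However, your route is genuinely different from the paper's. The paper does not use the triangle inequality at all; instead it sets up a planar reduction: it projects $X,Y$ onto the plane through $AB$ parallel to $XY$, represents the Voronoi boundary as a line $l$ in that plane, draws the two common tangent lines $k,m$ to the inner (core) circles $O(A,g/2)$ and $O(B,g/2)$, and then argues via a worst-case analysis (pushing $X',Y'$ to the extreme intersection points $C,E$ on the outer circles, exploiting a central symmetry through the midpoint $O$ of $AB$) that the midpoint of $CE$ has $\mathcal{Y}$-coordinate at most $(\rho+g/2)\sin\alpha-g/2$. The computation ends with the trigonometric identity $0\le 2(\rho+g/2)(g/2)(1-\sin\alpha)$. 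What your approach buys is brevity and transparency: three lines of metric inequalities replace a page of projective geometry and trigonometry, and the argument works verbatim in any metric space, not only in Euclidean space. What the paper's approach buys is a direct geometric picture of where the bisecting hyperplane can lie relative to the tangent lines, which ties in visually with Figure~\ref{fig:threeballs} and with the earlier $4\rho$ theorem (of which this is presented as a generalisation); it also makes explicit which configurations of $X,Y$ are extremal.
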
  

\begin{definition}
If the gap between each pair of clusters fulfills the condition of the above theorem, then we say that we have core-clustering. 
\end{definition}

\begin{proof}
For the illustration of the proof see Figure \ref{fig:threeballs}.

\begin{figure}
\centering
\includegraphics[width=0.8\textwidth]{\figaddr{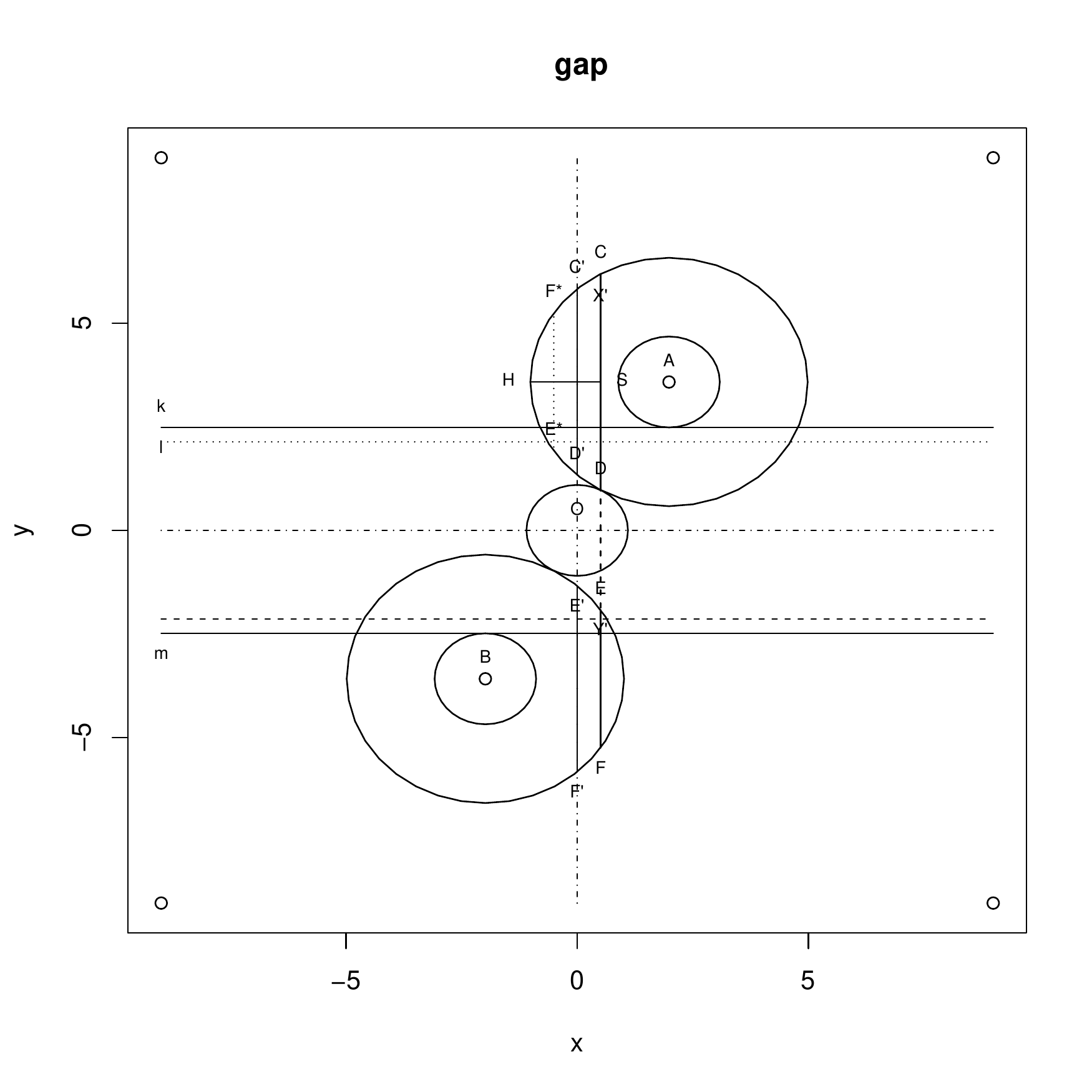}}  %
\caption{An illustrative figure for proof of the core preservation under a gap between cluster enclosing balls. 
}\label{fig:threeballs}
\end{figure}
The proof does not differ too much from the previous one and in fact the previous theorem is a special case when $g=2\rho$.

Consider the two points $A,B$ being the two centers of double balls.
The inner call represents the core of radius $g/2$, the outer ball of radius $\rho$ ($\rho=\rho_{AB}$). 
Consider  two points, $X,Y$, one being in each outer ball  (presumably the cluster centers at some stage of the $k$-means algorithm). To represent their distances faithfully, we need at most a 3D space. 

Let us consider the plane established by the line $AB$ and parallel to the line $XY$. 
Let $X'$ and $Y'$ be projections of $X,Y$ onto this plane. 
Now let us establish that the hyperplane $\pi$ orthogonal to $X,Y$, and passing through the middle of the line segment $XY$, that is the hyperplane containing the boundary between clusters centered at $X$ and $Y$ does not cut any of the balls centered at $A$ and $B$. 
This hyperplane will be orthogonal to the plane of the Figure  \ref{fig:threeballs} and so it will manifest itself as an intersecting line $l$ that should not cross inner circles around $A$ and $B$, being projections of the respective balls.  Let us draw two solid lines $k,m$ between circles $O(A,g/2)$ and $O(B,g/2)$ tangential to each of them. Line $l$ should lie between these lines, in which case the cluster center will not jump to the other ball. 

Let the line $X'Y'$ intersect with the circles $O(A,\rho)$ and $O(B,\rho)$ at points $C,D,E,F$ as in the figure. 

It is obvious that the line $l$ would get closer to circle $A$, if the points $X', Y'$ would lie closer to $C$ and $E$, or closer to circle $B$ if they would be closer to $D$ and $F$. 

Therefore, to show that it does not cut the circle $O(A,g/2)$ it is sufficient to consider $X'=C$ and $Y'=E$. (The case with ball $Ball(B,g/2)$ is symmetrical).

Let $O$ be the center of the line segment $AB$. Let us draw through this point a line parallel to $CE$ that cuts the circles at points $C', D', E'$ and $F'$. 
Now notice that centric symmetry through point $O$ transforms the circles $O(A,\rho)$,$O(B,\rho)$ into one another, and point $C' in F'$ and $D' in E'$. Let $E^*$ and $F^*$ be images of points $E$ and $F$ under this symmetry. 

In order for the line $l$ to lie between $m$ and $k$, the middle point of the line segment $CE$ shall lie between these lines.
 
Let us introduce a planar coordinate system centered at $O$ with $\mathcal{X}$ axis parallel to lines $m,k$, such that $A$ has both coordinates non-negative, and $B$ non-positive. 
Let us denote with  $\alpha$ the angle between the lines $AB$ and $k$. 
As we assume that the distance between $A$ and $B$ equals $2 \rho+g$, then the distance between lines $k$ and $m$ amounts to $ 
2((\rho+g/2)\sin(\alpha)-g/2)$. 
Hence the  $\mathcal{Y}$ coordinate of line $k$ equals $((\rho+g/2)\sin(\alpha)-g/2)$.

So the  $\mathcal{Y}$ coordinate of the center of line segment $CE$ shall be not higher than this. 
Let us express this in vector calculus:
$$4(y_{OC}+y_{OE})/2 \le ((\rho+g/2)\sin(\alpha)-g/2)$$

Note, however that 
$$y_{OC}+y_{OE}=y_{OA}+y_{AC}+y_{OB}+y_{BE}= y_{AC}+y_{BE} =  y_{AC}-y_{AE^*} 
=y_{AC}+y_{E^*A}$$ 

So let us examine the circle with center at A. 
Note that the lines $CD$ and $E^*F^*$ are at the same distance from  the line C'D'. Note also that the absolute values  of direction coefficients of tangentials of circle A at C' and D' are identical.
The more distant these lines are, as line $CD$ gets closer to $A$, the $y_{AC}$ gets bigger, and $y_{E^*A}$ becomes smaller. 
But from the properties of the circle we see that 
$y_{AC}$ increases at a decreasing rate, while  $y_{E^*A}$ decreases at an increasing rate. 
So the sum $y_{AC}+y_{E^*A}$  has the biggest value when $C$ is identical with $C'$ and we need hence to prove only that 
$$(y_{AC'}+y_{D'A} )/2=y_{AC'} \le  ((\rho+g/2)\sin(\alpha)-g/2)$$

Let $M$ denote the middle point of the line segment $C'D'$. 
As point $A$ has the coordinates $((\rho+g/2) \cos(\alpha), (\rho+g/2) \sin(\alpha))$,
the point $M$ is at distance of $(\rho+g/2) \cos(\alpha)$ from $A$. 
But $C'M^2= \rho^2-((\rho+g/2) \cos(\alpha))^2 $.  

So we need to show that 
$$\rho^2-((\rho+g/2) \cos(\alpha))^2 \le ((\rho+g/2)\sin(\alpha)-g/2)^2$$
In fact we get from the above 
$$\rho^2-((\rho+g/2) \cos(\alpha))^2 \le 
((\rho+g/2)\sin(\alpha))^2
+(g/2)^2
-2(\rho+g/2)(g/2)\sin(\alpha)$$
$$\rho^2  \le  (\rho+g/2)^2+(g/2)^2
-2(\rho+g/2)(g/2)\sin(\alpha)$$
$$0  \le  2(\rho+g/2)(g/2)
-2(\rho+g/2)(g/2)\sin(\alpha)$$
$$0  \le  2(\rho+g/2)(g/2)(1-\sin(\alpha))$$
which is obviously true, as $\sin$ never exceeds 1. 
%This ends the proof of the theorem.
\end{proof}

But we have still to ask what is the gain of having an untouched core. 

Consider a cluster $C$ of a clustering $\mathcal{C}$ and let it have the share $p$ of its mass at its core of radius $(g/2)$ and the remaining $1-p$  in the ball of radius $\rho$ (all identical for each cluster from the clustering) and that the gaps between clusters amount to at least $g$.  
Let $X$ be a randomly picked point from this cluster to be used as an initial cluster center for $k$-means. 
If it happens that each initial cluster center lies in the appropriate core, then in the first iteration of $k$-means all clusters are properly formed. 

If however cluster centers lie off core then you have a chance that in the first iteration some clusters possess stranger cluster elements, but these strangers come not from the cores of other clusters. 
Hence we would be interested in getting the cluster centers into the cores in the next iteration. 
In the worst case a cluster $C$ may lose all its off-core elements to other clusters and obtain all the other off-core elements. 
 
The question is now: what portion $(1-p)$ shall be allowed to lie off-core to ensure the convergence of iteration  step. 
The answer is:

$$
 (g/2/\rho)*n_c /((g/2/\rho)*n_c   - (g/2/\rho)*  (n- n_c) +  n)   
$$
\noindent
where $n$ is the total number of elements, $n_c$ is the cardinality of the cluster.

\Bem{
(1-p)nc r=pd nc
offset = (pdnc  + r(k-1)*(1-p)*nc)
/(nc p + (k-1)*(1-p)*nc)
offset = ((1-p)nc r  + r(k-1)*(1-p)*nc)
/(nc p + (k-1)*(1-p)*nc)
offset = r(1-p) n 
/(nc p +  (1-p)*n-(1-p)nc)
offset = r(1-p) n 
/(nc p +  (1-p)*(n- nc))
(offset/r)(nc p +  (1-p)*(n- nc)) = (1-p)n
nc p +  (1-p)*(n- nc)
= nc (1-(1-p)) +  (1-p)*(n- nc)
= nc  -nc(1-p)   +  (1-p)*(n- nc)
= nc      +  (1-p)*(n- 2nc)

(offset/r) nc      +  (offset/r)(1-p)*(n- 2nc)= (1-p)n
(offset/r) nc      = (1-p)n -  (offset/r)(1-p)*(n- 2nc)
(offset/r) nc      = (1-p)(n -  (offset/r)*(n- 2nc))
1-p= (n -  (offset/r)*(n- 2nc))/((offset/r) nc)

Optimistic version:
(1-p)nc r=pd nc
offset = (pdnc  + r*(1-p)*nc)/nc
offset = ((1-p)nc r  + r*(1-p)*nc)/nc
offset = 2((1-p)nc r) /nc 
offset = 2((1-p)  r)   
offset/(2r)=1-p 

extreme but all equal version 
(1-p)nc r=pd nc
offset = (pdnc  + r*(1-p)k*nc)/(pnc+(k-1)(1-p)nc)
offset = ((1-p)nc r  + r*(1-p)k*nc)/(pnc+(k-1)(1-p)nc)
offset = (1-p)r(nc+ k*nc)/(pnc+(k-1)(1-p)nc)

p=0.8;
nc=8;
r=10; 
d= r*(1-p)*nc/(p*nc)
(1-p)*nc* r- p*d *nc
k=6

offset = (p*d*nc  + r*(k-1)*(1-p)*nc)/(nc* p + (k-1)*(1-p)*nc)
n=k*nc
  r*(1-p)* n /(nc* p +  (1-p)*(n-nc)) - offset 
(offset/r)*(nc* p +  (1-p)*(n- nc)) - (1-p)*n
(offset/r)*nc* p + (offset/r)* (1-p)*(n- nc) - (1-p)*n
+(offset/r)*nc    - (offset/r)*nc*(1-p)  + (offset/r)* (1-p)*(n- nc) - (1-p)*n
+(offset/r)*nc    - (1-p)*((offset/r)*nc   - (offset/r)*  (n- nc) +  n)
 (offset/r)*nc /((offset/r)*nc   - (offset/r)*  (n- nc) +  n)   - (1-p) 

}%Bem

\mysvli{
}%mysvli 

Clearly, with this core separation incremental $k$-means will fail usually to recover the clustering.  But    if either of the well-separatedness criterion of  core-clustering, perfect-ball-clustering or nice-ball-clustering   applies, $k$-means-random  and $k$-means++  will find the appropriate clusters, if it is seeded with one representative of each cluster. 
The theorems \ref{thm:perfect3axiomsOnek} and \ref{thm:perfect3axiomsManyk} when substituting "perfect" with "core" clustering, apply.

\subsection{$k$-richness and the problems with  realistic $k$-means  algorithms}

But what is the probability of such a   seeding of the $k$-means that each cluster has a seed?
Let us consider the $k$-means-random. 
If the share of elements in each cluster amounts to $p_1,\dots,p_k$, $p_i\ge p$ respectively, then the probability of appropriate seeding in a single run amounts to at least 
%$1*(1-p)*(1-2p)*(1-(k-1)*p)$
$q=\prod_{j=1}^{k-1}(1-(k-j)p)$. 
After say $m$ runs, we can increase the probability of appropriate seeding to $1-(1-q)^m$, and reach the required success probability of e.g. 96\%. 

Under $k$-means++, in case of at least $4\rho$ distances between clusters (perfect ball clustering) these probabilities amount to 
$$q=\prod_{j=1}^{k-1}
\frac{(3\rho)^2(k-j)p}
{(3\rho)^2(k-j)p+(2\rho)^2(1-(k-j)p)}
$$

\Bem{
Standard deviation and p for normal distribution 
sum(rnorm(n)^2>4)/n = 0.046 for n=100000

2-dim 
sum(rnorm(n)^2+rnorm(n)^2>6) /n
[1] 0.049596

3-dim
sum(rnorm(n)^2+rnorm(n)^2+rnorm(n)^2>8) /n
[1] 0.046379

4-dim
sum(rnorm(n)^2+rnorm(n)^2+rnorm(n)^2+rnorm(n)^2>10) /n

w-dim
w=10
dist=rnorm(n)^2
for (j in 2:w)  dist=dist+rnorm(n)^2; 
print(sum(dist>(w+1)*2)/n); 

print(sum(dist*(dist>(w+1)*2))/sum(dist>(w+1)*2)); 

print(sum(dist*(dist<(w+1)*2))/sum(dist<(w+1)*2));

print(sum(dist)/n); 

print(sum(dist>(w+1)*2)); 

print(sum(dist*(dist>(w+1)*2)) ); 

print(sum(dist*(dist<(w+1)*2)) );

print(sum(dist) ); 

r=sqrt((w+1)*2) 
R=4*r; 
p=1-(sum(dist>(w+1)*2)/n)

print(p * (2*(1-p)*R)^2 +(1-p)* r*(2*R-r))

}%Bem

Now it becomes obvious why the $k$-richness axiom does not make much sense. Even if the clusters should turn out to be well separated (perfect ball clustering existent), the probability of hitting a cluster with $1$ element out of $n$ with growing sample size $n$ is prohibitively small. Under $k$-means random for $l$ such small clusters it is lower than $\frac1{n^l}$.
So the number of required restarts of $k$-means will grow approximately linearly   with $n^{k-1}$, which is better than the exhaustive search with at least $k^{n-k}$ possibilities, but it is still prohibitive. 
This would render  $k$-means useless.
Respective retrial counts look significantly better for $k$-means++  but are  still unacceptable.

\subsection{$k$-means++ with dispersion off-core elements}

Alternatively we can consider the off-core elements as noise that does not need to be bounded by any ball.
The cores then are parts of the cluster such that they are enclosed into balls centered at cluster center where the distance to the other ball centers is four times the own radius of the core.  
In this case we can apply $k$-means++ with the provision of rejecting $p\cdot n$ most distant elements 
upon initialization. 
$p$ must be surely lower than the core of the smallest cluster. 
By rejecting $p$ share of elements we run at risk of removing parts of most distant cluster. 
So to keep it to be likely included in seeding 
we must keep bounded the ration of noise contribution and cluster contribution. 
Noise would be at distance $4\rho$ while the cluster at $2.5\rho$ in unfavorable case.
So to balance the contribution the noise to cluster minus noise ratio should be $2.5^2/4^2=1/2.56$
So that the noise to smallest cluster ration should be 1:3.56. 

This speaks again against the $k$-richness. 

Again   theorems analogous to  \ref{thm:perfect3axiomsOnek} and \ref{thm:perfect3axiomsManyk} apply, but now limited to the cores and not entire clusters. 
The noise allowed should not push  cluster centers off core if other clusters are seeded in cores.

%-----------------------------------------------
 \section{$k$-richness versus global minimum of $k$-means}
\label{sec:kmeansabsoluteballclustering}

Last not least let us discuss the issue whether or not we can tell that the well-separated clusters constitute the global minimum of $k$-means (recall that perfect ball clustering did not). 

We will investigate below under what circumstances it is possible to tell, without exhaustive check that the well separated clusters are the global minimum of $k$-means. We will see that the ratio between the largest and the smallest cluster cardinality plays here an important role. Therefore $k$-richness is in fact not welcome. 

In particular, let us consider
the set of 
 $k$ clusters $\mathcal{C}=\{C_1,\dots,C_k\}$ of cardinalities $n_1,\dots,n_k$ and with radii of balls enclosing the clusters (with centers located at cluster centers) $r_1,\dots, r_k$.  

We are interested in a gap $g$ between clusters such that it does not make sense to split each cluster $C_i$ into subclusters $C_{i1},\dots, C_{ik}$ and to combine them into
a set of 
 new clusters $\mathcal{S}=\{S_1,\dots,S_k\}$ such that 
$S_j=\cup_{i=1}^k C_{ij}$. 

We seek a $g$ such that the highest possible central sum of squares combined over the clusters $C_i$ would be lower than the lowest conceivable combined sums of squares around respective centers of clusters $S_j$. 
Let $Var(C)$ be the variance of the cluster $C$ (average squared distance to cluster gravity center). 
Let $r_{ij}$ be the distance of the center of subcluster $C_{ij}$ to the center of cluster $C_i$.
Let $v_{ilj}$ be the distance of the center of subcluster $C_{ij}$ to the center of subcluster $C_{lj}$.
So the total $k$-means function for the set of clusters $(C_1,\dots,C_k)$ will amount to:
\begin{equation}
Q(\mathcal{C})=\sum_{i=1}^k \sum_{j=1}^k (n_{ij}Var(C_{ij})+n_{ij}r_{ij}^2)
\end{equation}
And the total $k$-means function for the set of clusters $(S_1,\dots,S_k)$ will amount to:
\begin{equation}
Q(\mathcal{S})=\sum_{j=1}^k \left(\left(\sum_{i=1}^k n_{ij}Var(C_{ij})\right)+
({\sum_{i=1}^k n_{ij}} )
\left(\sum_{i=1}^{k-1} \sum_{l=i+1}^k  \frac{n_{ij}}{\sum_{i=1}^k n_{ij}}\frac{n_{lj}}{\sum_{i=1}^k n_{ij}} v_{ilj}^2
\right)
\right)
\end{equation}

Should $(C_1,\dots,C_k)$ constitute the absolute minimum of the $k$-means target function, then $Q(\mathcal{S})\ge Q(\mathcal{C})$  should hold, that is:
$$\sum_{j=1}^k \left(\left(\sum_{i=1}^k n_{ij}Var(C_{ij})\right)+
({\sum_{i=1}^k n_{ij}} )
\left(\sum_{i=1}^{k-1} \sum_{l=i+1}^k    \frac{n_{ij}}{\sum_{i=1}^k n_{ij}}\frac{n_{lj}}{\sum_{i=1}^k n_{ij}} v_{ilj}^2
\right)
\right)
%%%%%
$$ $$
\ge 
%%%%%
\sum_{i=1}^k \sum_{j=1}^k (n_{ij}Var(C_{ij})+n_{ij}r_{ij}^2)
$$

This implies:   
\begin{equation}\label{eq:e1}
\sum_{j=1}^k   
\left(\sum_{i=1}^{k-1} \sum_{l=i+1}^k   \frac{n_{ij}n_{lj}}{\sum_{i=1}^k n_{ij}}  v_{ilj}^2
\right)
%%%%%
\ge
%%%%%
\sum_{i=1}^k \sum_{j=1}^k  n_{ij}r_{ij}^2
\end{equation}

To maximize $\sum_{j=1}^k  n_{ij}r_{ij}^2$ for a single cluster $C_i$ of enclosing ball radius $r_i$, note that you should set $r_{ij}$ to $r_i$. Let $m_j=\arg \max_{j \in \{1,\dots,k\}} n_{ij}$. 
If we set $r_{ij}=r_i$ for all   $j$ except $m_j$, then the maximal $r_{i{m_j}}$ is delimited by the relation
$\sum_{j=1; j\ne m_j}^k  n_{ij}r_{ij}\ge n_{i{m_j}}r_{i{m_j}}$.
So 
\begin{align}\label{eq:e2}
\sum_{j=1}^k  n_{ij}r_{ij}^2\le
 (\sum_{j=1; j\ne m_j}^k  n_{ij}) r_i^2\min(2,(1+\frac{\sum_{j=1; j\ne m_j}^k  n_{ij}}{n_{i{m_j}}} ))
\\ \le &
 2 (\sum_{j=1; j\ne m_j}^k  n_{ij}) r_i^2 
\nonumber 
\end{align}

So if we can guarantee that the gap between cluster balls (of clusters from $\mathcal{C}$) amounts to $g$  then surely 

\begin{equation}\label{eq:e3}
 \sum_{j=1}^k   
\left(\sum_{i=1}^{k-1} \sum_{l=i+1}^k   \frac{n_{ij}n_{lj}}{\sum_{i=1}^k n_{ij}}  v_{ilj}^2
\right)
\ge 
g^2
\sum_{j=1}^k   
\left(\sum_{i=1}^{k-1} \sum_{l=i+1}^k   \frac{n_{ij}n_{lj}}{\sum_{i=1}^k n_{ij}} 
\right)
\end{equation}
because in such case $g\le v_{ilj}$ for all $i,l,j$.

By combining inequalities (\ref{eq:e1}),  (\ref{eq:e2}) and (\ref{eq:e3}) we see 
that the global minimum is granted if the following holds:
\begin{equation}\label{eq:globalg}
g^2
\sum_{j=1}^k   
\left(\sum_{i=1}^{k-1} \sum_{l=i+1}^k   \frac{n_{ij}n_{lj}}{\sum_{i=1}^k n_{ij}}   
\right)
%%%%%
\ge
%%%%
 2 \sum_{i=1}^k (\sum_{j=1; j\ne m_j}^k  n_{ij}) r_i^2 
\end{equation}

One can distinguish two cases: either 
(1) there exists a cluster $S_t$ containing two subclusters $C_{pt}$, $C_{qt}$ 
such that $t=\arg \max_j |C_{pj}|$ 
and $t=\arg \max_j |C_{qj}|$ 
(maximum cardinality   subclasses of their respective original clusters $C_p, C_q$
 or (2) not. 

Consider the first case. Let $C_p,C_q$ be the two clusters where $C_{pt}$ and $C_{qt}$ be two subclusters of highest cardinality within $C_p,C_q$ resp. 
This implies that $n_{pt}\ge \frac 1k n_p, n_{qt}\ge \frac 1k n_q$. 
Also this implies that for $i\ne p, i\ne q$  $n_{it}\le n_i/2$.

$$
\sum_{j=1}^k   
 \sum_{i=1}^{k-1} \sum_{l=i+1}^k   \frac{n_{ij}n_{lj}}{\sum_{i=1}^k n_{ij}}  
$$
$$
\ge   
 \sum_{i=1}^{k-1} \sum_{l=i+1}^k   \frac{n_{it}n_{lt}}{\sum_{i=1}^k n_{it}}  
$$
$$
\ge 
    \frac{n_{pt}n_{qt}}{\sum_{i=1}^k n_{it}}  
$$
$$
\ge 
  \frac{n_{pt}n_{qt}} {n_p/2+n_q/2+\sum_{i=1}^k n_{i}/2 }  
=   \frac{n_{pt}n_{qt}}{n_p/2+n_q/2+n/2}   
$$

$$
\ge \frac1{k^2}  \frac{n_{p}n_{q}}{n_p/2+n_q/2+n/2}   
$$

Note that 
$$
2 \sum_{i=1}^k (\sum_{j=1; j\ne m_j}^k  n_{ij}) r_i^2 
\le 2 \sum_{i=1}^k  n_{i} r_i^2 
$$
So, in order to fulfill inequality (\ref{eq:globalg}), it is sufficient to require that  
\begin{equation}\label{eq:globalgcase1}
g\ge 
\sqrt{
\frac{2 \sum_{i=1}^k  n_{i} r_i^2 }
{ \frac1{k^2}  \frac{n_{p}n_{q}}{n_p/2+n_q/2+n/2} } 
}
= 
k\sqrt{n_p/2+n_q/2+n/2} \sqrt{
\frac{2 \sum_{i=1}^k  n_{i} r_i^2 }{    n_{p}n_{q} }
}
= 
k\sqrt{n_p +n_q +n } \sqrt{
\frac{  \sum_{i=1}^k  n_{i} r_i^2 }{    n_{p}n_{q} }
}
\end{equation}
This of course maximized over all combinations of $p,q$. 

Let us proceed to the second case.
Here each cluster $S_j$ contains a subcluster of maximum cardinality of a different cluster $C_i$. 
As the relation between $S_j$ and $C_i$ is unique, we can reindex $S_j$ in such a way that actually $C_j$ contains its maximum cardinality subcluster $C_{jj}$. 
Let us rewrite the  inequality (\ref{eq:globalg}). 

$$
g^2
\sum_{j=1}^k   
\left(\sum_{i=1}^{k-1} \sum_{l=i+1}^k   \frac{n_{ij}n_{lj}}{\sum_{i=1}^k n_{ij}}   
\right)
-
 2 \sum_{i=1}^k (\sum_{j=1; j\ne m_j}^k  n_{ij}) r_i^2 
%%%%%
\ge 0
%%%%
$$

This is met if 

$$
g^2
\sum_{j=1}^k   
\left(\sum_{i=1}^{j-1}  \frac{n_{ij}n_{jj}}{\sum_{i=1}^k n_{ij}} 
+
\sum_{l=j+1}^k   \frac{n_{jj}n_{lj}}{\sum_{i=1}^k n_{ij}}   
\right)
-
 2 \sum_{i=1}^k (n_i-  n_{ii}) r_i^2 
%%%%%
\ge 0
%%%%
$$
This is the same as:

$$
g^2
\sum_{j=1}^k   
\left(\sum_{i=1,\dots, {j-1},{j+1},\dots,k}   \frac{n_{ij}n_{jj}}{\sum_{i=1}^k n_{ij}}    
\right)
-
 2 \sum_{i=1}^k (n_i-  n_{ii}) r_i^2 
%%%%%
\ge 0
%%%%
$$

This is fulfilled if:

$$
g^2
\sum_{j=1}^k   
\left(\sum_{i=1,\dots, {j-1},{j+1},\dots,k}   \frac{n_{ij}n_{j}/k}
{n_j/2+\sum_{i=1}^k n_{i}/2}    
\right)
-
 2 \sum_{i=1}^k (n_i-  n_{ii}) r_i^2 
%%%%%
\ge 0
%%%%
$$

Let $M$ be the maximum over $n_1,\dots,n_k$. The above holds if 

$$
g^2
\sum_{j=1}^k   
\left(\sum_{i=1,\dots, {j-1},{j+1},\dots,k}   \frac{n_{ij}n_{j}/k}
{M/2+n/2}    
\right)
-
 2 \sum_{i=1}^k (n_i-  n_{ii}) r_i^2 
%%%%%
\ge 0
%%%%
$$
Let $m$ be the minimum over $n_1,\dots,n_k$. The above holds if 
$$
g^2
\sum_{j=1}^k   
\left(\sum_{i=1,\dots, {j-1},{j+1},\dots,k}   \frac{n_{ij}m/k}
{M/2+n/2}    
\right)
-
 2 \sum_{i=1}^k (n_i-  n_{ii}) r_i^2 
%%%%%
\ge 0
%%%%
$$
This is the same as 
$$
g^2   \frac{m/k}
{M/2+n/2}    
\left(
\sum_{j=1}^k   
\sum_{i=1,\dots, {j-1},{j+1},\dots,k}   {n_{ij} }   
\right)
-
 2 \sum_{i=1}^k (n_i-  n_{ii}) r_i^2 
%%%%%
\ge 0
%%%%
$$

$$
g^2   \frac{m/k}
{M/2+n/2}    
\left(
\sum_{j=1}^k
\left(   
\left(   
\sum_{i=1}^k   {n_{ij} } \right) - n_{jj}  
\right)
-
 2 \sum_{i=1}^k (n_i-  n_{ii}) r_i^2 \right)
%%%%%
\ge 0
%%%%
$$

$$
g^2   \frac{m/k}
{M/2+n/2}    
\left(   
\left(   
\sum_{j=1}^k\sum_{i=1}^k   {n_{ij} } \right) - (\sum_{j=1}^kn_{jj} ) 
\right)
-
 2 \left( \sum_{i=1}^k (n_i-  n_{ii}) r_i^2 \right)
%%%%%
\ge 0
%%%%
$$

$$
g^2   \frac{m/k}{M/2+n/2}    
\left(   
\left(   
\sum_{i=1}^k   {n_{i} } \right) - (\sum_{j=1}^kn_{jj} ) 
\right)
-
 2 \sum_{i=1}^k (n_i-  n_{ii}) r_i^2 
%%%%%
\ge 0
%%%%
$$

$$
g^2   \frac{m/k}
{M/2+n/2}    
\left(      
\sum_{i=1}^k  \left( {n_{i} -n_{ii}} \right)  
\right)
-
 2 \sum_{i=1}^k (n_i-  n_{ii}) r_i^2 
%%%%%
\ge 0
%%%%
$$

$$
\sum_{i=1}^k  \left( {n_{i} -n_{ii}} \right) \left(
g^2   \frac{m/k}
{M/2+n/2}    
-
 2   r_i^2 
\right)
%%%%%
\ge 0
%%%%
$$
The above will hold, if for every $i=1,\dots,k$
$$g \ge r_i \sqrt{\frac{2}{  \frac{m/k}{M/2+n/2} }} $$
\begin{equation}\label{eq:globalgcase2} 
g \ge r_i \sqrt{k\frac{M+n}{   m} }
\end{equation}

So the  inequality (\ref{eq:globalg}) is fulfilled, if both 
 inequality (\ref{eq:globalgcase1}) and  inequality (\ref{eq:globalgcase2}) are held by an appropriately chosen $g$.

So we may call the above-mentioned "well-separatedness" as "absolute clustering". 
One sees immediately that 
inner cluster consistency is kept, this time in terms of global optimum, under the restraint to $k$ clusters.

\begin{theorem}{}\label{thm:absolute3axiomsOnek}
$k$-means, if restricted to absolute clusterings,
 conforms globally to $k$-richness,  scale-invariance, 
 motion consistency and inner cluster  consistency. 
\end{theorem}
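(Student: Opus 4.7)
The plan is to verify each of the four properties by showing that each transformation keeps the partition inside the class of absolute clusterings, and then invoke the fact that the derivation leading to inequality (\ref{eq:globalg}) (together with its sufficient specializations (\ref{eq:globalgcase1}) and (\ref{eq:globalgcase2})) already guarantees that any absolute clustering is a global minimum of the $k$-means target for the given $k$.

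First I would handle $k$-richness. For any target partition $\Gamma=\{C_1,\dots,C_k\}$ of $n$ elements with prescribed cardinalities $n_1,\dots,n_k$, I place the $k$ clusters in $\mathbb{R}^m$ collapsed to their centroids (or within arbitrarily small enclosing balls) so that the radii $r_1,\dots,r_k$ can be made as small as desired. Then the right-hand sides of (\ref{eq:globalgcase1}) and (\ref{eq:globalgcase2}) can be made arbitrarily small, so choosing any gap $g$ exceeding them and placing the cluster centroids at mutual distance at least $r_i+r_j+g$ yields a data set in which $\Gamma$ is an absolute clustering, hence the global $k$-means optimum returned by the algorithm. This reuses the construction idea of Theorem \ref{th:krichnesskmeans}, merely tightening the separation between clusters.

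Next I would handle scale-invariance, motion consistency, and inner cluster consistency uniformly. Under scaling by $\alpha>0$, each $r_i$ becomes $\alpha r_i$, each $n_i$ is unchanged, and each gap $g$ becomes $\alpha g$; inequalities (\ref{eq:globalgcase1}) and (\ref{eq:globalgcase2}) are homogeneous of degree one in $(g,r_1,\dots,r_k)$, so they remain valid. Under motion consistency each cluster is translated rigidly, so the internal quantities $r_i,n_i$ are preserved and, by the non-overlap clause, the gap between any two cluster-enclosing balls does not decrease; both inequalities therefore continue to hold. Under inner cluster consistency the cluster positions (centroids) and the cardinalities are kept, while the inner distances of some cluster may only shrink, so the radii $r_i$ can only decrease while $g$ and the $n_i$ remain fixed; again the two inequalities are preserved.

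Finally, for each of the four operations the transformed partition is still an absolute clustering of the transformed data, so by (\ref{eq:globalg}) it remains the global minimum of the $k$-means target with the prescribed $k$, which is exactly what ``conforms globally'' requires. The main obstacle I expect is the care needed in the $k$-richness step: one must check that the constructed data satisfies (\ref{eq:globalgcase1}) and (\ref{eq:globalgcase2}) simultaneously and, in particular, that the case split used in deriving these inequalities (whether two maximum-cardinality subclusters end up in the same $S_t$ or not) is covered by the gap chosen. This is handled by taking $g$ to exceed the maximum of the two right-hand sides over all admissible pairs $(p,q)$, which is finite and depends only on the fixed quantities $n_i$ and $r_i$ of the constructed instance.
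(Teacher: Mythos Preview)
Your proposal is correct and matches the paper's approach. In fact the paper does not give a standalone proof of this theorem at all: it simply remarks ``One sees immediately that inner cluster consistency is kept, this time in terms of global optimum, under the restraint to $k$ clusters'' and then states the theorem as a direct consequence of the derivation culminating in inequalities (\ref{eq:globalgcase1}) and (\ref{eq:globalgcase2}). Your proof makes explicit the verification the paper leaves implicit --- checking that each of the four transformations preserves membership in the class of absolute clusterings and then invoking the global-optimality guarantee.

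One small correction worth noting: in your treatment of motion consistency, the reason the gap does not decrease is not the non-overlap clause but rather the hypothesis that the distances between cluster centers increase, combined with the fact that rigid translation leaves each radius $r_i$ unchanged (so $g = \text{center distance} - r_i - r_j$ can only grow). The non-overlap clause by itself would be too weak to guarantee that the inequalities (\ref{eq:globalgcase1}) and (\ref{eq:globalgcase2}) continue to hold.
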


 Regrettably, a structure may emerge upon such consistency and therefore 
the maximal number of possible absolute clusters is not kept. 
However, if we apply centric consistency, the max-$k$-means[absolute] keeps the richness/invariance/motion consistency axioms.

\Bem{
If we consider a variant of $k$-means with varying $k$ over a broad spectrum of $k$, and take as the final clustering the absolute ball clustering into the largest number of clusters possible, and instead of inner cluster consistency the centric consistency is used then an approximation to near-richness can be achieved. 
}%Bem

\begin{theorem}{}
$k$-means, with $k$ ranging over a set of values, if we assume that it returns the    absolutely separated clusterings for the largest possible $k$ (excluding cluster sizes considered as too small), then it conforms  globally to richness,  scale-invariance, motion consistency and centric consistency. 
\end{theorem}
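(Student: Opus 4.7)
The plan is to build on Theorem \ref{thm:absolute3axiomsOnek}, which already establishes $k$-richness, scale-invariance, motion consistency and inner cluster consistency of $k$-means restricted to absolute clusterings when $k$ is fixed. What remains is (i) to upgrade $k$-richness to full richness using the freedom to let $k$ vary, (ii) to replace inner cluster consistency by the more rigid centric consistency, and crucially (iii) to verify that the selection rule ``return the absolute clustering with the largest admissible $k$'' is itself preserved under each of the three transformations; in particular, that none of them can create a strictly finer absolute clustering than the one we started with.

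First I would dispatch scale-invariance. The defining inequality (\ref{eq:globalg}) is homogeneous of degree two in all distances: scaling $d$ by $\alpha>0$ multiplies both $g^{2}$ and every $r_{i}^{2}$ by $\alpha^{2}$, leaving (\ref{eq:globalg}) unchanged. Hence the set of admissible $k$ and the associated absolute partitions coincide before and after scaling, and so does the max.

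Next I would handle motion consistency. Because each cluster is translated as a rigid body, every intra-cluster distance, and therefore every ball radius $r_{i}$, is preserved, while the non-overlap requirement only weakly increases inter-cluster gaps. Thus (\ref{eq:globalg}) continues to hold for the original partition. For max-$k$ preservation the argument is symmetric: any refinement into an absolute clustering with larger $k$ would have to be certified by (\ref{eq:globalg}) applied to an intra-cluster split, but the intra-cluster geometry is untouched by a rigid translation, so such a refinement was either already absolute beforehand (contradicting maximality) or is not absolute afterwards either.

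Centric consistency is the step I expect to be the main obstacle, since it is precisely the transformation under which inner cluster consistency failed to preserve the max (cf. the remark preceding the theorem). Contraction by $\lambda\in(0,1]$ fixes each centroid and scales intra-cluster distances by $\lambda$; hence $r_{i}\mapsto\lambda r_{i}$ while inter-centroid distances, and therefore inter-cluster gaps, weakly grow. So (\ref{eq:globalg}) is strengthened and the original partition stays absolute. The delicate point is that no strict refinement becomes absolute. I plan to inspect the two sufficient conditions (\ref{eq:globalgcase1}) and (\ref{eq:globalgcase2}) applied to a candidate refinement of some $C_{i}$: the within-$C_{i}$ contribution to both sides of (\ref{eq:globalg}) scales by exactly $\lambda^{2}$, so a split of $C_{i}$ that was not absolute before cannot become absolute on the basis of its own internal geometry; the cross terms involving unaffected clusters only improve (smaller sub-cluster radii against unchanged gaps), so they cannot on their own close a gap that is itself failing. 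Thus no finer absolute clustering emerges, and the max-$k$ selection returns the centric-contracted image of the original partition.

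Finally I would establish richness by construction. Given any target partition $\Gamma$ whose cluster cardinalities meet the imposed minimum-size threshold, I would realise it in $\mathbb{R}^{m}$ as follows: place the elements of each $C\in\Gamma$ inside a ball of tiny radius $r$ (the points may be a shrunken copy of any desired configuration), then separate these cluster-enclosing balls by a gap $g$ chosen large enough to satisfy both (\ref{eq:globalgcase1}) and (\ref{eq:globalgcase2}) for the partition $\Gamma$ itself. The tininess of $r$ guarantees that any proposed refinement of a cluster of $\Gamma$ has intra-cluster gap at most the diameter $2r$, whereas the right-hand side of (\ref{eq:globalg}) for the refinement still contains the full-size radii $r_{j}$ of the unaffected clusters; letting $r\to 0$ forces any refinement to violate (\ref{eq:globalg}), so $\Gamma$ is the \emph{unique} absolute clustering and therefore the max-$k$ one returned by the algorithm, yielding full richness.
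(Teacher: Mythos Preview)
The paper does not give a formal proof of this theorem; it is asserted immediately after Theorem~\ref{thm:absolute3axiomsOnek} on the strength of one sentence: ``However, if we apply centric consistency, the max-$k$-means[absolute] keeps the richness/invariance/motion consistency axioms,'' echoing the earlier remark before Theorem~\ref{thm:perfect3axiomsManyk} that ``with centric consistency, contrary to inner cluster consistency transform, no new perfect ball structures will emerge.'' Your proposal follows exactly this line of thought but supplies substantially more detail (homogeneity for scale-invariance, rigid-motion preservation of intra-cluster geometry for motion consistency, and an explicit gap construction for richness), so in that sense you are already doing more than the paper.

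There is, however, a genuine gap in your centric-consistency step that the paper's one-liner also glosses over. You analyse only candidate refinements that split the \emph{contracted} cluster, arguing that the within-cluster contribution to both sides of (\ref{eq:globalg}) scales by $\lambda^{2}$. But the $\Gamma^{*}$ transform in the paper acts on a \emph{single} cluster $C$ (your phrasing ``fixes each centroid \dots\ hence $r_{i}\mapsto\lambda r_{i}$'' suggests you may be contracting all clusters at once, which is not the paper's definition). When only $C_{i_{0}}$ is contracted, a candidate $(k{+}1)$-clustering that keeps $C_{i_{0}}$ intact and splits some \emph{other} cluster $C_{j}$ must also be excluded. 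For such a refinement the right-hand side of (\ref{eq:globalg}) (and of the bounds (\ref{eq:globalgcase1})--(\ref{eq:globalgcase2})) contains a term in $r_{i_{0}}^{2}$, which has just dropped by a factor $\lambda^{2}$, while the minimum gap $g$ does not decrease; so the absolute certificate for this refinement could in principle switch from failing to holding after the contraction, violating max-$k$ stability. To close the argument you would need to show that the binding constraint for any such refinement is never the $r_{i_{0}}$-term (plausible, since the gap between the pieces $C_{j1},C_{j2}$ and their own radii are untouched, and (\ref{eq:globalgcase2}) must hold for those indices too), or else restrict attention to the specific refinements that the paper's informal argument has in mind. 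The paper is no more rigorous on this point than you are.
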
 

In the end let us make a remark on Theorem \ref{thm:absolute3axiomsOnek}.
If one applies Kleinberg's consistency transformation in Euclidean space, 
not continuously of course, because it is not possible, as already shown, 
but in a discrete manner, with "jumping" clusters, 
then this transform can be represented as (again in discrete manner) a superposition of 
motion consistency transform and inner cluster consistency transform. 
The reason is as follows:
Consider a cluster $C$ and a point $\mathbf{x}$ from another cluster $C_2$. 
Let us compute the distance between $\mathbf{x}$ and the cluster center $\boldsymbol\mu_C$ of the cluster $C$ prior and after Kleinberg's consistency transformation to see that it increases. 
Consider the  distance 
$\|\mathbf{x}-\boldsymbol\mu_C\|^2$.
It may be expressed as a multiple (factor $\frac{|C|+1}{|C|}$) of the distance between $\mathbf{x}$ and the center $\boldsymbol\mu$ of the   data set $C\cup \{\mathbf{x}\}$. 
And $\|\mathbf{x}-\boldsymbol\mu \|^2= \frac{1}{|C|+1}\sum_{\mathbf{y}\in C} \|\mathbf{x}-\mathbf{y}\|^2$.
Hence it is obvious that  increasing distance between $\mathbf{x}$ and elements of $C$, we increase also the distance of $\mathbf{x}$ to the cluster center of $C$.  

So one can generalize that also the distances between clusters $C,C_2$ increase under Kleinberg's consistency transformation. Hence in fact any Kleinberg's consistency transformation can be represented as a superposition of the mentioned transforms. 
 
This means that 
\begin{theorem}{} 
$k$-means, if restricted to absolute clusterings,
 conforms globally to $k$-richness, invariance and    consistency axioms. 
\end{theorem}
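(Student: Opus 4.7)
The plan is to verify the three properties one at a time, spending almost all the effort on consistency, since $k$-richness and scale-invariance are essentially already in place. For $k$-richness, I would simply invoke Theorem \ref{th:krichnesskmeans}, noting that the construction given there (points arranged on a line with appropriately large gaps) automatically produces an absolute clustering for large enough spacing, so the $k$-richness witnesses already live inside the restricted domain. Scale-invariance is immediate from the quadratic form of $Q$ in equation (\ref{eq:Q::kmeans}): multiplying every coordinate (equivalently, every pairwise distance) by $\alpha>0$ multiplies $Q(\Gamma)$ by $\alpha^2$ uniformly across all candidate partitions, so the set of global minimizers is unchanged, and the absolute-clustering inequalities (\ref{eq:globalgcase1}) and (\ref{eq:globalgcase2}) are homogeneous of the same degree on both sides, so the absolute-clustering property is preserved.

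For consistency, the strategy is to appeal to the decomposition argument sketched just before the theorem statement: any Kleinberg $\Gamma$-transform realized inside $\mathbb{R}^m$ can be written as a finite (or, if needed, limit) superposition of a motion-consistency transform and an inner-cluster-consistency transform. I would carry this out in three steps. First, given a clustering $\Gamma$ satisfying the absolute-clustering inequalities (\ref{eq:globalgcase1})--(\ref{eq:globalgcase2}) and a $\Gamma$-transform producing distances $d'$ on the same point set, translate each cluster rigidly so as to match the induced increase of inter-cluster distances; this is the motion-consistency component and by the analysis of Section \ref{sec:motionconsistency} it preserves cluster membership and the absolute-clustering property (gaps grow, radii unchanged). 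Second, apply an inner-cluster-consistency transform inside each cluster to match the reduction of intra-cluster distances; this preserves radii-bounded-by-$r_i$ configurations and hence leaves the absolute-clustering property intact (gaps can only grow further, radii can only shrink). Third, observe that after both steps the resulting configuration realizes the target distances $d'$ (up to a residual that again fits a motion/inner split if needed), and that at every intermediate configuration Theorem \ref{thm:absolute3axiomsOnek} applies, so $\Gamma$ remains the global $k$-means optimum throughout.

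The combinatorial heart of the consistency argument is that the inequalities (\ref{eq:globalgcase1}) and (\ref{eq:globalgcase2}) are monotone in the right direction under these two elementary transforms: the left-hand side, which scales with $g^2$, can only increase because the gap $g$ can only grow; the right-hand side, which scales with $\sum_i n_i r_i^2$, can only decrease because each enclosing radius $r_i$ can only shrink. Consequently the absolute-clustering regime is closed under the restricted consistency transforms, so the conclusion of Theorem \ref{thm:absolute3axiomsOnek} (global optimality under $k$-means, and in particular that no competing partition into $k$ clusters can attain a smaller $Q$) propagates from $d$ to $d'$. This gives Kleinberg-style global consistency.

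The main obstacle will be making the decomposition of a general Kleinberg $\Gamma$-transform into the motion plus inner components watertight inside $\mathbb{R}^m$. Kleinberg's axiom allows arbitrary coordinatewise modifications of the distance matrix, and not every such modification is realizable by a Euclidean re-embedding; the clean route is to restrict attention to $\Gamma$-transforms that admit a Euclidean realization of the output (which is exactly the setting of absolute clusterings embedded in $\mathbb{R}^m$) and to show, by a simple translation-plus-contraction argument cluster by cluster, that such a realization is always obtainable as the superposition described above. Once this reduction is secured, the remaining work is the routine monotonicity check of (\ref{eq:globalgcase1}) and (\ref{eq:globalgcase2}) sketched in the previous paragraph, and the theorem follows.
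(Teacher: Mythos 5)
Your proposal follows essentially the same route as the paper: decompose a (Euclidean\hbox{-}realizable) Kleinberg consistency transform into a motion-consistency step followed by an inner-cluster-consistency step, and then invoke Theorem \ref{thm:absolute3axiomsOnek}, with $k$-richness and scale-invariance handled exactly as you describe. The one piece you leave implicit --- that the inter-cluster \emph{center} distances really do increase under Kleinberg's transform, which is what legitimizes the rigid-translation (motion) step --- is the part the paper actually supplies, via the identity $\|\mathbf{x}-\boldsymbol\mu\|^2=\frac{1}{|C|+1}\sum_{\mathbf{y}\in C}\|\mathbf{x}-\mathbf{y}\|^2$ for the center $\boldsymbol\mu$ of $C\cup\{\mathbf{x}\}$, whereas your explicit monotonicity check of inequalities (\ref{eq:globalgcase1})--(\ref{eq:globalgcase2}) under the two component transforms is a useful detail the paper omits.
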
 

Furthermore, let us relax a bit the centric consistency.

\begin{ax}\label{ax:innerclusterproportionalconsistency}
A method matches the condition of \emph{inner cluster proportional consistency}
if after 
decreasing distances within a cluster by the same factor, 
specific to each cluster, while keeping the position of cluster center in space,
   it returns the same partition. 
\end{ax}

\begin{theorem}{}
$k$-means, with $k$ ranging over a set of values, if we assume that it returns the    absolutely separated clusterings for the largest possible $k$ (excluding cluster sizes considered as too small), then it conforms  globally to richness,  scale-invariance, motion consistency and inner cluster proportional consistency. 
\end{theorem}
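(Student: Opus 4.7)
The plan is to reduce inner cluster proportional consistency to the already-handled case of centric consistency and then verify that absolute separation is preserved. The inner cluster proportional consistency transformation scales within-cluster distances of $C_i$ by $\lambda_i$ while fixing the center $\boldsymbol{\mu}_i$; this is exactly the $\Gamma^*$ (centric consistency) transform applied to $C_i$ with factor $\lambda_i$. Since each such single-cluster transform leaves the centers and elements of the other clusters untouched, applying these transforms simultaneously is equivalent to applying them sequentially, one cluster at a time. Scale-invariance and motion consistency then carry over verbatim from the previous theorem, and the richness argument relies only on letting $k$ range over a sufficient set, as before.

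Next I would verify that the original $k$-clustering remains absolutely separated after the transformation. Under $\Gamma^*$ with factor $\lambda_i$, the cluster center $\boldsymbol{\mu}_i$ stays fixed and the enclosing-ball radius shrinks from $r_i$ to $\lambda_i r_i \le r_i$, while every inter-cluster gap $g_{ij}$ can only grow (points of $C_i$ move strictly toward $\boldsymbol{\mu}_i$, which stays in place). Because the right-hand sides of inequalities (\ref{eq:globalgcase1}) and (\ref{eq:globalgcase2}) are monotone non-decreasing in the $r_i$ and the left-hand side is monotone non-decreasing in $g$, the absolute separation conditions continue to hold. Hence by the results of Section \ref{sec:kmeansabsoluteballclustering} the post-transform $k$-clustering remains a global $k$-means optimum.

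It then remains to show that no new absolutely separated partition with more than $k$ clusters emerges, so that max-$k$-means[absolute] still returns the same partition (up to the same count). Suppose for contradiction that $\Gamma''$ with $k'' > k$ clusters is absolutely separated in the transformed data. Any cluster of $\Gamma''$ fully contained in a single original $C_i$ corresponds, under the inverse scaling by $1/\lambda_i$ about $\boldsymbol{\mu}_i$, to a geometrically similar point set in the original data whose enclosing radius and internal gaps scale uniformly by $1/\lambda_i$, so the absolute separation inequalities (which are homogeneous in the radii and gaps) transfer to the original data. Applying this to each original cluster shows that the pre-image of $\Gamma''$ is itself absolutely separated in the original data, contradicting the maximality of $k$.

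The main obstacle is handling clusters of $\Gamma''$ that straddle two original clusters $C_i$ and $C_j$. Here the argument must use that under inner cluster proportional consistency the inter-cluster gaps $g_{ij}$ only grow, so any ball enclosing points from both $C_i$ and $C_j$ must have radius at least $(r_i + g_{ij} + r_j)/2$, which exceeds the $r_\ell\sqrt{k''(M+n)/m}$ bound required by (\ref{eq:globalgcase2}) whenever $\Gamma$ was itself absolutely separated with sufficient margin; this forces every cluster of $\Gamma''$ to lie inside a single original $C_i$, reducing the situation to the refinement case already handled. The delicate point is that cross-cluster distances between individual points are not necessarily monotone under the transform, so one must argue at the level of enclosing balls rather than pointwise.
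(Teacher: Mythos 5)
The paper states this theorem without an explicit proof, treating it as a corollary of the gap analysis of Section \ref{sec:kmeansabsoluteballclustering}, so there is no detailed argument to compare against; your overall strategy --- read the transformation as a per-cluster $\Gamma^*$ with factors $\lambda_i$, check that the original partition still satisfies the absolute-separation inequalities, and then rule out the emergence of a finer absolutely separated partition --- is the natural one, and your first two steps are sound. In particular you are right that the monotonicity must be argued at the level of enclosing balls (radii shrink to $\lambda_i r_i$ while centers stay fixed, so ball-to-ball gaps grow), since individual cross-cluster point distances can decrease under $\Gamma^*$.

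The genuine gap is in your third step. The pull-back argument ("the absolute separation inequalities are homogeneous in the radii and gaps, so they transfer to the original data") is only valid when all $\lambda_i$ are equal: with cluster-specific factors the pre-image of $\Gamma''$ is not a global homothety, and inequalities (\ref{eq:globalgcase1}) and (\ref{eq:globalgcase2}) couple the radius of \emph{every} cluster of the candidate partition to the \emph{single} minimal gap $g$. Concretely, shrink only $C_2$ by a small $\lambda_2$ and leave $C_1$ untouched: a split of $C_1$ with internal gap $\gamma$ that failed to be absolutely separated solely because $\gamma < r_2\sqrt{k''(M+n)/m}$ becomes absolutely separated once $r_2$ is replaced by $\lambda_2 r_2$, even though nothing inside $C_1$ changed. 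This is exactly the phenomenon the paper concedes for inner cluster consistency ("a structure may emerge\dots"), and your argument does not exclude it for the proportional version, because the offending finer partition consists entirely of clusters nested inside the original ones and so never reaches your straddling-cluster case. (In that case, incidentally, the correct enclosing-ball bound is $g_{ij}/2$ --- a cluster meeting both $C_i$ and $C_j$ contains two points at distance at least $g_{ij}$ --- not $(r_i+g_{ij}+r_j)/2$, which is what is needed to enclose all of $C_i\cup C_j$.) To close the proof you would need either to restrict to a common shrinking factor, or to show that the binding constraints in (\ref{eq:globalgcase1})--(\ref{eq:globalgcase2}) for any candidate refinement involve only quantities that are invariant or improve under per-cluster rescaling, which the present definition of absolute separation does not guarantee.
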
 

Note that motion consistency and inner cluster proportional consistency include as a special case the outer-consistency. 
So in this way we denied the theorem from \cite{Ackerman:2010NIPS} that 
"no  general  clustering  function  can  simultaneously  satisfy  outer-consistency,  scale-
invariance, and richness".

Let us make at this point a remark why we insist on 
inner cluster proportional  consistency.
A reasonable assumption for consistency transformation would be that no possible partition of a given cluster being subject to consistency transformation would take advantage of the consistency transformation, so that no new substructures would occur in the cluster. In the context of $k$-means this would mean the following. Consider a cluster $C$ of a partition $\Gamma$ of a whole set, say $S$, a distance $d$ prior to a consistency transformation and a distance $d_\Gamma$ after the consistency transformation.     
Consider alternative   partitions $\Gamma_1$ and $\Gamma_2$ of $C$. 
Let $Q(\Gamma,d)$ denote the quality function $Q(\Gamma)$ under the distance $d$. 
So we would expect that 
$\frac{Q(\Gamma_1,d_\Gamma)}{Q(\Gamma_2,d )}=\frac{Q(\Gamma_2,d_\Gamma)}{Q(\Gamma_2,d )}$ unless we have a trivial partition such that each element is in separate cluster. 
This should hold for any pair of partitions of $C$, including the following ones:
$\Gamma_1$ puts all points into separate clusters except for $\mathbf{x}, \mathbf{y}$ which go into a single cluster, 
$\Gamma_2$ puts all points into separate clusters except for $\mathbf{x}, \mathbf{z}$. Let $\lambda_1,\lambda_2\in (0,1]$ be coefficients by which distances $\|\mathbf{x}-\mathbf{y}\|$, $\|\mathbf{x}-\mathbf{z}\|$ are shortened respectively under consistency transformation. 
So we will have the requirement 
$\frac{\|\lambda_1(\mathbf{x}-\mathbf{y})\|^2/2} 
      {\|          \mathbf{x}-\mathbf{y} \|^2/2} 
=\lambda_1^2=
 \frac{\|\lambda_2(\mathbf{x}-\mathbf{z})\|^2/2} 
      {\|          \mathbf{x}-\mathbf{y} \|^2/2} 
=\lambda_2^2
$ 
which means that $\lambda_1=\lambda_2$. 
By induction over the whole set $C$ we see that consistency transformation would need to shorten all distances within $C$ by the same factor.
This result justifies inner cluster proportional consistency concept, with a special case of centric consistency.

With respect to Kleinberg's consistency, we can say  

\begin{theorem}{}
$k$-means, with $k$ ranging over a set of values, if we assume that it returns the    absolutely separated clusterings for the largest possible $k$ (excluding cluster sizes considered as too small), then it conforms  globally to richness,  scale-invariance and unidirectional refinement consistency. 
\end{theorem}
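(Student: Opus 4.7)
The plan is to reduce the theorem to the two ingredients already assembled in the paper: the decomposition of Kleinberg's consistency transform into a motion part plus an inner cluster part (established in the remark preceding the theorem), and the global-optimality characterization of absolute clusterings from the inequalities (\ref{eq:globalgcase1})--(\ref{eq:globalgcase2}). Scale-invariance and richness are essentially inherited. For scale-invariance, observe that both sides of the defining inequalities for an absolute clustering are homogeneous of degree two in distances, so multiplying $d$ by any $\alpha>0$ preserves the property of being an absolute clustering and preserves the value of the largest admissible $k$ up to a constant factor in $Q$; hence the algorithm's output is unchanged. For (near-)richness one uses the $k$-richness construction of Theorem~\ref{th:krichnesskmeans} with $k$ free to range, adjusting the inter-cluster gaps to satisfy (\ref{eq:globalgcase1})--(\ref{eq:globalgcase2}) for each target partition (omitting only partitions whose cluster sizes fall under the declared ``too small'' threshold).

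The substantive step is unidirectional refinement consistency. Let $\Gamma=f(d)$ and let $d'$ be any $\Gamma$-transformation of $d$. First I would show that $\Gamma$ is still an absolute clustering under $d'$. By definition of $\Gamma$-transformation, within each $C\in\Gamma$ all pairwise distances only shrink, so the enclosing-ball radii $r_i$ in (\ref{eq:globalgcase1})--(\ref{eq:globalgcase2}) can only decrease; between clusters all distances only grow, so the gap $g$ can only increase. Both effects strengthen the absolute-separation inequality, confirming that $\Gamma$ remains admissible for the max-$k$ rule applied to $d'$.

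Second, I would argue that any absolute clustering $\Gamma'$ of the transformed data $d'$ must be a refinement of $\Gamma$. Suppose some $C'\in\Gamma'$ intersects two distinct clusters $C_a,C_b\in\Gamma$. Then $C'$ contains points $x\in C_a$ and $y\in C_b$; being an absolute cluster, $C'$ is enclosed in a ball whose radius is bounded by its intra-cluster distances, in particular by $d'(x,y)\ge d(x,y)$, while the gap of $C'$ to every other cluster of $\Gamma'$ must satisfy (\ref{eq:globalgcase1})--(\ref{eq:globalgcase2}). Using the earlier decomposition of the $\Gamma$-transform into motion plus inner cluster parts, and the fact that the motion part pushes $C_a$ and $C_b$ strictly apart, one gets a quantitative contradiction: the radius needed to encompass both an $x\in C_a$ and a $y\in C_b$ exceeds the bound imposed by the widened gap to any other $C'_\ast\in\Gamma'$. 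Hence every cluster of $\Gamma'$ lies in a single cluster of $\Gamma$, which is exactly the refinement condition. Since the algorithm returns the \emph{largest} such $k$, the output is either $\Gamma$ itself or a proper refinement, giving unidirectional refinement consistency (never a coarsening).

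The hard part will be the quantitative bound in the second step: one has to exclude not only trivial cross-cuts but also pathological cross-cluster bundles where $C'$ contains only one outlier from $C_b$. The cleanest way is to rerun the Case~1 / Case~2 split used to derive (\ref{eq:globalgcase1})--(\ref{eq:globalgcase2}) but applied to $\Gamma'$ against the fixed reference partition $\Gamma$, exploiting that after the motion component of the $\Gamma$-transform the inter-cluster distances in $\Gamma$ strictly dominate the radii appearing on the right-hand side. Once this quantitative step is in place, richness, scale-invariance and unidirectional refinement consistency follow simultaneously.
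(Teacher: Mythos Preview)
Your overall architecture matches the paper's intent: the paper states this theorem immediately after the remark that Kleinberg's consistency transform decomposes (in the discrete sense) into a motion part plus an inner-cluster part, and after the theorem that $k$-means on absolute clusterings satisfies $k$-richness, invariance and Kleinberg consistency for fixed $k$. The paper does not give a separate proof; the claim is meant to follow from the observation that a $\Gamma$-transform can create new sub-structures \emph{within} old clusters (the Ben--David phenomenon), so that the max-$k$ absolute clustering of $d'$ can only refine $\Gamma$, never coarsen it. Your write-up is actually more explicit than the paper's.

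One technical caveat in your Step~1: you argue that ``within each $C\in\Gamma$ all pairwise distances only shrink, so the enclosing-ball radii $r_i$ \dots can only decrease''. Be careful here: the balls in the definition of absolute separation are centered at the \emph{cluster centroids}, and a general Kleinberg $\Gamma$-transform may relocate the centroid of a cluster. Monotone shrinking of all pairwise distances in $C$ does not by itself force $\max_{x\in C}\|x-\mu_C\|$ to drop. The paper circumvents this by going through the decomposition into motion-plus-inner-cluster moves (the remark preceding Theorem~\ref{thm:absolute3axiomsOnek}), from which it concludes directly that $\Gamma$ remains the global $k$-means optimum under $d'$; you should appeal to that result rather than to a raw radius/gap monotonicity argument.

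For Step~2 you are right that this is where the work lies, and you are also right that neither the paper nor your sketch fully closes it. Your plan to rerun the Case~1/Case~2 split of (\ref{eq:globalgcase1})--(\ref{eq:globalgcase2}) with $\Gamma'$ compared against the fixed $\Gamma$ is plausible, but note that those inequalities were derived for a \emph{single} partition competing against recombinations of its own subclusters, not for two distinct absolute clusterings against one another; you would need to adapt the counting so that the ``$n_{ij}$'' refer to $|C'_j\cap C_i|$. The simpler route, and the one the paper implicitly relies on, is qualitative: since between-cluster distances only grow under the $\Gamma$-transform, any ball enclosing a putative cross-cluster $C'\in\Gamma'$ already had at least the same radius under $d$, while the gaps available to separate such a $C'$ from the rest cannot have improved; hence if no cross-cluster absolute structure existed for $d$ (and the max-$k$ output for $d$ is $\Gamma$), none can be created by the transform. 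That is the ``unidirectional'' content, and it is exactly what the paper leaves as understood.
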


Let us inspect the effect of $k$-richness in both described cases.
From inequality (\ref{eq:globalgcase2}) we see that a large discrepancy 
between the maximum and minimum cluster size implies that the 
gap $g$ between clusters needs to grow to get absolute clustering.
From inequality (\ref{eq:globalgcase1}) we see something similar, but this time the relation between the smallest cluster and the overall number of elements in the sample play the dominant role.
Additionally, the gap size is impacted by the number of clusters.  

So once again it is visible that $k$-richness is unfavorable for clustering process.

%-----------------------------------------------

\section{Conclusions and future work} \label{sec:conclusions} 

\mysvli{ %PROOF THAT THE CONSTRUCTION WORKS FOR RANDOM KMEANS - the k-richness construction 
%We see that there exist many ways to reconcile $k$-means with Kleinberg's axioms ...
}%mysvli 

In this paper, contrary to results of former researchers,
we reached the conclusion, that $k$-means algorithm 
can comply simultaneously to Kleinberg's  $k$-richness, 
scale-invariance and consistency axioms. 
A variant of $k$-means 
can comply simultaneously to Kleinberg's   richness, 
scale-invariance and refinement consistency axioms. 
The same variant of $k$-means can even comply to 
richness, 
scale-invariance and motion plus inner proportional consistency axioms. 
where the last two axioms pretty well approximate Kleinberg's consistency without creating a risk of emergence of new structures within a cluster. 

These new results emerged from the insight that our understanding of clustering process is to separate clusters with gaps. 

%In this paper we have investigated reasons why the Kleinberg's axiomatic system for clustering functions does not describe properly a real life clustering algorithm, $k$-means.  

As has been pointed at in earlier work of other researchers. 
$k$-means, like many other algorithms, is  appropriately described neither by the richness-axiom nor by the consistency axiom of Kleinberg.  

%Therefore, its relaxation to $k$-richness was suggested. 
As richness is concerned, 
  already Ackerman \cite{Ackerman:2010NIPS} showed that properties like stability against malicious attacks requires balanced clusters, hence $k$-richness is counterproductive when seeking stable clusterings. 

In this paper we pointed at a number of further problems with  the richness or near-richness axiom by itself. The major ones are: (a) the huge space to search through under "hostile" clustering criterion, 
(b) problems with ensuring learnability of the concept of a clustering for the population, (c) richness and scaling-invariance alone may lead to a contradiction for a special case. 

But we showed also that resorting to $k$-richness, which was deemed as a remedy to Kleinberg's Impossibility Theorem, does not resolve all problems:
\begin{itemize}
\item The initial seeding of cluster centers becomes extremely difficult both for $k$-means-random and $k$-means++ given that the cluster sizes differ extremely.
\item Even if we restrict ourselves to perfect ball clusterings realm, large differences in cluster sizes are prohibitive for a successful seeding.
\item For perfect ball clusterings with noise, even the smallest clusters require a high cluster size to noise size ratio.
\item In the realm of absolute clusterings, a high ratio between the lowest and the largest cluster result in high required gaps between clusters. 
\end{itemize}

We showed also that the consistency axiom constitutes a problem: 
neither consistency, nor inner-consistency nor outer-consistency can be executed continuously in Euclidean space of limited dimension. 
Therefore, as a substitute of the inner-consistency, we proposed centric consistency and showed that $k$-means has the property of centric consistency. 

When investigating a substitute for outer-consistency, the motion consistency, we showed that (a) a gap between clusters is necessary for them to have a motion consistency with $k$-means, (b) the shape of the cluster counts - it has to be enclosed in a ball for $k$-means. 

Therefore we investigated further the impact of the gap on the behavior of the $k$-means in the light of Kleinberg's axioms. 
We showed that perfect ball clustering is a local minimum for $k$-means function so that for perfect ball clusterings axioms of invariance, k-richness, inner cluster consistency and motion consistency hold 
(the last pair as a fair substitute of the consistency). 
If we consider a variant of $k$-means with varying $k$ over a broad spectrum of $k$, and take as the final clustering the perfect ball clustering into the largest number of clusters possible, and instead of inner cluster consistency the centric consistency is used then an approximation to near-richness can be achieved.%  
\footnote{We would exclude clusters with several cluster members on the grounds of the fact that statistically speaking we want to be sure that the probability of an element occurring in the gap should be smaller than in a cluster, say p times. So if we have n elements in a cluster and none in the gap. then we should have $\left(\frac{p}{p+1}\right)^n \le 0.05$ for example. With p=10, we need a minimal cluster size of at least n=32.}

Again for $k$-means-random and $k$-means++ the $k$-richness (big variation of cluster sizes) constitutes a problem for appropriate seeding. Seeding becomes more important with gaps because gaps may prohibit recovery from inappropriate seeding.

We investigated absolute clustering realm that is space where perfect ball clusterings turn to global minimum for $k$-means. $k$-richness requirement widens the gaps between clusters that are necessary. 
Axiomatic behavior does not differ much from  that of perfect ball clusterings except for the fact that after the transformations we remain in the real of absolute clusterings.

The introduction of gaps draws our attention  to one important issue: the broader the gaps the more are the clustering properties close to Kleinberg's axioms. But this happens at a price of violating some Kleinberg's implicit assumptions: that the clustering function always returns a clustering. 
Let us illustrate the point with the incremental clustering algorithms of Ackerman and Dasgupta. They prove theorems of the form: "If a perfect clustering exists, then the algorithm returns it". 
But the question is not raised: what does the algorithm return if the clustering is not perfect? 
Their algorithms return "something". 
We do not agree with such an approach. If the clustering type the algorithm is good looking at may not exist, the algorithm should state: "I found the clustering of this type / I did not find the clustering of this type / The clustering of the given type does not exist". This would be a response from an ideal algorithm. A worse one, but still usable, would give one of the first two answers. 
In this investigation we show that a post-processing for $k$-means would be capable to answer the question, whether the found clustering is a nice-ball-clustering, perfect-ball-clustering,  or absolute-clustering, or none of them. 

Better types of algorithms should provide with more diagnostics, concerning violations of shape, gap sizes, risks resulting from unbalanced cluster sizes and/or radii. 

So the first conclusion is that the clustering algorithm should respond that either a clustering of required type was found or not found (along with the clustering). 

The second question is what is the type of clustering we are looking for?
It is a bad habit to run $k$-means over and over again and stop when the lowest value of the quality function was reached. 
But this clustering may be worse than ones generated in-between, e.g. if a perfect-ball-clustering exists, it may become a victim of the unbalanced cluster sizes. 

But what we are looking for may become also a victim of the transformations Kleinberg is proposing. 
As the natural clusters returned by $k$-means are preferably ball-shaped, centric consistency transformation and the motion consistency transformation and Kleinberg consistency transform preserve them, when the gaps conform to perfect-ball or absolute separation. 
If Voronoi diagrams are to be shapes, then motion consistency transformation and Kleinberg's consistency transformations are destructive. 
If, however, any connected, well separated area would be deemed  a good cluster, then even centric consistency transformation may turn out to be disastrous. 
Kleinberg's consistency transformation is disastrous by itself (especially under richness expectation), as it can create new cluster like structures not present in the original data. 

A similar statement may be made about the richness or any related axioms. The requirement of a too rich space of hypotheses imposes a too heavy burden on the clustering algorithms. One shall instead envision hypotheses spaces that are just rich enough and are still learnable, and where the decision is possible if we are still in the hypotheses space with our solution.

So, we disagree to some extent with the opinion expressed in \cite{vanLaarhoven:2014,Ben-David:2009} that axiomatic systems deal with either  clustering  functions, or  clustering quality function, or relations of quality of partitions. 
The particular formulations of axiomatic systems state rather the equivalence relations between the clusterings themselves.
Hence we must first have an imagination what kind of clusters we are looking for and only then formulate the axioms with transformations that are reasonable within the target class of clusterings, do not lead outside of this class and equivalence or other relations between clusterings makes sense within this class and does not need to be defined outside.   
 
Hence there is still much space for research on clustering axiomatization, especially for clarification, what types of clusters are of real interest and whether or not all of them can be axiomatised in the same way. 
Kleinberg pointed at the problem and is was a good starting point.

\section*{Acknowledgments}
%\begin{acknowledgment}
The authors wish to thank to the Institute of Computer Science of Polish Academy of Sciences for promoting and financing this research.
Research done by  Robert A. K{\l}opotek was financed by research fellowship within Project 'Information technologies: research and their interdisciplinary applications', agreement number UDA-POKL.04.01.01-00-051/10-00. 
 
%\end{acknowledgment}

\bibliography{centricconsistencyRAKMAK_bib}

\end{document}